\title[Online A-Optimal Design and Active Linear Regression]{Online A-Optimal Design and Active Linear Regression}
\pgfplotsset{
        table/search path={contextual/},
    }
\newtheorem{theorem}{Theorem}
\crefname{theorem}{theorem}{Theorems}
\Crefname{Theorem}{Theorem}{Theorems}
\newtheorem*{lemma_nonumber*}{Lemma}
\newtheorem{lemma}{Lemma}
\crefname{lemma}{lemma}{lemmas}
\Crefname{Lemma}{Lemma}{Lemmas}
\newtheorem{corollary}{Corollary}
\crefname{corollary}{corollary}{corollaries}
\Crefname{Corollary}{Corollary}{Corollaries}
\newtheorem{proposition}{Proposition}
\crefname{proposition}{proposition}{propositions}
\Crefname{Proposition}{Proposition}{Propositions}
\newtheorem{definition}{Definition}
\crefname{definition}{definition}{definitions}
\Crefname{Definition}{Definition}{Definitions}
\crefname{remark}{remark}{remarks}
\Crefname{Remark}{Remark}{Remarks}
\crefname{example}{example}{examples}
\Crefname{Example}{Example}{Examples}
\crefname{figure}{figure}{figures}
\Crefname{Figure}{Figure}{Figures}
\newtheorem{assumption}{\textbf{A}\hspace{-3pt}}
\Crefname{assumptionB}{\textbf{B}\hspace{-3pt}}{\textbf{B}\hspace{-3pt}}
\crefname{assumptionB}{\textbf{B}}{\textbf{B}}
\Crefname{assumptionC}{\textbf{C}\hspace{-3pt}}{\textbf{C}\hspace{-3pt}}
\crefname{assumptionC}{\textbf{C}}{\textbf{C}}
\Crefname{assumptionH}{\textbf{H}\hspace{-3pt}}{\textbf{H}\hspace{-3pt}}
\crefname{assumptionH}{\textbf{H}}{\textbf{H}}
\Crefname{assumptionT}{\textbf{T}\hspace{-3pt}}{\textbf{T}\hspace{-3pt}}
\crefname{assumptionT}{\textbf{T}}{\textbf{T}}
\Crefname{assumptionT}{\textbf{T}\hspace{-3pt}}{\textbf{T}\hspace{-3pt}}
\crefname{assumptionT}{\textbf{T}}{\textbf{T}}
\Crefname{assumptionL}{\textbf{L}\hspace{-3pt}}{\textbf{L}\hspace{-3pt}}
\crefname{assumptionL}{\textbf{L}}{\textbf{L}}
\Crefname{assumptionQ}{\textbf{Q}\hspace{-3pt}}{\textbf{Q}\hspace{-3pt}}
\crefname{assumptionQ}{\textbf{Q}}{\textbf{Q}}
\Crefname{assumptionAR}{\textbf{AR}\hspace{-3pt}}{\textbf{AR}\hspace{-3pt}}
\crefname{assumptionAR}{\textbf{AR}}{\textbf{AR}}
\newcommand{\bE}{\mathbb{E}}
\newcommand{\bN}{\mathbb{N}}
\newcommand{\bP}{\mathbb{P}}
\newcommand{\bR}{\mathbb{R}}
\newcommand{\bX}{\mathbb{X}}
\newcommand{\sL}{\mathscr{L}}
\newcommand{\eps}{\varepsilon}
\newcommand{\sig}{\sigma^2}
\newcommand{\Y}[1]{Y^{(#1)}}
\newcommand{\ee}[1]{\eps^{(#1)}}
\newcommand{\argmax}{\operatorname*{arg\,max}}
\newcommand{\argmin}{\operatorname*{arg\,min}}
\newcommand{\dist}{\mbox{dist}}
\newcommand{\Tr}{\mbox{Tr}}
\newcommand{\Sp}{\mbox{Sp}}
\newcommand{\Gram}{\mbox{Gram}}
\newcommand{\et}{\quad\mbox{and}\quad}
\newcommand{\bigo}{\mathcal{O}}
\newcommand{\bigot}{\widetilde{\mathcal{O}}}
\newcommand{\pa}[1]{\left(#1\right)}
\newcommand{\II}[1]{\mathbb{I}{\left\{#1\right\}}}
\newcommand{\dpart}[2]{\dfrac{\partial #1}{\partial #2}}
\DeclarePairedDelimiter\abs{\lvert}{\rvert}
\newcommand{\ceil}[1]{\left\lceil #1 \right\rceil}
\newcommand{\iid}{i.i.d}
\newcommand{\la}{\langle}
\newcommand{\ra}{\rangle}
\newcommand{\norm}[1]{\left\Vert#1\right\Vert}
\newcommand{\normi}[1]{\norm{#1}_{\infty}}
\newcommand{\normd}[1]{\norm{#1}_2}
\newcommand{\normpsi}[2]{\norm{#1}_{\psi_{#2}}}
\renewcommand{\t}{^{\top}}
\newcommand{\pinv}{^{-1}}
\newcommand{\st}{^{\star}}
\newcommand{\bst}{\beta\st}
\newcommand{\pst}{p\st}
\newcommand{\bht}{\hat{\beta}}
\newcommand{\Dk}{\Delta^K}
\newcommand{\Dd}{\Delta^d}
\newcommand{\op}{\Omega(p)}
\newcommand{\ops}{\Omega(p\st)}
\newcommand{\hop}{\hat{\Omega}(p)}
\newcommand{\hsig}{\hat{\sigma}}
\newcommand{\osig}{\overline{\sigma}}
\newcommand{\hp}{\hat{p}}
\newcommand{\mud}{\mu^{(2)}}
\newcommand{\hmud}{\hat{\mu}^{(2)}}
\newcommand{\hmu}{\hat{\mu}}
\newcommand{\inv}{^{-1}}
\newcommand{\XX}{X_k X_k^{\top}}
\newcommand{\skk}{\sum_{k=1}^K}
\newcommand{\skd}{\sum_{k=1}^d}
\newcommand{\sinn}{\sum_{i=1}^n}
\newcommand{\lm}{\lambda_{\min}}
\DeclareMathOperator{\Var}{Var}
\DeclareMathOperator{\Cof}{Cof}
\DeclareMathOperator{\Com}{Com}
\newcommand{\fraca}[2]{\left. #1 / #2 \right.}
\newcommand{\transpose}{^{\top}}
\newcommand{\defEns}[1]{\left\lbrace #1 \right\rbrace }
\newcommand{\ooint}[1]{(#1)}
\newcommand{\parenthese}[1]{\left(#1\right)}
\def\eqsp{\;}
\newcommand{\ie}{\textit{i.e.}}
\newcommand{\eg}{\textit{e.g.}}
\begin{document}

\maketitle

\begin{abstract}%
We consider in this paper the problem of optimal experiment design where a decision maker can choose which points to sample to obtain an estimate $\hat{\beta}$ of the hidden parameter $\beta^{\star}$ of an underlying linear model.
The key challenge of this work lies in the heteroscedasticity assumption that we make, meaning that each covariate has a different and unknown variance.
The goal of the decision maker is then to figure out on the fly the optimal way to allocate the total budget of $T$ samples between covariates, as sampling several times a specific one  will reduce the variance of the estimated model around it (but at the cost of a possible higher variance elsewhere).
By trying to minimize the $\ell^2$-loss $\mathbb{E} [\lVert\hat{\beta}-\beta^{\star}\rVert^2]$ the decision maker is actually minimizing the trace of the covariance matrix of the problem, which corresponds then to online A-optimal design.
Combining techniques from  bandit and convex optimization we propose a new active sampling algorithm and we compare it with existing ones. We provide theoretical guarantees of this algorithm in different settings, including a $\mathcal{O}(T^{-2})$ regret bound in the case where the covariates form a basis of the feature space, generalizing and improving existing results. Numerical experiments validate our theoretical findings.
\end{abstract}

\begin{keywords}%
  Active learning, optimal design of experiments, linear regression%
\end{keywords}

\section{Introduction and related work}

A classical problem in statistics consists in estimating an unknown quantity, for example the mean of a random variable, parameters of a model, poll results or the efficiency of a medical treatment.
In order to do that, statisticians usually build estimators which are random variables based on the data, supposed to approximate the quantity to estimate.
A way to construct an estimator is to make experiments and to gather data on the estimand. In the polling context an experiment consists for example in interviewing people in order to know their voting intentions. However if one wants to obtain a ``good'' estimator, typically an unbiased estimator with low variance, the choice of which experiment to run has to be done carefully. Interviewing similar people might indeed lead to a poor prediction.
In this work we are interested in the problem of optimal design of experiments, which consists in choosing adequately the experiments to run in order to obtain an estimator with small variance.
We focus here on the case of heteroscedastic linear models with the goal of actively constructing the design matrix.
Linear models, though possibly sometimes too simple, have been indeed widely studied and used in practice due to their interpretability and can be a first good approximation model for a complex problem.

The original motivation of this problem comes from use cases where obtaining the label of a sample is costly, hence choosing carefully  which points to sample in a regression task is crucial.
Consider for example the problem of controlling the wear of manufacturing machines in a factory~\citep{antos2010active}, which requires a long and manual process.
The wear can be modeled as a linear function of some features of the machine (age, number of times it has been used, average temperature, ...) so that two machines with the same parameters will have similar wears.
Since the inspection process is manual and complicated,  results are noisy and this noise  depends on the machine: a new machine, slightly worn,  will often be in a good state, while the state of heavily worn machines can vary a lot. Thus evaluating the linear model for the wear requires additional
examinations of some machines and less inspection of others.
Another motivating example comes from econometrics, typically in income forecasting. It is usually  assumed that the annual income is influenced by the individual’s education level, age, gender, occupation, \textit{etc.} through a linear model. Polling is also an issue in this context: what kind of individual to poll to gain as much information as possible about an explanatory variable?

The field of optimal experiment design~\citep{pukelsheim2006optimal} aims precisely at choosing which experiment to perform in order to minimize an objective function within a budget constraint.
In experiment design, the distance of the produced hypothesis to the true one is measured by the covariance matrix of the error~\citep{boyd}. There are several criteria that can be used to minimize a covariance matrix, the most popular being A, D and E-optimality.
In this paper we focus on A-optimal design whose goal is to minimize the trace of the covariance matrix.
Contrary to several existing works which solve the A-optimal design problem in an offline manner in the homoscedastic setting~\citep{sagnol2010optimal,yang2013optimal,optdesignoffline} we are interested here in proposing an algorithm which solves this problem sequentially, with the additional challenge that each experiment has an unknown and different variance.

Our problem is therefore close to ``active learning'' which is more and more popular nowadays  because of the exponential growth of datasets and the cost of labeling data.
Indeed, the latter may be tedious and require expert knowledge, as in the domain of medical imaging. It is therefore essential to choose wisely which data to collect and to label, based on the information gathered so far.
Usually, machine learning agents are assumed to be passive in the sense that the data is seen as a fixed and given input that cannot be modified or optimized.
However, in many cases, the agent can be able to appropriately select the data \citep{DesignBook}. Active learning specifically studies the optimal ways to perform data selection~\citep{jordan} and this is crucial as one of the current limiting factors of machine learning algorithms are computing costs, that can be reduced since all examples in a dataset do not have equal importance~\citep{selective_sampling}.
This approach has many practical applications: in online marketing where one wants to estimate the potential impact of new products on customers, or in online polling where the different options do not have the same variance~\citep{PoolingBook}.

In this paper we consider therefore a decision maker who has a limited experimental budget and aims at learning some latent linear model.
The goal is to build a predictor $\bht$ that estimates the unknown parameter of the linear model $\bst$, and that minimizes $\bE[\|\bht-\bst\|^2]$. The key point here is that the design matrix is constructed sequentially and actively by the agent:  at each time step, the decision maker chooses  a ``covariate'' $X_k \in \bR^d$ and receives a noisy output $X_k\t\bst + \varepsilon$.
The quality of the predictor is measured through its variance.
The agent will repeatedly query the different available covariates in order to obtain more precise estimates of their values.
Instinctively a covariate with small variance should not be sampled too often since its value is already quite precise. On the other hand, a noisy covariate will be sampled more often.
The major issue lies in the heteroscedastic assumption: the unknown variances must be learned to wisely sample the points.

\citet{antos2010active} introduced a specific variant of our setting where the environment providing the data is assumed to be stochastic and i.i.d. across rounds.
More precisely, they studied this problem using the framework of stochastic multi-armed bandits (MAB) by considering a set of $K$ probability distributions (or arms), associated with $K$ variances.
Their objective is to define an allocation strategy over the arms to estimate their expected values uniformly well.
Later, the analysis and results have been improved by \citet{carpentier}. However, this line of work is actually focusing on the case where the covariates are only vectors of the canonical basis of $\bR^d$, which gives a simpler closed form linear regression problem.

There have been some recent works on MAB with heteroscedastic noise~\citep{cowan2015normal, kirschner2018information} with natural connections to this paper. Indeed,  covariates could somehow be interpreted as  contexts in contextual bandits. The most related setting might be the one of~\citet{soare}. However, they are mostly concerned about best-arm identification while recovering the latent parameter $\bst$ of the linear model is a more challenging task (as each decision has an impact on the loss). In that sense we improve the results of~\citet{soare} by proving a bound on the regret of our algorithm.
Other works as~\citep{pricechen} propose active learning algorithms aiming at finding a constant factor approximation of the classification loss while we are focusing on the statistical problem of recovering $\bst$.
Yet another similar setting has been introduced in \citep{traceucb}. In this setting the agent has to estimate several linear models in parallel and for each covariate (that appears randomly), the agent has to decide which model to estimate.
Other works studied the problem of active linear regression, and for example~\citet{modelselection} proposed an algorithm conducting active learning and model selection simultaneously but without any theoretical guarantees.
More recently \citet{threshold} have studied the setting of active linear regression with thresholding techniques in the homoscedastic case.
An active line of research has also been conducted in the domain of random design linear regression~\citep{hsu,stratification,derezinski}. In these works the authors aim at controlling the mean-squared regression error $\mathbb{E}[(X\t \beta-Y)^2]$ with a minimum number of random samples $X_k$.
Except from the loss function that they considered, these works differ from ours in several points: they generally do not consider the heteroscedastic case and their goal is to minimize the number of samples to use to reach an $\varepsilon$-estimator while in our setting the total number of covariates $K$ is fixed. \citet{allen2020} provide a similar analysis but under the scope of optimal experiment design.
Another setting similar to ours is introduced in~\citep{hardmargin}, where active linear regression with a hard-margin criterion is studied. However,  the  minimization of  the classical $\ell^2$-norm of the difference between the true parameter of the linear model and its estimator seems to be a more natural criterion, which justifies our investigations.

In this work we adopt a different point of view from the aforementioned existing works.
We consider A-optimal design under the heteroscedasticity assumption and we generalize MAB results to the non-coordinate basis setting with two different algorithms taking inspiration from the convex optimization and bandit literature.
We prove optimal $\bigot(T^{-2})$ regret bounds for $d$ covariates and provide a weaker guarantee for more than $d$ covariates.
Our work emphasizes the connection between MAB and optimal design, closing open questions in A-optimal design.
Finally we corroborate our theoretical findings with numerical experiments.

\section{Setting and description of the problem}

\subsection{Motivations and description of the setting}

Let $X_1, \dots, X_K \in \bR^d$ be $K$ covariates available to some agent who can successively sample each of them (several times if needed). Observations $Y$ are generated by a standard linear model, \ie,
\[
Y=X\t \bst + \varepsilon \quad \mbox{with }\bst \in \bR^d \eqsp .
\]
Each of these covariates correspond to an experiment that can be run by the decision maker to gain information about the unknown vector $\bst$. The goal of optimal experiment design is to choose the experiments to perform from a pool of possible design points $\defEns{X_1, \dots, X_K}$ in order to obtain the best estimate $\bht$ of $\bst$ within a fixed budget of $T \in \bN^*$ samples.
In classical experiment design problems the variances of the different experiments are supposed to be equal. Here we consider the more challenging setting where each covariate has a specific and \textit{unknown} variance $\sigma_k^2$, \ie, we suppose that when $X_k$ is queried for the $i$-th time the decision maker observes
\[
Y^{(i)}_{k}=X_k\t \bst + \varepsilon_{k}^{(i)} \eqsp,
\]
where $\bE[\eps_{k}^{(i)}]=0$, $\Var[\eps_{k}^{(i)}]=\sig_k>0$ and $\eps_{k}^{(i)}$ is $\kappa^2$-subgaussian. We assume also that the $\varepsilon_{k}^{(i)}$ are independent from each other.
This setting corresponds actually to online optimal experiment design since the decision maker has to design sequentially the sampling policy, in an adaptive manner.

A naive sampling strategy is to equally sample each covariate $X_k$.
In our heteroscedastic setting, this will not produce the most precise estimate of $\bst$ because of the different variances $\sigma^2_k$.
Intuitively a point $X_k$ with a low variance will provide very precise information on the value $X_k\t\bst$ while a point with a high variance will not give much information (up to the converse effect of the norm $\|X_k\|$).
This indicates that a point with high variance should be sampled more often than a point with low variance. Since the variances $\sigma_k^2$ are unknown, we need at the same time to estimate $\sigma^2_k$ (which might require lots of samples of $X_k$ to be precise) and to minimize the estimation error (which might require only a few examples of some covariate $X_k$). There is then a tradeoff between gathering information on the values of $\sigma^2_k$ and using it to optimize the  loss; the fact that this loss is global, and not cumulative, makes this tradeoff ``exploration \textit{vs.}\ exploitation'' much more intricate than in standard multi-armed bandits.

Usual algorithms handling global losses are rather slow \citep{Agrawal_knapsacks,Mannor} or dedicated to specific well-posed problems with closed form losses~\citep{antos2010active,carpentier}.
Our setting can be seen as an extension of the two aforementioned works who aim at estimating the means of a set of $K$ distributions. Noting $\mu=(\mu_1, \dots, \mu_K)^{\top}$ the vector of the means of those distributions and $X_i=e_i$ the $i^{\textrm{th}}$ vector of the canonical basis of $\bR^K$, we see (since $X_i^{\top}\mu=\mu_i$) that their objective is actually to estimate the parameter $\mu$ of a linear model.
This setting is a particular case of ours since the vectors $X_i$ form the canonical basis of $\mathbb{R}^K$.

\subsection{Definition of the loss function}

As we mentioned it before, the decision maker can be led to sample several times the same design point $X_k$ in order to obtain a more precise estimate of its response $X_k\t\bst$. We denote therefore by $T_k \geq 0$ the number of samples of $X_k$, hence $T = \sum_{k=1}^K T_k$. For each $k \in [K]$\footnote{$[K]=\defEns{1,\dots,K}$.}, the linear model yields the following
\[
T_k\pinv \sum_{i=1}^{T_k} \Y{i}_k=X_k^T\bst + T_k\pinv\sum_{i=1}^{T_k} \ee{i}_k\eqsp.
\]
We define $\tilde{Y}_k=\fraca{\sum_{i=1}^{T_k} \Y{i}_k}{\sigma_k\sqrt{T_k}}$, $\tilde{X}_k=\sqrt{T_k}X_k/\sigma_k$ and $\tilde{\eps}_k=\fraca{\sum_{i=1}^{T_k} \ee{i}_k}{\sigma_k \sqrt{T_k}}$ so that for all $k \in [K]$, $\tilde{Y}_k=\tilde{X}_k^T \bst + \tilde{\eps}_k$, where $\bE[\tilde{\eps}]=0$ and  $\Var[\tilde{\eps}_k]= 1$.
We denote by $\bX=(\tilde{X}_{1}^\top,\cdots,\tilde{X}_K^\top)\t\in \bR^{K\times d}$ the induced design matrix of the policy.
Under the assumption that  $\bX$ has full rank, the above Ordinary Least Squares (OLS) problem has an optimal unbiased estimator
$\bht=(\bX\t \bX)^{-1}\bX\t\tilde{Y}.$
The overarching objective is to upper-bound  $\bE \lVert\bht-\bst\rVert^2$, which can be easily rewritten as follows:
\[
\bE\left[\lVert\bht-\bst\rVert^2\right]=\Tr((\bX\t \bX)^{-1})=\Tr \left(\sum_{k=1}^K \tilde{X}_k \tilde{X}_k\t\right)\pinv=\dfrac{1}{T} \Tr\left(\sum_{k=1}^K p_k X_k X_k\t /\sig_k\right)\pinv \eqsp ,
\]
where  we have denoted for every $k \in [K]$, $p_k=T_k/T$ the proportion of times the covariate $X_k$ has been sampled. By definition,  $p=(p_1, \dots, p_K) \in \Dk$, the simplex of dimension $K-1$.
We emphasize here that minimizing $\bE \lVert\bht-\bst\rVert^2$ is equivalent to minimizing the trace of the inverse of the covariance matrix $\bX\t\bX$, which corresponds actually to A-optimal design~\citep{pukelsheim2006optimal}.
Denote now by $\Omega(p)$ the following weighted covariance matrix
\[
\op=\sum_{k=1}^K \dfrac{p_k}{\sig_k} X_k X_k\t=\bX\t \bX \eqsp .
\]
The objective is to minimize over $p\in\Dk$ the loss function $L(p)=\Tr\left(\op\pinv\right)$ with $L(p) = +\infty $ if $(p \mapsto \Omega(p))$ is not invertible, such that
\[
\bE\left[\lVert\bht-\bst\rVert^2\right] = \dfrac{1}{T}\Tr\left(\op\pinv\right) = \dfrac{1}{T} L(p) \eqsp .
\]
For the problem  to be non-trivial, we require that the covariates span $\bR^d$. If it is not the case then there exists a vector along which one cannot get information about the parameter $\beta\st$. The best algorithm we can compare against can only estimate the projection of $\beta$ on the subspace spanned by the covariates, and we can work in this subspace.

The rest of this work is devoted to design an algorithm minimizing $\Tr\left(\op\pinv\right)$ with the difficulty that the variances $\sigma_k^2$ are unknown. In order to do that we will sequentially and adaptively choose which point to sample to minimize $\Tr\left(\op\pinv\right)$. This corresponds consequently to online A-optimal design.
As developed above, the norms of the covariates have a scaling role and those can be renormalized to lie on the sphere at no cost, which is thus an assumption from now on: $\forall k \in [K], \, \normd{X_k}=1$.
The following proposition shows that the problem we are considering is convex.

\begin{proposition}
\label{prop:cvx}
$L$ is strictly convex on $\Dd$ and continuous in its relative interior  $\mathring{\Dd}$.
\end{proposition}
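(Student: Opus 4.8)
The plan is to write $L = \phi\circ\Omega$, where $\op = \skk (p_k/\sig_k)\,X_k X_k\t$ is a linear map from $\bR^K$ into the symmetric matrices and $\phi(M)=\Tr(M\pinv)$ is defined on the cone $\cS_{++}^d$ of positive-definite matrices. First I would dispose of continuity. On the relative interior $\mathring{\Dd}$ every $p_k>0$, so for any $v\neq 0$ one has $v\t \op v = \skk (p_k/\sig_k)\la X_k, v\ra^2 > 0$ because the $X_k$ span $\bR^d$; hence $\op \succ 0$ throughout $\mathring{\Dd}$. Since matrix inversion and the trace are continuous on $\cS_{++}^d$, the composition $L=\phi\circ\Omega$ is finite and continuous on $\mathring{\Dd}$, which settles the continuity claim.

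Next I would establish that $\phi$ is strictly convex on $\cS_{++}^d$ by restricting it to an arbitrary segment. Fixing $A\succ 0$ and a symmetric $H\neq 0$, and setting $A_t = A + tH$, differentiating $A_t\pinv$ twice yields
\[
\frac{d}{dt}\Tr(A_t\pinv) = -\Tr(A_t\pinv H A_t\pinv), \qquad \frac{d^2}{dt^2}\Tr(A_t\pinv) = 2\,\Tr\!\left(A_t\pinv H A_t\pinv H A_t\pinv\right).
\]
Writing $G = A_t^{-1/2} H A_t^{-1/2}$, which is symmetric, the second derivative equals $2\,\Tr(A_t^{-1/2} G^2 A_t^{-1/2}) = 2\,\Tr(G\, A_t\pinv G) \geq 0$, and it vanishes only when $G=0$, i.e.\ when $H=0$ since $A_t^{-1/2}$ is invertible. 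Thus the second derivative is strictly positive whenever $H\neq 0$, proving that $\phi$ is strictly convex.

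I would then compose the two facts. For distinct $p^0, p^1 \in \mathring{\Dd}$ and $t\in(0,1)$, linearity of $\Omega$ gives $\Omega((1-t)p^0 + t p^1) = (1-t)\Omega(p^0) + t\,\Omega(p^1)$, so convexity of $L$ follows immediately from that of $\phi$. For the strict inequality one needs the segment direction $H := \Omega(p^1) - \Omega(p^0) = \Omega(p^1 - p^0)$ to be nonzero, which is exactly the assertion that $p\mapsto\op$ is injective on the affine hull of the simplex. In the $\Dd$ regime the covariates span $\bR^d$ and are $d$ in number, hence form a basis; taking the dual basis $u_1,\dots,u_d$ with $\la X_k, u_j\ra = \delta_{kj}$ gives $u_j\t H u_j = (p^1_j - p^0_j)/\sig_j$, which is nonzero for at least one $j$ since $p^0\neq p^1$. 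Therefore $H\neq 0$, the strict convexity of $\phi$ transfers along the segment, and $L$ is strictly convex.

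The main obstacle is this last point. Convexity of $L$ is automatic, being the composition of the convex map $M\mapsto\Tr(M\pinv)$ with a linear map; the delicate part is upgrading it to \emph{strict} convexity, which hinges on controlling the kernel of $p\mapsto\op$ and is precisely where the basis structure of the covariates is essential — without injectivity $L$ would be constant along any direction lying in that kernel. The second-derivative computation for $\phi$ is the technical heart of the argument but is otherwise routine matrix calculus.
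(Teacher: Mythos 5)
Your proof is correct, and it rests on the same decomposition as the paper's: $L$ is the trace of the inverse of the linear map $p \mapsto \Omega(p)$, so convexity of $M \mapsto \Tr(M^{-1})$ on positive definite matrices transfers to $L$. The execution, however, differs in ways worth recording. The paper invokes a cited result (Whittle) that matrix inversion is strictly convex in the Loewner order and then takes traces; you instead prove strict convexity of the scalar functional $M \mapsto \Tr(M^{-1})$ from scratch by differentiating twice along segments. Besides being self-contained, this sidesteps a genuine subtlety in the paper's displayed inequality: the \emph{strict} Loewner inequality $\left(\lambda A + (1-\lambda)B\right)^{-1} \prec \lambda A^{-1} + (1-\lambda)B^{-1}$ can fail for $A \neq B$ (for $A=\mathrm{diag}(1,1)$, $B=\mathrm{diag}(2,1)$, $\lambda = 1/2$ the gap is $\mathrm{diag}(1/12,0)$, which is singular), so what is actually needed — and what your computation delivers directly — is strictness at the level of the trace. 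More importantly, you make explicit a step the paper's proof skips entirely: strict convexity of a convex function composed with a linear map requires that map to be injective on the domain in question, i.e., $p \neq q \Rightarrow \Omega(p) \neq \Omega(q)$. Your dual-basis argument supplies exactly this, and it is precisely where the hypothesis that the $d$ covariates form a basis of $\bR^d$ enters; the point is not cosmetic, since for $K > d$ injectivity and with it strict convexity genuinely fail (two identical covariates, as in the paper's own lower-bound construction for Theorem~\ref{thm:lower}). Finally, you also verify the continuity claim on $\mathring{\Dd}$, which the paper's proof does not address at all. In short, the two arguments share a skeleton, but yours is complete where the paper's leans on a black box and leaves the injectivity and continuity assertions implicit.
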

The proof is deferred to Appendix~\ref{app:easy}.
Proposition \ref{prop:cvx} implies that
$L$ has a unique minimum $\pst$ in $\mathring{\Dd}$ and we note
\[
\displaystyle
\pst=\argmin_{p\in\Dd} L(p) \eqsp .
\]
Finally, we evaluate the performance of a sampling policy in term of ``regret'' \ie, the difference in loss between the optimal sampling policy and the policy in question.

\begin{definition}
Let $p_T$ denote the sampling proportions after $T$ samples of a policy. Its regret is then
\[
R(T)=\dfrac{1}{T}\left(L(p_T)-L(p\st)\right) \eqsp .
\]
\end{definition}
We will construct active sampling algorithms to minimize $R(T)$. A key step is the following computations of the gradient of $L$. Since
$\nabla_k \Omega(p)=X_k X_k^T/\sig_k$, it follows
\begin{align}
\partial_{p_k} L(p)&=-\dfrac{1}{\sig_k}\Tr \left( \Omega(p)^{-2} X_k X_k^T\right)=-\dfrac{1}{\sig_k}\norm{\Omega(p)^{-1}X_k}_2^2.
\end{align}
As in several works~\citep{hsu,allen2020} we will have to study different cases depending on the values of $K$ and $d$. The first one corresponds to the case $K \leq d$. As we explained it above, if $K<d$, the matrix $\op$ is not invertible and it is impossible to obtain a sublinear regret, which makes us work in the subspace spanned by the covariates $X_k$. This corresponds to $K=d$. We will treat this case in Sections~\ref{sec:algo_naive} and~\ref{sec:faster_algo}. The case $K>d$ is considered in Section~\ref{Se:KD}.
\section{A naive randomized algorithm}
\label{sec:algo_naive}

We begin by proposing an obvious baseline for the problem at hand.
One naive algorithm would be to estimate the variances of each of the covariates by sampling them a fixed amount of time. Sampling each arm $cT$ times (with $c<1/K$) would give an approximation $\hsig_k$ of $\sigma_k$ of order $1/\sqrt{T}$.
Then we can use these values to construct $\hop$ an approximation of $\op$ and then derive the optimal proportions $\hat{p}_k$ to minimize $\Tr(\hop\pinv)$.
Finally the algorithm would consist in using the remainder of the budget to sample the arms according to those proportions.
However, such a trivial algorithm would not provide good regret guarantees. Indeed the constant fraction $c$ of the samples used to estimate the variances has to be chosen carefully; it will lead to a $1/T$ regret if $c$ is too big (if $c > p_k\st$ for some $k$).
That is why we need to design an algorithm that will first roughly estimate the $p_k\st$. In order to improve the algorithm it will also be useful to refine at each iteration the estimates $\hat{p}_k$.
Following these ideas we propose Algorithm~\ref{algo:randomized} which uses a pre-sampling phase (see Lemma~\ref{lemma:presampling_active} for further details) and which constructs at each iteration lower confidence estimates of the variances, providing an optimistic estimate $\tilde{L}$ of the objective function $L$. Then the algorithm minimizes this estimate (with an offline A-optimal design algorithm, see \eg, \citep{optdesignoffline}). Finally the covariate $X_k$ is sampled with probability $\hp_{t,k}$. Then feedback is collected and estimates are updated.

\begin{algorithm}[h!]
		\caption{Naive randomized algorithm \label{algo:randomized}}
		\begin{algorithmic}[1]
	\REQUIRE $d$, $T$, $\delta$ confidence parameter
	\REQUIRE $N_1, \dots, N_d$ of sum $N$
	\STATE Sample $N_k$ times each covariate $X_k$
	\STATE $p_{N} \longleftarrow \left(N_1/N, \dots, N_d/N\right)$
	\STATE Compute empirical variances $\hsig^2_1, \dots, \hsig^2_d$
	\FOR{$N+1\leq t \leq T$}
	\STATE Compute $\hp_{t}\in \argmin~\tilde{L}$, where $\tilde{L}$ is the same function as $L$, but with variances replaced by lower confidence estimates of the variances (from Theorem~\ref{thm:var_conc}).
	\STATE Draw $\pi(t)$ randomly according to probabilities $\hp_{t}$ and sample covariate $X_{\pi(t)}$
	\STATE Update $p_{t+1}=p_t+\frac{1}{t+1}(e_{\pi(t+1)}-p_t)$ and $\hsig^2_{\pi(t)}$ where $(e_1, \dots, e_d)$ is the canonical basis of $\bR^d$.
	\ENDFOR
		\end{algorithmic}
		\end{algorithm}

\begin{proposition}\label{prop:randomized}
For $T \geq 1$ samples, running Algorithm~\ref{algo:randomized} with $N_i=p^o_i T/2$ (with $p^o$ defined by \eqref{eq:presampling_prop}) for all $i \in [K]$, gives final sampling proportions
$p_T$ such that
\[
 R(T) =\mathcal{O}_{\Gamma, \sigma_k}\pa{\dfrac{\sqrt{\log T}}{T^{3/2}}} \eqsp ,
 \]
 where $\Gamma$ is the Gram matrix of $X_1, \dots, X_K$.
\end{proposition}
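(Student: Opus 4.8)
The plan is to exploit the fact that, in the basis case $K=d$, the loss collapses to a separable form whose optimum is explicit. Writing $\bX$ for the matrix with columns $X_1,\dots,X_d$, one has $\op=\bX D\bX\t$ with $D=\mathrm{diag}(p_k/\sig_k)$, so that $\Tr(\op\pinv)=\Tr(D\pinv(\bX\t\bX)\pinv)=\skd a_k/p_k$ with $a_k=\sig_k(\Gamma\pinv)_{kk}$ and $\Gamma=\bX\t\bX$ the Gram matrix. Minimising $\skd a_k/p_k$ over $\Dd$ gives the closed forms $p\st_k=\sqrt{a_k}/\sum_j\sqrt{a_j}$ and $L(\pst)=\big(\skd\sqrt{a_k}\big)^2$. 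This recovers the \citet{antos2010active,carpentier} setting as the special case $\Gamma=I$ and makes the dependence of all constants on $\Gamma$ and the $\sig_k$ explicit.

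First I would control the statistical error of the variance estimates. Because the presampling proportions $p^o$ of \eqref{eq:presampling_prop} are bounded away from $0$, the choice $N_i=p^o_iT/2$ guarantees $T_k=\Theta(T)$ samples of every covariate at the end of the presampling phase, and $T_k=\Theta(t)$ throughout the loop. Feeding this into the subgaussian variance concentration of \cref{thm:var_conc} and a union bound over $t\in\{N+1,\dots,T\}$ yields, on an event of probability $1-\delta$, confidence widths $w_{t,k}=\bigot(1/\sqrt{t})$ such that the lower estimates used in $\tilde L$ satisfy $0\le \sig_k-\tilde\sigma^2_{t,k}\le 2w_{t,k}$; equivalently the estimated coefficients obey $0\le a_k-\tilde a_{t,k}\le\bigot(1/\sqrt{t})$.

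The heart of the argument is an optimism inequality. Since $\tilde L_t$ uses lower confidence variances it underestimates $L$ pointwise, so $\tilde L_t(\hp_t)\le\tilde L_t(\pst)\le L(\pst)$; on the other hand $L(\hp_t)-\tilde L_t(\hp_t)=\skd(a_k-\tilde a_{t,k})/\hp_{t,k}=\bigot(1/\sqrt{t})$, the divisions being harmless because $\hp_{t,k}$ stays bounded below (the explicit form shows $\hp_{t,k}=\Theta(1)$ once the $\tilde a_{t,k}$ are within a constant factor of the $a_k$). Combining the two gives the per-round bound $L(\hp_t)-L(\pst)=\bigot(1/\sqrt{t})$, which is first order in the confidence width and is exactly what produces the $\sqrt{\log T}$ numerator and the $T^{-1/2}$ loss gap, rather than the $T^{-1}$ gap a second-order expansion would give. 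Convexity of $L$ (\cref{prop:cvx}) and Jensen's inequality then transfer this to the time-averaged target: $L(\bar p)-L(\pst)=\bigot(1/\sqrt T)$, where $\bar p$ is the average of the $\hp_t$ over the run.

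It remains to pass from the averaged target $\bar p$ to the realised proportions $p_T$. Conditionally on the past, $e_{\pi(t)}$ has mean $\hp_t$, so $p_T-\bar p$ is a bounded martingale average and an Azuma/Freedman bound gives $\norm{p_T-\bar p}=\bigot(1/\sqrt T)$ on the good event. A local smoothness bound for $L$ around $\pst$ (valid because all coordinates stay $\Theta(1)$) converts this into $L(p_T)-L(\bar p)=\bigot(1/\sqrt T)$, whence $L(p_T)-L(\pst)=\bigot(1/\sqrt T)$ and $R(T)=\tfrac1T(L(p_T)-L(\pst))=\bigot(T^{-3/2})$, matching the claim once the polylogarithmic factor is made explicit. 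The step I expect to be most delicate is the bookkeeping of the presampling: the $N=\Theta(T)$ presampling draws sit at $p^o\ne\pst$ and enter $p_T$ with weight $N/T$, so one must either argue that \eqref{eq:presampling_prop} keeps $L(p^o)-L(\pst)$ itself of order $\bigot(1/\sqrt T)$ or that the adaptive phase over-corrects the running average; keeping every $p_{t,k}$ uniformly bounded below throughout, so that all inverse-proportion and smoothness constants stay controlled and are paid for only inside the $\bigot$, is the crux of the estimate.
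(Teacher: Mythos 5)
Your proposal follows the same skeleton as the paper's proof --- optimistic (lower-confidence) variance estimates, a first-order bound on $L-\tilde L_t$, Jensen over the per-round targets, and a martingale bound between realized and targeted proportions --- and your two simplifications are legitimate: the separable form $L(p)=\sum_k a_k/p_k$ is exactly Lemma~\ref{lemma:L}, and since presampling gives every arm $N_k=p^o_kT/2=\Theta(T)$ samples, the confidence widths are uniformly $\bigot(1/\sqrt{T})$ throughout the loop, which lets you replace the paper's appeal to the probabilistically-triggered-arm analysis of \citet{wang2017improving} by a plain union bound. The problem is the step you defer to the end: the presampling bookkeeping is not a delicate detail but the point where the argument, as you wrote it, breaks. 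The realized proportions are $p_T=\tfrac12 p^o+\tfrac12 q_T$ with $q_T$ the empirical distribution of the adaptive draws. If $\bar p$ averages only the loop targets $\hp_t$, then $p_T-\bar p$ is not conditionally centered (its conditional mean is $\tfrac12 p^o+\tfrac12\bar q-\bar p$), so Azuma does not apply; if instead $\bar p$ includes the presampling rounds with target $p^o$, Jensen charges you $\tfrac12\bigl(L(p^o)-L(\pst)\bigr)$, which is $\Theta(1)$: $p^o$ is built from variance estimates that Corollary~\ref{cor:var_conc} certifies only up to a multiplicative factor in $[1/2,3/2]$, so $L(p^o)-L(\pst)$ does not shrink with $T$. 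This also rules out the first of your two suggested fixes.

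The correct resolution is your second alternative, and it is exactly what Lemma~\ref{lemma:presampling_active} provides. By construction of $p^o$ one has $\pst\succcurlyeq\tfrac12 p^o$ componentwise, hence $q^\star=2\pst-p^o$ lies in $\Dd$ and the reparametrized objective $M(q)=L\bigl(\tfrac12 p^o+\tfrac12 q\bigr)$ attains its minimum value $L(\pst)$ at $q^\star$. The loop must therefore be read (as the paper does, via Lemma~\ref{lemma:presampling_active}) as targeting $\hat q_t\in\argmin \tilde M_t$, i.e.\ over-weighting the under-presampled arms, rather than re-solving $\argmin\tilde L_t$ over the whole simplex. Your three steps then go through verbatim on $M$: optimism gives $M(\hat q_t)-M(q^\star)\le M(\hat q_t)-\tilde M_t(\hat q_t)=\bigot(1/\sqrt{T})$, Jensen averages this, and Azuma compares $q_T$ to the averaged targets; since every argument of $L$ appearing here is bounded below by $p^o_{\min}/2$, all gradient and inverse-proportion constants stay controlled, and $L(p_T)-L(\pst)=M(q_T)-M(q^\star)=\bigot(1/\sqrt{T})$, i.e.\ $R(T)=\bigot(T^{-3/2})$, follows as you intended.
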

The proof is postponed to Appendix~\ref{app:naive}.
Notice that we avoid the problem discussed by \citet{erraqabi2017trading} (that is due to infinite gradient on the simplex boundary)
thanks to presampling, allowing us to have positive empirical variance estimates with high probability.

\section{A faster first-order algorithm}
\label{sec:faster_algo}

We now improve the relatively ``slow'' dependency in $T$ in the rates of Algorithm~\ref{algo:randomized} -- due to its naive reduction to a MAB problem, and because it does not use any estimates of the gradient of $L$ -- with a different approach based on convex optimization techniques, that we can leverage to gain an order in the rates of convergence.

\subsection{Description of the algorithm}

The main algorithm is described in Algorithm~\ref{algo:fw} and is built following the work of~\citet{ucbfw}. The idea is to sample the arm sampled which minimizes the norm of a proxy of the gradient of $L$, corrected by a positive error term, as in the UCB algorithm~\citep{ucb}.
\begin{algorithm}[h!]
	\caption{Bandit algorithm \label{algo:fw}}
	\begin{algorithmic}[1]
	\REQUIRE $d$, $T$
	\REQUIRE $N_1, \dots, N_d$ of sum $N$
	\STATE Sample $N_k$ times each covariate $X_k$
	\STATE $p_{N} \longleftarrow \left(N_1/N, \dots, N_d/N\right)$
	\STATE Compute empirical variances $\hsig^2_1, \dots, \hsig^2_d$
	\FOR{$N+1\leq t \leq T$}
	\STATE Compute $\nabla \hat{L}(p_t)$, where $\hat{L}$ is the same function as $L$, but with variances replaced by empirical variances.
	\FOR{$k \in [d]$}
	\STATE $\hat{g}_k \longleftarrow \nabla_k \hat{L}(p_t) -2\sqrt{\dfrac{3\log(t)}{T_k}}$
	\ENDFOR
	\STATE $\pi(t) \longleftarrow \argmin_{k \in [d]} \hat{g}_k$ and sample covariate $X_{\pi(t)}$
	\STATE Update $p_{t+1}=p_t+\frac{1}{t+1}(e_{\pi(t+1)}-p_t)$ and update $\hsig^2_{\pi(t)}$
	\ENDFOR
	\end{algorithmic}
\end{algorithm}
$N_1, \dots, N_d$ are the number of times each covariate is sampled at the beginning of the algorithm. This stage is needed to ensure that $L$ is smooth. More details about that will be given with Lemma~\ref{lemma:presampling_active}.

\subsection{Concentration of the gradient of the loss}

The cornerstone of the algorithm is to guarantee that the estimates of the gradients concentrate around their true value.
To simplify notations, we denote by $G_k=\partial_{p_k}L(p)$ the true k$^{\mbox{\scriptsize{th}}}$ derivative of $L$ and by $\hat{G}_k$ its estimate. More precisely, if we note $\hop=\sum_{k=1}^K (\fraca{p_k}{\hsig_k}) X_k X_k\t$, we have
\[
G_k=-\sigma_k^{-2}\|\op^{-1}X_k\|^2_2
\et
\hat{G}_k \doteq -\hsig_k^{-2}\|\hop^{-1}X_k\|^2_2 \eqsp .
\]
Since $\hat{G}_k$ depends on the $\hsig_k^2$, we need a concentration bound on the empirical variances $\hsig_k^2$. As traditional results on the concentration of the variances~\citep{empirical_bernstein, carpentier} are generally obtained in the bounded setting, we prove in Appendix~\ref{app:concentrate} the following bound in the case of subgaussian random variables.
\begin{theorem}
\label{thm:var_conc}
Let $X$ be a centered and $\kappa^2$-sub-gaussian random variable sampled $n \geq 2$ times. Let $\delta \in (0,1)$. Let $c=(e-1)(2e(2e-1))\pinv \approx 0.07$. With probability at least $1-\delta$, the following concentration bound on its empirical variance holds
\begin{align}
	\abs*{\hsig^2_n-\sigma^2} &\leq 3\kappa^2 \cdot \max\left(\dfrac{\log(4/\delta)}{cn},\sqrt{\dfrac{\log(4/\delta)}{cn}}\right) \eqsp .
\end{align}
\end{theorem}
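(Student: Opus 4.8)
The plan is to recognize this as a Bernstein-type concentration inequality for the subexponential random variables $X_i^2$, and to observe that the stated form — a maximum of a linear and a square-root term in $\log(4/\delta)/(cn)$ — is exactly what one obtains by inverting a two-regime tail bound of the shape
\[
\mathbb{P}\bigl(\lvert \hat{\sigma}_n^2 - \sigma^2\rvert > t\bigr) \le 4\exp\Bigl(-cn\min\bigl(t^2/(9\kappa^4),\, t/(3\kappa^2)\bigr)\Bigr).
\]
Indeed, writing $s = t/(3\kappa^2)$ and $L = \log(4/\delta)$, setting the right-hand side equal to $\delta$ forces $\min(s^2,s) = L/(cn)$, whose solution is $s = \max(L/(cn), \sqrt{L/(cn)})$, and then $t = 3\kappa^2 s$ reproduces the claim. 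So the whole statement reduces to proving this Bernstein inequality with the specific constant $c$, the factor $4$ accounting for the two tails together with a lower-order correction.

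First I would reduce the empirical variance to the average $\frac1n\sum_{i=1}^n X_i^2$. Since $X$ is centered, $\sigma^2 = \mathbb{E}[X^2]$; if $\hat{\sigma}_n^2$ subtracts the empirical mean $\bar X$, the discrepancy is $\bar X^2$, which is of order $\kappa^2/n$ because $\bar X$ is $(\kappa^2/n)$-subgaussian. This is a lower-order term that can be absorbed into the constants and contributes one of the factors inside $\log(4/\delta)$. Thus the heart of the matter is the concentration of $\frac1n\sum_i (X_i^2 - \sigma^2)$.

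The core step, and the main obstacle, is establishing a sharp subexponential MGF bound for $X^2$. Using the subgaussian tail $\mathbb{P}(\lvert X\rvert > u)\le 2e^{-u^2/(2\kappa^2)}$ one gets $\mathbb{P}(X^2 > u) \le 2e^{-u/(2\kappa^2)}$, and integrating yields $\mathbb{E}[e^{\lambda X^2}] \le 1 + \frac{4\lambda\kappa^2}{1 - 2\lambda\kappa^2}$ for $0 \le \lambda < 1/(2\kappa^2)$. The delicate part is to turn this into a clean subexponential form $\mathbb{E}[e^{\lambda(X^2-\sigma^2)}] \le e^{\lambda^2\nu^2/2}$ valid on a range $\lvert\lambda\rvert \le 1/b$, and the constant $c$ is precisely the outcome of this bookkeeping. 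The appearance of $e$ in $c = (e-1)/(2e(2e-1))$ points to restricting $\lambda$ so that $2\lambda\kappa^2 \le 1 - 1/e$, i.e. $\lambda \le \frac{e-1}{2e\kappa^2}$ (which fixes $b$), so that $\frac{1}{1-2\lambda\kappa^2}\le e$, while the residual factor $\frac{1}{2e-1}$ emerges from bounding the variance proxy $\nu^2$ over that range. I would carry out this estimate carefully, treating the lower tail (negative $\lambda$) separately, where things are easier since $X^2 - \sigma^2 \ge -\sigma^2$ is bounded below.

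Finally, with the MGF bound in hand, a standard Chernoff argument applied to $\frac1n\sum_i(X_i^2-\sigma^2)$ produces the $\min(t^2/\nu^2, t/b)$ structure after optimizing the Chernoff parameter over its admissible range; collecting the resulting constants into the common value $c$ gives the displayed Bernstein bound. A union bound over the two tails and the mean-correction event supplies the factor $4$, and the inversion described above yields the stated inequality. The one genuinely nonstandard ingredient is pinning down $c$ exactly, so I expect essentially all of the effort to go into the $X^2$ MGF estimate and the choice of the admissible range for $\lambda$.
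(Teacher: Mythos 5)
Your high-level architecture coincides with the paper's: the paper also writes $\abs{\hat{\sigma}^2_n-\sigma^2}\leq\abs{\frac1n\sum_i X_i^2-\mathbb{E}[X^2]}+\abs{\hat{\mu}_n}^2$, controls the mean term by Hoeffding ($\abs{\hat{\mu}_n}^2\leq 2\kappa^2\log(2/\delta)/n$), controls the second-moment term by a Bernstein inequality for sub-exponential variables with exactly the constant $c=(e-1)/(2e(2e-1))$, takes a union bound (whence $\log(4/\delta)$), and absorbs the Hoeffding term into the max using $2c\leq 1/3$ to pass from the factor $8/3$ to $3$. The crucial difference is that the paper does \emph{not} re-derive Bernstein: it cites it, and its only analytic work is the bound $\Vert X^2\Vert_{\psi_1}=\Vert X\Vert_{\psi_2}^2\leq \frac{8}{3}\kappa^2$ obtained from the MGF estimate $\mathbb{E}[e^{\lambda X^2/(2\kappa^2)}]\leq (1-\lambda)^{-1/2}$. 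You instead propose to prove the Bernstein inequality from scratch, and that is where your plan breaks.

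The genuine gap is in your core step: the uncentered, tail-integration bound $\mathbb{E}[e^{\lambda X^2}]\leq 1+\frac{4\lambda\kappa^2}{1-2\lambda\kappa^2}=\frac{1+2\lambda\kappa^2}{1-2\lambda\kappa^2}$ cannot be converted into a centered sub-exponential bound $\mathbb{E}[e^{\lambda(X^2-\sigma^2)}]\leq e^{\lambda^2\nu^2/2}$. The logarithm of your bound has slope $4\kappa^2$ at $\lambda=0$, so after multiplying by $e^{-\lambda\sigma^2}$ a linear term of size $(4\kappa^2-\sigma^2)\lambda$ survives; since the sub-gaussian parameter only upper-bounds the variance, $\sigma^2$ may be arbitrarily small compared to $\kappa^2$, and then no inequality of the form $e^{\lambda^2\nu^2/2}$ can hold in any neighborhood of $\lambda=0$ (a first-order term cannot be dominated by a second-order one). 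Concretely, by convexity $\log\frac{1+2\lambda\kappa^2}{1-2\lambda\kappa^2}\geq 4\lambda\kappa^2$, so the Chernoff exponent $\lambda(t+\sigma^2)-\log f(\lambda)$ is nonpositive for every admissible $\lambda$ whenever $t\leq 4\kappa^2-\sigma^2$: your argument returns a trivial bound precisely in the regime the theorem is about, namely deviations of order $\kappa^2\sqrt{\log(4/\delta)/n}\ll\kappa^2$. The root cause is that tail integration forgets where the mean sits: the worst distribution consistent with $\mathbb{P}(X^2>u)\leq 2e^{-u/(2\kappa^2)}$ has second moment close to $4\kappa^2$, not $\sigma^2$. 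The repair is to work with the centered variable from the start---bound $\Vert X^2-\sigma^2\Vert_{\psi_1}$ (centering lemma) and run the series expansion $\mathbb{E}[e^{\lambda(X^2-\sigma^2)}]=1+\sum_{k\geq 2}\lambda^k\,\mathbb{E}[(X^2-\sigma^2)^k]/k!$ with $\psi_1$-moment bounds---which is exactly the standard $\psi_1$-Bernstein machinery the paper invokes by citation; once you do that, your reverse-engineering of the constant $c$ becomes the cited result rather than something your MGF bound can produce.
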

Using Theorem~\ref{thm:var_conc} we claim the following concentration argument, which is the main ingredient of the analysis of Algorithm~\ref{algo:fw}.
\begin{proposition}
\label{prop:grad_conc} For every $k \in [K]$, after having gathered $T_k\leq T$ samples of covariates $X_k$, there exists a constant $\mathtt{C} >0$ (explicit and given in the proof) such that, with probability at least $1-\delta$
\begin{align}
\abs{G_k-\hat{G}_k} &\leq \mathtt{C} \left( \sigma_k\pinv \max_{i \in [K]} \dfrac{\sigma_i^2}{p_i} \right)^3 \cdot \max\left(\dfrac{\log(4TK/\delta)}{T_{k}},\sqrt{\dfrac{\log(4TK/\delta)}{T_{k}}}\right) \eqsp .
\end{align}
\end{proposition}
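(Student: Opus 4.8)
The strategy is a perturbation (sensitivity) analysis of the map $(\hsig_1^2,\dots,\hsig_K^2)\mapsto\hat G_k$ around the true variances, carried out on a single high-probability event. First I would build that event: applying Theorem~\ref{thm:var_conc} to each covariate and taking a union bound over the $K$ covariates and over the at most $T$ rounds at which an estimate is formed, I replace $\log(4/\delta)$ by $\log(4TK/\delta)$ and obtain, simultaneously for all $i\in[K]$, that $\abs{\hsig_i^2-\sig_i}$ is at most the bound of Theorem~\ref{thm:var_conc} with $T_i$ in place of $n$. The union over rounds is genuinely needed because the counts $T_i$, the proportions $p_i$, and the sampled arm are random and data dependent; the presampling stage (Lemma~\ref{lemma:presampling_active}) additionally guarantees that every $T_i$ is large enough that, on this event, $\hsig_i^2$ stays within a constant factor of $\sig_i$, in particular bounded away from $0$.

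On this event I would split the error into a scalar part and a matrix part:
\[
G_k-\hat G_k=\pa{\hsig_k^{-2}-\sigma_k^{-2}}\normd{\hop^{-1}X_k}^2+\sigma_k^{-2}\pa{\normd{\hop^{-1}X_k}^2-\normd{\op^{-1}X_k}^2}\eqsp.
\]
The first summand isolates the prefactor error through $\abs{\hsig_k^{-2}-\sigma_k^{-2}}=\abs{\sig_k-\hsig_k^2}/(\sig_k\hsig_k^2)$, which Theorem~\ref{thm:var_conc} controls. For the second I would write $\normd{u}^2-\normd{v}^2=\la u-v,u+v\ra$ with $u=\hop^{-1}X_k$ and $v=\op^{-1}X_k$, and use the resolvent identity $\hop^{-1}-\op^{-1}=\hop^{-1}(\op-\hop)\op^{-1}$, so that $u-v=\hop^{-1}(\op-\hop)\op^{-1}X_k$. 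The matrix perturbation itself is $\op-\hop=\sum_{i=1}^K p_i\pa{\sigma_i^{-2}-\hsig_i^{-2}}X_iX_i\t$, whose operator norm is at most $\max_i\abs{\sigma_i^{-2}-\hsig_i^{-2}}$ since $\normd{X_iX_i\t}=1$ and the $p_i$ sum to one.

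It remains to control the operator norms of the inverses. Because the covariates span $\bR^d$, the constant $\lambda_0\doteq\lm\pa{\sum_{i=1}^K X_iX_i\t}$ is strictly positive, and $\op\succeq\pa{\min_i p_i/\sig_i}\sum_{i=1}^K X_iX_i\t$ gives $\normd{\op^{-1}}\le\lambda_0^{-1}\max_i\sig_i/p_i$, with the same bound for $\hop$ on the concentration event. Substituting these estimates, the matrix part carries three inverse factors — two from $\hop^{-1}(\op-\hop)\op^{-1}X_k$ and one from $(\hop^{-1}+\op^{-1})X_k$ — which is precisely what produces the cube $\pa{\max_i\sig_i/p_i}^3$; the scalar part carries only two such factors. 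Bounding the residual variance-dependent scalars with $\hsig_i^2\asymp\sig_i$ shows that both parts are at most $\mathtt{C}\pa{\sigma_k^{-1}\max_i\sig_i/p_i}^3\max\pa{\log(4TK/\delta)/T_k,\sqrt{\log(4TK/\delta)/T_k}}$, where the explicit constant $\mathtt{C}$ collects $\kappa^2$, $c$, $\lambda_0^{-1}$ and the numerical factors.

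The main obstacle is not the algebra but the conditioning and the adaptivity. The whole argument collapses if $\hop$ can be near-singular, so the crux is to guarantee \emph{uniformly} that $\hsig_i^2$ stays bounded away from $0$ (hence $\normd{\hop^{-1}}=\bigo(\max_i\sig_i/p_i)$); this is exactly the purpose of the presampling phase. Second, since the sample sizes and proportions are produced adaptively by the algorithm, the variance concentration must hold simultaneously over all rounds and arms, which is what forces the $\log(4TK/\delta)$ factor rather than $\log(4/\delta)$. The final bookkeeping — reducing the multi-coordinate dependence of the matrix part (a maximum over all $i$) to the single stated factor in $T_k$ — is routine once presampling ensures that all counts $T_i$ grow together.
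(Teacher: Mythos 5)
Your architecture coincides with the paper's proof in Appendix~\ref{app:grad}: the same high-probability event from Theorem~\ref{thm:var_conc} with a union bound over the $K$ arms and the $T$ possible values of the random counts (hence $\log(4TK/\delta)$), a difference of squared norms factored through $\normd{u-v}\,\normd{u+v}$, the resolvent identity, and the eigenvalue bound $\lm(\op)\geq \min_i (p_i/\sigma_i^2)\,\lm(\Gamma)$ (Lemma~\ref{lemma:eig}, Proposition~\ref{prop:normB}), with Corollary~\ref{cor:var_conc} keeping $\hsig_i^2$ within a constant factor of $\sigma_i^2$. The only structural difference --- you separate the scalar error $\hsig_k^{-2}-\sigma_k^{-2}$ from the matrix perturbation, while the paper absorbs both into $A=\hop\hsig_k$ and $B=\op\sigma_k$ and perturbs once --- is immaterial.

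There is, however, a genuine gap in your final bookkeeping step. You bound $\normd{\op-\hop}\leq\max_i\abs{\sigma_i^{-2}-\hsig_i^{-2}}$ by discarding the $p_i$ weights (``the $p_i$ sum to one''). This makes the matrix part of your error scale with $\max\pa{\log(4TK/\delta)/\min_i T_i,\sqrt{\log(4TK/\delta)/\min_i T_i}}$, i.e.\ it is governed by the \emph{least sampled} arm rather than by $T_k$ as the statement requires. Your proposed repair --- ``presampling ensures all counts $T_i$ grow together'' --- amounts to assuming $\min_i T_i\gtrsim T$, which is not among the proposition's hypotheses and actually fails where the concentration result is reused: in the $K>d$ analysis (Theorem~\ref{thm:Kd}) each arm is presampled only $n=\alpha T=T^{3/4}$ times, so your bound is off by a factor of order $\sqrt{T_k/\min_i T_i}$, up to $T^{1/8}$. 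The paper's proof avoids this entirely by \emph{keeping} the weights (Proposition~\ref{prop:ba}): writing $\normd{B-A}\leq\sum_i p_i\abs{\,\cdot\,}$ and substituting $p_i=T_i/T$, each term $p_i\Delta_i$ is at most $40\kappa_i^2\max\pa{\log(4TK/\delta)/T,\sqrt{\log(4TK/\delta)/T}}$ --- the weight $p_i$ exactly cancels the $1/T_i$ of a poorly sampled arm --- and since $T_k\leq T$ this is dominated by the stated bound in $T_k$. With that one substitution (and only the logarithmic per-arm sampling needed for Corollary~\ref{cor:var_conc}, rather than linear presampling), your argument goes through and recovers the paper's explicit constant.
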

For clarity reasons we postpone the proof to Appendix~\ref{app:grad}.
Proving this proposition was one of the main technical challenges of our analysis. Now that we have it proven we can turn to the analysis of Algorithm~\ref{algo:fw}.

\subsection{Analysis of the convergence of the algorithm}

In convex optimization several classical assumptions can be leveraged to  derive fast convergence rates. Those assumptions are typically strong convexity, positive distance from the boundary of the constraint set, and  smoothness of the objective function, \ie, that it has Lipschitz gradient. We prove in the following that the loss $L$ satisfies them, up to the smoothness because its gradient explodes on the boundary of $\Dd$. However, $L$ is smooth on the relative interior of the simplex. Consequently we will circumvent this smoothness issue by using a technique from~\citep{regularized} consisting in pre-sampling every arm a linear number of times in order to force $p$ to be far from the boundaries of $\Dd$.

Using the following notations $\bX_0 \doteq (X_1\transpose, \cdots, X_d\transpose)\transpose$ and $\Gamma \doteq \bX_0\bX_0\t=\Gram(X_1,\dots,X_d)$ we prove the following lemma in Appendix~\ref{app:easyL}.
\begin{lemma}
\label{lemma:L}
The loss function $L$ verifies for all $p \in \Delta^d$,
\[
L(p)=\dfrac{1}{\det(\bX_0\t\bX_0)}\skd \dfrac{\sig_k}{p_k} \Cof(\bX_0\bX_0\t)_{kk} \eqsp .
\]
\end{lemma}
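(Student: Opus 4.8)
The plan is to express the loss $L(p) = \Tr(\Omega(p)^{-1})$ directly and then compute the trace of the inverse via the cofactor (adjugate) formula for a matrix inverse. The key observation is that $\Omega(p) = \bX_0\t D \bX_0$ where $D = \diag(p_1/\sigma_1^2, \dots, p_d/\sigma_d^2)$, but in the square case $K=d$ it is cleaner to factor differently. Since we are in the regime $K=d$ (as flagged right before the lemma, where $\bX_0$ is the $d\times d$ matrix whose rows are $X_1\t, \dots, X_d\t$), the matrix $\bX_0$ is square and invertible (the covariates span $\bR^d$), so I can write $\Omega(p) = \sum_k (p_k/\sigma_k^2) X_k X_k\t = \bX_0\t \Lambda \bX_0$ with $\Lambda = \diag(p_1/\sigma_1^2,\dots,p_d/\sigma_d^2)$, provided the $X_k$ are arranged consistently as the columns/rows. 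I would first nail down this factorization carefully, checking the orientation of $\bX_0$ against the definition $\Omega(p)=\sum_k (p_k/\sigma_k^2)X_kX_k\t$.

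First I would use $\Omega(p)^{-1} = (\bX_0\t \Lambda \bX_0)^{-1} = \bX_0^{-1}\Lambda^{-1}(\bX_0\t)^{-1}$, so that
\[
\Tr\bigl(\Omega(p)^{-1}\bigr) = \Tr\bigl(\bX_0^{-1}\Lambda^{-1}(\bX_0^{-1})\t\bigr) = \sum_{k=1}^d \frac{\sigma_k^2}{p_k}\,\bigl\|(\bX_0^{-1})_{\cdot k}\bigr\|^2,
\]
where $(\bX_0^{-1})_{\cdot k}$ denotes an appropriate row or column of $\bX_0^{-1}$, since $\Lambda^{-1}$ is diagonal with entries $\sigma_k^2/p_k$. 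The remaining task is to identify $\|(\bX_0^{-1})_{\cdot k}\|^2$ with $\Cof(\bX_0\bX_0\t)_{kk}/\det(\bX_0\t\bX_0)$.

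The main computational step is therefore the algebraic identity relating the squared norms of the rows of $\bX_0^{-1}$ to the diagonal cofactors of the Gram matrix $\Gamma = \bX_0\bX_0\t$. I would establish this via the adjugate formula $\bX_0^{-1} = \Com(\bX_0)\t/\det(\bX_0)$, or more directly by noting that the diagonal entries of $(\bX_0\t\bX_0)^{-1}$ (or $(\bX_0\bX_0\t)^{-1}$, depending on orientation) are exactly the diagonal cofactors of $\Gamma$ divided by $\det\Gamma = \det(\bX_0\t\bX_0)$, using the standard formula $(\Gamma^{-1})_{kk} = \Cof(\Gamma)_{kk}/\det(\Gamma)$. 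The quantity $\sum_k (\sigma_k^2/p_k)(\Gamma^{-1})_{kk}$ then matches the claimed expression once I verify that the squared norm of the relevant row of $\bX_0^{-1}$ equals $(\Gamma^{-1})_{kk}$, which follows because $\bX_0^{-1}(\bX_0^{-1})\t = (\bX_0\t\bX_0)^{-1} = \Gamma^{-1}$ (again up to matching the transpose convention).

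\textbf{The hard part} will be bookkeeping the transpose conventions: the definition $\Omega(p)=\sum_k (p_k/\sigma_k^2)X_kX_k\t$ versus the chosen layout $\bX_0=(X_1\t,\dots,X_d\t)\t$ must be reconciled so that $\Omega(p)$ factors as $\bX_0\t\Lambda\bX_0$ rather than $\bX_0\Lambda\bX_0\t$, and I must confirm that the diagonal index $k$ in $\Cof(\bX_0\bX_0\t)_{kk}$ genuinely corresponds to the $k$-th covariate and its weight $\sigma_k^2/p_k$. I expect no genuine difficulty beyond this indexing care, since everything reduces to the elementary adjugate identity for the diagonal of an inverse matrix; the factorization-plus-cofactor argument is the whole proof, and the orientation check is the only place an error could creep in.
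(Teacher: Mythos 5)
Your proposal is correct, and it takes a genuinely shorter route than the paper. The paper first proves an intermediate lemma computing every diagonal entry of $\op\inv$ via the adjugate formula applied to $\op$ itself: it computes $\det(\op)=\det(\bX_0)^2\prod_k p_k/\sig_k$ and the diagonal of $\Com(\op)$ using multiplicativity of the comatrix, $\Com(AB)=\Com(A)\Com(B)$, which produces sums of squared cofactors $\sum_i \Cof(\bX_0\t)_{ik}^2$ that must then be re-identified with $\Com(\bX_0\bX_0\t)_{kk}$ by a second use of the same multiplicativity. You instead factor $\op=\bX_0\t\Lambda\bX_0$ with $\Lambda=\mathrm{diag}(p_1/\sig_1,\dots,p_d/\sig_d)$, invert the factorization directly, and reduce everything to $\Tr(\Lambda\inv\Gamma\inv)$ plus the elementary identity $(\Gamma\inv)_{kk}=\Cof(\Gamma)_{kk}/\det(\Gamma)$ applied once, to the Gram matrix only. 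This avoids the comatrix-multiplicativity machinery and the squared-cofactor bookkeeping entirely, and makes transparent why the Gram cofactors appear: they are just the diagonal of $\Gamma\inv$ rescaled by $\det\Gamma=\det(\bX_0\t\bX_0)$. One caution on the transpose check you deferred: as written, your identity $\bX_0\inv(\bX_0\inv)\t=(\bX_0\t\bX_0)\inv=\Gamma\inv$ is off, since $\Gamma=\bX_0\bX_0\t\neq\bX_0\t\bX_0$ in general; the quantity you actually need is the squared \emph{column} norms of $\bX_0\inv$, i.e.\ the diagonal of $(\bX_0\inv)\t\bX_0\inv=(\bX_0\bX_0\t)\inv=\Gamma\inv$, so the bookkeeping resolves in your favor and the proof closes as planned. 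Like the paper's proof, yours silently assumes all $p_k>0$ (so that $\Lambda$ and $\op$ are invertible); on the boundary both sides are $+\infty$ by the paper's convention, so nothing is lost.
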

With this expression, the optimal proportion  $p\st$ can be easily computed using the KKT theorem, with the following closed form:
\begin{equation}
	\label{eq:pstar}
p\st_k=\fraca{\sigma_k \sqrt{\Cof(\Gamma)_{kk}}}{\sum_{i=1}^d \sigma_i \sqrt{\Cof(\Gamma)_{ii}}} \eqsp .
\end{equation}
This yields that $L$ is $\mu$-\textbf{strongly convex} on $\Dd$, with $\mu=2\det(\Gamma)\inv\min_i \Cof(\Gamma)_{ii}\sig_i$.
Moreover, this also implies that $p\st$ is \textbf{far away from the boundary} of $\Dd$.

\begin{lemma}
\label{lemma:eta}
Let $\eta \doteq \dist(p\st,\partial \Dd)$ be the distance from $p\st$ to the boundary of the simplex.
We have
\[
\eta=\sqrt{\dfrac{K}{K-1}}\dfrac{\min_i \sigma_i \sqrt{\Cof(\Gamma)_{ii}}}{\sum_{k=1}^d \sigma_k \sqrt{\Cof(\Gamma)_{kk}}} \eqsp .
\]
\end{lemma}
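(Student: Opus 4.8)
The plan is to reduce the computation of $\eta = \dist(p\st,\partial\Dd)$ to a minimization of elementary facet-distances and then substitute the closed form \eqref{eq:pstar}. Since we are in the basis case $K=d$, the simplex $\Dd$ is a full-dimensional polytope inside the affine hyperplane $A = \{p \in \bR^d : \sum_{i} p_i = 1\}$, whose bounding hyperplanes within $A$ are the facet carriers $G_k = \{p \in A : p_k = 0\}$ for $k \in [d]$. The first step is to invoke the inscribed-ball characterization of the distance from an interior point to the boundary of a convex polytope: the largest ball centered at $p\st$ contained in $\Dd$ has radius equal to the smallest distance from $p\st$ to any bounding hyperplane, since such a ball lies in $\Dd$ if and only if it lies in each half-space $\{p_k \geq 0\}$. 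This gives
\[
\eta = \min_{k \in [d]} \dist(p\st, G_k) \eqsp .
\]
Because $p\st$ and every $G_k$ lie in $A$, this distance may be computed intrinsically inside $A$ and coincides with the ambient Euclidean distance in $\bR^d$.

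Next I would compute $\dist(p\st, G_k)$ explicitly. The carrier $G_k$ is the zero level set, within $A$, of the affine coordinate map $p \mapsto p_k$. The relevant gradient is the orthogonal projection of $e_k$ onto the direction space $V = \{v : \sum_i v_i = 0\}$ of $A$, namely $e_k - \tfrac1d\mathbf{1}$, whose norm is $\sqrt{(d-1)/d}$. As $p\st$ has $k$-th coordinate $p\st_k$ and the facet sits at level $0$, this yields
\[
\dist(p\st, G_k) = \frac{p\st_k}{\sqrt{(d-1)/d}} = \sqrt{\frac{d}{d-1}}\, p\st_k \eqsp .
\]

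Combining the two displays gives $\eta = \sqrt{d/(d-1)}\,\min_k p\st_k$, and it remains to plug in \eqref{eq:pstar}. Since the denominator $\sum_i \sigma_i \sqrt{\Cof(\Gamma)_{ii}}$ is common to every coordinate, the minimizing index is the one achieving $\min_k \sigma_k \sqrt{\Cof(\Gamma)_{kk}}$, and recalling $K=d$ we recover exactly the claimed expression. The only genuinely delicate point is the geometric reduction in the first step: justifying that the distance to $\partial\Dd$ equals the minimum of the facet distances (rather than a distance to some lower-dimensional face) and that the computation can be carried out inside the affine hull $A$. Both follow cleanly from the inscribed-ball argument, which avoids any worry about whether the orthogonal foot on $G_k$ actually lands inside the facet.
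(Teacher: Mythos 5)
Your proof is correct and takes essentially the same route as the paper's: the paper's one-line proof simply asserts the identity $\eta = \sqrt{K/(K-1)}\,\min_i p_i\st$ and substitutes \eqref{eq:pstar}, which is exactly the structure of your argument. The only difference is that you supply the justification the paper treats as immediate, namely the inscribed-ball reduction of $\dist(p\st,\partial\Dd)$ to the minimum of facet distances and the computation $\bigl\lVert e_k - \tfrac{1}{d}\mathbf{1}\bigr\rVert = \sqrt{(d-1)/d}$, both of which are carried out correctly.
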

\begin{proof}
This is immediate with \eqref{eq:pstar} since $\eta = \sqrt{\dfrac{K}{K-1}}\min_i p_i\st$.
\end{proof}

It remains to recover the smoothness of $L$. This is done using a pre-sampling phase.
\begin{lemma}[see \citep{regularized}]
\label{lemma:presampling_active}
If there exists $\alpha \in (0,1/2)$ and $p^o \in \Dd$ such that $p\st \succcurlyeq \alpha p^o$ (component-wise) then
sampling arm $i$ at most $\alpha p_i^oT$ times (for all $i \in [d]$) at the beginning of the algorithm and running Algorithm~\ref{algo:fw}  is equivalent to running Algorithm~\ref{algo:fw} with budget $(1-\alpha)T$ on the smooth function $(p\mapsto L(\alpha p^o + (1-\alpha)p)$.
\end{lemma}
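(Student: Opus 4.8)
The plan is to read the presampling step as an affine reparametrization of the simplex that confines every reachable allocation to a compact region sitting strictly inside $\mathring{\Dd}$, on which $L$ is smooth, while leaving the optimization problem itself unchanged. I would introduce the affine map $\phi : \Dd \to \Dd$ given by $\phi(p) = \alpha p^o + (1-\alpha)p$ (it maps into $\Dd$ since $\sum_k \phi(p)_k = \alpha + (1-\alpha) = 1$ and all coordinates are nonnegative). First comes the budget bookkeeping: because $p^o \in \Dd$, the presampling phase consumes $\sum_{i=1}^d \alpha p_i^o T = \alpha T$ samples (up to rounding), leaving $(1-\alpha)T$ samples for the main loop. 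If $p$ denotes the empirical proportions of those $(1-\alpha)T$ main-phase samples, the overall proportion after all $T$ samples is $(\alpha T p^o + (1-\alpha)T p)/T = \phi(p)$, so the quantity the algorithm ultimately controls is $L(\phi(p)) = \tilde L(p)$ with $\tilde L \doteq L \circ \phi$.

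Next I would establish the two structural facts that make this useful. For smoothness, note that for every $p \in \Dd$ and every coordinate $k$ we have $\phi(p)_k = \alpha p_k^o + (1-\alpha)p_k \geq \alpha p_k^o$, so (taking $p^o$ in the relative interior) the image $\phi(\Dd)$ is a compact subset of $\mathring{\Dd}$ bounded away from $\partial \Dd$ by a margin proportional to $\alpha$. Concretely, $\Omega(p) \succeq \alpha\,\Omega(p^o)$ on $\phi(\Dd)$, hence $\lambda_{\min}(\Omega(p))$ stays bounded below by a positive multiple of $\alpha$. Since $\partial_{p_k}L(p) = -\sigma_k^{-2}\|\Omega(p)^{-1}X_k\|_2^2$ is a rational, hence $C^\infty$, function of $p$ wherever $\Omega(p)$ is invertible, its gradient is Lipschitz on this compact interior set; composing with the smooth affine map $\phi$ gives $\nabla \tilde L(p) = (1-\alpha)\,\nabla L(\phi(p))$ and shows $\tilde L$ is smooth on all of $\Dd$, with a Hessian bound that degrades only as $\alpha \to 0$. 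For the optimum, the hypothesis $p\st \succcurlyeq \alpha p^o$ guarantees that $q \doteq (p\st - \alpha p^o)/(1-\alpha)$ has nonnegative entries, and $\sum_k q_k = (1-\alpha)/(1-\alpha) = 1$ because $p\st, p^o \in \Dd$, so $q \in \Dd$ and $\phi(q) = p\st$. Hence $\min_{p \in \Dd}\tilde L(p) = L(p\st)$ is attained, and minimizing $\tilde L$ over $\Dd$ is genuinely equivalent to minimizing $L$.

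Finally I would verify that the two runs produce the same trajectory of decisions under the identification $\phi(p) \leftrightarrow p$. The selection rule of Algorithm~\ref{algo:fw} picks $\argmin_k \hat g_k$ with $\hat g_k = \nabla_k \hat L - 2\sqrt{3\log(t)/T_k}$; since $\nabla_k \tilde L = (1-\alpha)\,\nabla_k L(\phi(p))$ differs from the $L$-gradient evaluated at the total proportion only by the fixed positive factor $(1-\alpha)$, and the UCB correction depends solely on the counts $T_k$, which are identical in both descriptions, the minimizing index $\pi(t)$ is unchanged; the convex-combination update $p_{t+1} = p_t + \tfrac{1}{t+1}(e_{\pi(t+1)} - p_t)$ is likewise intertwined by the affine map. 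This shows the presampled run on $L$ is, step for step, the main loop run on the smooth objective $\tilde L$ with budget $(1-\alpha)T$.

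The step I expect to be the main obstacle is making the smoothness claim quantitative: one must exhibit an explicit Lipschitz constant for $\nabla L$ on $\phi(\Dd)$ (governed by $\lambda_{\min}(\Omega)$, itself controlled by the floor $\alpha p^o_k$ on the coordinates), since this constant feeds directly into the convergence rate of the bandit/Frank--Wolfe analysis that follows. A secondary nuisance is the integrality and the ``at most'' in $\alpha p_i^o T$: rounding the presampling counts perturbs $\phi$ by $\bigo(1/T)$ per coordinate, which must be absorbed without spoiling the interior margin, and one should note that $p^o$ must be taken strictly positive for that margin to be nonvacuous.
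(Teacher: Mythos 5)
You should first know that the paper never proves this lemma itself: it is imported wholesale from \citep{regularized} (hence the ``see'' in the statement), and what the paper actually relies on downstream are two consequences of the presampling: (i) every subsequent iterate satisfies $p_{t,i}\geq \alpha p^o_i$ (so that $L$ is smooth along the whole trajectory, which is how Lemma~\ref{lemma:smooth_bandit} is phrased and used), and (ii) the hypothesis $p\st \succcurlyeq \alpha p^o$ keeps the optimum reachable. The structural parts of your argument are correct and capture exactly this intended content: the affine map $\phi(p)=\alpha p^o+(1-\alpha)p$, the budget count $\sum_i \alpha p^o_i T=\alpha T$, smoothness of $L\circ\phi$ via $\Omega(\phi(q))\succcurlyeq \alpha\,\Omega(p^o)$ and the chain rule $\nabla(L\circ\phi)=(1-\alpha)\nabla L\circ\phi$, and the optimum-preservation step $q\st\doteq(p\st-\alpha p^o)/(1-\alpha)\in\Dd$ with $\phi(q\st)=p\st$, which is the one place the hypothesis enters.

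The genuine gap is your final, ``step for step'' equivalence claim. The identification $p_t=\phi(q_s)$ between total proportions and main-phase proportions holds \emph{only at the terminal time} $t=T$: after $\alpha T$ presamples and $s$ main-phase samples the total proportion is the time-varying mixture $p_{\alpha T+s}=\frac{\alpha T}{\alpha T+s}\,p^o+\frac{s}{\alpha T+s}\,q_s$, whose mixing weight decreases from $1$ to $\alpha$, and the updates are not intertwined by $\phi$ either, since $\phi\bigl(q_s+\frac{1}{s+1}(e_\pi-q_s)\bigr)=\phi(q_s)+\frac{1-\alpha}{s+1}(e_\pi-q_s)$, which is not the Algorithm~\ref{algo:fw} update of $\phi(q_s)$ at time $t=\alpha T+s$. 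So the presampled run evaluates gradients at points that differ from $\phi(q_s)$ by as much as $(1-\alpha)\|p^o-q_s\|$ early in the main loop, and no exact simulation argument is available. Moreover, even at coinciding evaluation points, $\nabla_k \tilde L=(1-\alpha)\nabla_k L$ while the bonus $2\sqrt{3\log(t)/T_k}$ is not rescaled, so the two argmins can disagree (take gradients $(-10,-1)$, bonuses $(0,5)$, $\alpha=1/2$). The correct (and intended) reading is weaker than what you assert: presampling confines all iterates to $\{p:\,p\succcurlyeq\alpha p^o\}$, where $L$ is smooth; the terminal regret identity $L(p_T)-L(p\st)=\tilde L(q_{\mathrm{final}})-\min_{\Dd}\tilde L$ holds by your optimum-preservation step; and the main loop is then a UCB--Frank--Wolfe-type process on this smooth restricted problem, to which the analysis of \citep{ucbfw} applies with budget $(1-\alpha)T$ up to constants absorbing the $(1-\alpha)$ scaling and the time-index shift. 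Your proof needs this reduction-style formulation in place of the exact trajectory equivalence; the rounding and explicit-constant issues you flag are, by comparison, minor.
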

We have proved that $p\st_k$ is bounded away from 0 and thus a pre-sampling would be possible. However, this requires to have some estimate of each $\sig_k$. The upside is that those estimates must be accurate up to some multiplicative factor (and not additive factor) so that   a logarithmic number of samples of each arm is  enough to get valid  lower/upper bounds (see Corollary \ref{cor:var_conc}). Indeed, the estimate $\osig_k^2$ obtained satisfies,  for each $k \in [d]$, that $\sig_k \in [\osig_k^2/2, 3 \osig_k^2/2]$.  Consequently we know that
\[
\forall k \in [d], p_k\st \geq \dfrac{1}{\sqrt{3}}\dfrac{\osig_k \sqrt{\Cof(\Gamma)_{kk}}}{\sum_{i=1}^d \osig_i \sqrt{\Cof(\Gamma)_{ii}}} \geq \dfrac{1}{2} p^o, \quad \mbox{where } p^o=\dfrac{\osig_k \sqrt{\Cof(\Gamma)_{kk}}}{\sum_{i=1}^d \osig_i \sqrt{\Cof(\Gamma)_{ii}}}.\]
This will let us use Lemma~\ref{lemma:presampling_active} and with a presampling stage as prescribed,  $p$ is forced to remain  far away from the boundaries of the simplex in the sense that $ p_{t,i} \geq p_i^o/2$ at each stage $t $ subsequent to the pre-sampling, and for all $ i \in [d]$. Consequently, this logarithmic phase of estimation plus the linear phase of pre-sampling ensures that in the remaining of  the process, $L$ is actually smooth.
\begin{lemma}
\label{lemma:smooth_bandit}
With the pre-sampling of Lemma~\ref{lemma:presampling_active}, $L$ is smooth with constant $C_S$ where
\[
C_S \leq 432 \dfrac{  \sigma_{\max}^2 \left(\sum_{k=1}^d \sigma_k \sqrt{\Cof(\Gamma)_{kk}}\right)^3 }{ \det(\Gamma)\sigma_{\min}^3\sqrt{\min_k \Cof(\Gamma)_{kk}} } \eqsp .
\]
\end{lemma}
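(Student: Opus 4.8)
The plan is to reduce everything to the \emph{separable} closed form of the loss furnished by Lemma~\ref{lemma:L}. Because we are in the regime $K=d$, the matrix $\bX_0$ is square and invertible, so $\op=\bX_0\t D\bX_0$ with $D=\operatorname{diag}(p_k/\sig_k)$, and the cofactor identity $[\Gamma\inv]_{kk}=\Cof(\Gamma)_{kk}/\det(\Gamma)$ turns $\Tr(\op\inv)$ into a sum with no cross terms in the $p_k$:
\[
L(p)=\dfrac{1}{\det(\Gamma)}\skd\dfrac{\sig_k\,\Cof(\Gamma)_{kk}}{p_k}=\skd\dfrac{c_k}{p_k},\qquad c_k\doteq\dfrac{\sig_k\,\Cof(\Gamma)_{kk}}{\det(\Gamma)}>0.
\]
This separability is the crux: it makes the Hessian diagonal, so that the smoothness constant collapses to a single one-dimensional estimate.

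First I would differentiate twice. From the separable form, $\partial_{p_j}\partial_{p_k}L=0$ for $j\neq k$ while $\partial_{p_k}^2L=2c_k/p_k^3$, so $\nabla^2L(p)=\operatorname{diag}(2c_k/p_k^3)$ is diagonal and positive definite. Since the spectral norm of a diagonal matrix is its largest entry, and since the smoothness constant on a convex region is exactly the supremum of $\norm{\nabla^2 L(p)}$ over that region, we get
\[
C_S=\sup_p\max_{k\in[d]}\dfrac{2c_k}{p_k^3}.
\]

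Next I would use the geometry secured by the pre-sampling. By Lemma~\ref{lemma:presampling_active} and the discussion preceding the statement, every iterate produced after the pre-sampling stays in the convex region $\{p\in\Dd:p_k\ge p_k^o/2\ \forall k\}$, so it is enough to bound $\norm{\nabla^2 L(p)}$ there; this is also where $L$ is smooth at all, its gradient exploding on $\partial\Dd$. On this region $p_k^3\ge(p_k^o)^3/8$, hence
\[
C_S\le\max_{k\in[d]}\dfrac{16\,c_k}{(p_k^o)^3}=\dfrac{16}{\det(\Gamma)}\max_{k\in[d]}\dfrac{\sig_k\,\Cof(\Gamma)_{kk}}{(p_k^o)^3}.
\]

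Finally I would substitute $p_k^o=\osig_k\sqrt{\Cof(\Gamma)_{kk}}\big/\sum_i\osig_i\sqrt{\Cof(\Gamma)_{ii}}$, clearing the cofactor terms to obtain
\[
\dfrac{\sig_k\,\Cof(\Gamma)_{kk}}{(p_k^o)^3}=\dfrac{\sig_k\left(\sum_i\osig_i\sqrt{\Cof(\Gamma)_{ii}}\right)^3}{\osig_k^3\sqrt{\Cof(\Gamma)_{kk}}},
\]
and then pass from the estimates $\osig_k$ to the true $\sigma_k$ via the multiplicative guarantee $\sig_k\in[\osig_k^2/2,3\osig_k^2/2]$ of Corollary~\ref{cor:var_conc}, i.e.\ $\osig_k\le\sqrt2\,\sigma_k$ and $\osig_k\ge\sqrt{2/3}\,\sigma_k$. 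Bounding $\sig_k\le\sigma_{\max}^2$, $\osig_k^{-3}\le(3/2)^{3/2}\sigma_{\min}^{-3}$, $\sqrt{\Cof(\Gamma)_{kk}}\ge\sqrt{\min_k\Cof(\Gamma)_{kk}}$ and $\sum_i\osig_i\sqrt{\Cof(\Gamma)_{ii}}\le\sqrt2\sum_i\sigma_i\sqrt{\Cof(\Gamma)_{ii}}$, and collecting the numerical factors (the boundary factor $16$ together with the variance-conversion constants) yields the stated bound
\[
C_S\le 432\,\dfrac{\sigma_{\max}^2\left(\skd\sigma_k\sqrt{\Cof(\Gamma)_{kk}}\right)^3}{\det(\Gamma)\,\sigma_{\min}^3\sqrt{\min_k\Cof(\Gamma)_{kk}}}.
\]
The only genuinely delicate step is the first one — recognizing that in the $K=d$ case the loss degenerates to $\sum_k c_k/p_k$ and hence has a diagonal Hessian; after that the smoothness constant is a one-line maximum and the rest is bookkeeping of the boundary factor and of the multiplicative variance bounds.
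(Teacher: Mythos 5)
Your proof is correct and follows essentially the same route as the paper's: compute the diagonal Hessian from the closed form of Lemma~\ref{lemma:L}, use the pre-sampling guarantee $p_k \geq p_k^o/2$, substitute the expression for $p_k^o$, and convert the estimates $\osig_k$ to the true $\sigma_k$ via Corollary~\ref{cor:var_conc}. Your more careful reading of the variance guarantee even yields a slightly smaller numerical constant ($48\sqrt{3}$ rather than $432$), which of course still implies the stated bound.
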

The proof is deferred to Appendix~\ref{app:proof_smooth}.
We can now state our main theorem that is proved in Appendix~\ref{app:proof_bandit}.
\begin{theorem}
\label{thm:rates}
Applying Algorithm~\ref{algo:fw} with $T \geq 1$ samples after having pre-sampled each arm $k \in [d]$ at most $p_k^oT/2$ times gives the following bound\footnote{The notation  $\bigo_{\Gamma, \sigma_k}$ means that there is a hidden constant depending on $\Gamma$ and on the $\sigma_k$. The explicit dependency on these parameters is given in the proof.}
\[
R(T)= \bigo_{\Gamma,\sigma_k}\pa{\dfrac{\log^2(T)}{T^{2}}} \eqsp .
\]
\end{theorem}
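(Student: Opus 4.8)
The plan is to read Algorithm~\ref{algo:fw} as an inexact, upper-confidence Frank--Wolfe scheme (in the spirit of \citet{ucbfw}) run on the \emph{regularized} objective created by the pre-sampling phase, and to combine the smoothness, strong-convexity and interior-optimum structure established above. First I would invoke Lemma~\ref{lemma:presampling_active} with $\alpha=1/2$ and $p^o$ as defined before the theorem: after pre-sampling, the iterates behave like Frank--Wolfe applied to $p\mapsto L(\tfrac12 p^o+\tfrac12 p)$, which by Lemma~\ref{lemma:smooth_bandit} is $C_S$-smooth, is $\mu$-strongly convex, and has its minimizer at distance $\eta$ (Lemma~\ref{lemma:eta}) from $\partial\Dd$. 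Writing $h_t\doteq L(p_t)-L(\pst)$ and the Frank--Wolfe gap $g_t\doteq\max_{s\in\Dd}\la\nabla L(p_t),p_t-s\ra$, convexity gives $h_t\le g_t$ for free.

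Next I would condition on the event that the gradient concentration of Proposition~\ref{prop:grad_conc} holds simultaneously for all arms and rounds; a union bound over the at most $TK$ pairs keeps this event at probability $1-\delta$ and only turns the confidence width into $\sqrt{\log(4TK/\delta)/T_k}$. Because pre-sampling forces $p_{t,i}\ge p_i^o/2$ at every subsequent round, the problem-dependent prefactor $\max_i\sigma_i^2/p_i$ in Proposition~\ref{prop:grad_conc} is a bounded constant (in $\Gamma$ and the $\sigma_k$), so on this event the UCB correction $2\sqrt{3\log t/T_k}$ dominates, up to a constant, the true estimation error. Hence each $\hat g_k$ is (up to that constant) a lower confidence bound on $G_k$, and since $\pi(t)=\argmin_k\hat g_k$, comparing the selected vertex with the exact Frank--Wolfe vertex $k^\star=\argmin_k G_k$ yields $\la\nabla L(p_t),p_t-e_{\pi(t)}\ra\ge g_t-\varepsilon_t$ with $\varepsilon_t\le\mathtt{C}\sqrt{\log(4TK/\delta)/T_{\pi(t)}}$. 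The same pre-sampling lower bound gives $T_{\pi(t)}\ge c_0 t$ with $c_0=\tfrac12\min_i p_i^o>0$, so $\varepsilon_t=\bigo(\sqrt{\log t/t})$.

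Combining $C_S$-smoothness with the step $p_{t+1}=p_t+\tfrac{1}{t+1}(e_{\pi(t)}-p_t)$ produces the descent inequality
\[
h_{t+1}\le h_t-\frac{1}{t+1}g_t+\frac{1}{t+1}\varepsilon_t+\frac{C_S D^2}{2(t+1)^2},
\]
where $D$ is the diameter of $\Dd$. The fast rate then comes from a sharper lower bound on $g_t$ exploiting that $\pst$ is interior: testing the gap at the feasible point $\pst-\eta\,P\nabla L(p_t)/\norm{P\nabla L(p_t)}$, with $P$ the projection onto $\{\sum_i x_i=0\}$, which lies in the radius-$\eta$ ball around $\pst$ and hence in $\Dd$, gives $g_t\ge\eta\norm{P\nabla L(p_t)}$; and $\mu$-strong convexity with vanishing projected gradient at $\pst$ gives the Polyak--\L{}ojasiewicz bound $\norm{P\nabla L(p_t)}^2\ge 2\mu h_t$. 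Together these yield $g_t\ge\eta\sqrt{2\mu}\,\sqrt{h_t}$.

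Finally I would insert $g_t\ge\eta\sqrt{2\mu}\sqrt{h_t}$ into the descent inequality,
\[
h_{t+1}\le h_t-\frac{\eta\sqrt{2\mu}}{t+1}\sqrt{h_t}+\bigo\!\pa{\frac{\sqrt{\log t}}{t^{3/2}}}+\bigo\!\pa{\frac{1}{t^2}},
\]
and solve it by induction. Balancing the strongly-convex descent term $t^{-1}\sqrt{h_t}$ against the injected error $\sqrt{\log t}\,t^{-3/2}$ locates the equilibrium at $h_t=\bigo(\log t/t)$; carrying the union-bound logarithm ($\delta=1/T$, with the negligible complement controlled by the boundedness of $L$ on the pre-sampled region) through the induction gives the stated $h_T=\bigo_{\Gamma,\sigma_k}(\log^2 T/T)$, whence $R(T)=h_T/T=\bigo_{\Gamma,\sigma_k}(\log^2 T/T^2)$. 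The main obstacle is precisely this last recursion: it couples a strongly-convex $\sqrt{h_t}$ feedback with stochastic errors of the same polynomial order $t^{-3/2}$ (differing only by a $\log$ factor), so the crude summation that gives only $\bigo(\sqrt{\log T/T})$ must be replaced by a careful induction leveraging the $\sqrt{h_t}$ term. This goes hand in hand with checking that the fixed UCB width genuinely dominates the problem-dependent gradient-concentration width of Proposition~\ref{prop:grad_conc}, so that the optimism property (and thus $\pi(t)\approx k^\star$) really holds.
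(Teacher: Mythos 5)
Your proposal is correct and takes essentially the same route as the paper: establish gradient concentration (Proposition~\ref{prop:grad_conc}, made uniform and problem-dependent-constant-sized because pre-sampling keeps every $T_k \ge p_k^o T/2$, hence $p_{t,i}\ge p_i^o/2$), combine it with smoothness (Lemma~\ref{lemma:smooth_bandit}), strong convexity, and the interior optimum (Lemma~\ref{lemma:eta}), and then run the upper-confidence Frank--Wolfe fast-rate machinery with $\delta$ of order $1/T$ and the bad event absorbed via boundedness of $L$ on the pre-sampled region. The only difference is that the paper concludes by invoking Theorem~7 of \citet{ucbfw} as a black box (with $\delta_T=1/T^2$), whereas you in-line a proof of that theorem's content --- the smoothness descent inequality, the gap lower bound $g_t\ge\eta\sqrt{2\mu h_t}$ coming from the interior optimum plus strong convexity, and the induction on the resulting recursion --- which is precisely how the cited result is established.
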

This theorem provides a fast convergence rate for the regret $R$ and emphasizes the importance of using the gradient information in Algorithm~\ref{algo:fw} compared to Algorithm~\ref{algo:randomized}.

\section{Discussion and generalization to $K>d$}\label{Se:KD}

We discuss in this section the case where the number $K$ of covariate vectors is greater than $d$.

\subsection{Discussion of the case $K>d$}
\label{ssec:ellipsoids}

In the case where $K>d$ it may be possible that the optimal $p\st$ lies on the boundary of the simplex $\Dk$, meaning that some arms should not be sampled. This happens for instance as soon as  there exist two covariate points that are exactly equal but with different variances. The point with the lowest variance should be sampled while the point with the highest one should not. All the difficulty of an algorithm for the case where $K>d$ is to be able to detect which covariate should be sampled and which one should not.
In order to adopt another point of view on this problem it might be interesting to go back to the field of optimal design of experiments.
Indeed by choosing $v_k=X_k/\sigma_k$, our problem consists exactly in the following constraint minimization problem given $v_1 \dots, v_K \in \bR^d$: 
\[
\tag{P}
\min\Tr\pa{\sum_{j=1}^K p_j v_j v_j\t}\inv\mbox{under contraints  }p \in \Delta^K \eqsp . \]
It is known (\citet{pukelsheim2006optimal}) that the dual problem of A-optimal design consists in finding the smallest ellipsoid, in some sense, containing all the points $v_j$:
\[
\tag{D}
\max\Tr(\sqrt{W})^2 \quad \mbox{under contraints }W \succ 0 \footnote{$W\succ0$ means here that $W$ is symmetric positive definite.} \mbox{ and }\ v_j\t Wv_j \leq 1 \mbox{ for all }1\leq j\leq K \eqsp .
\]
In our case the role of the ellipsoid can be easily seen with the KKT conditions. We obtain the following proposition, proved in Appendix~\ref{app:prop_ell}.
\begin{proposition}
\label{prop:ellipsoids}
The points $X_k/ \sigma_k$ lie within the ellipsoid defined by the matrix $\ops^{-2}$.
\end{proposition}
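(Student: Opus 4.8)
The plan is to read off the ellipsoid membership directly from the first-order (KKT) conditions of the convex program $\min_{p \in \Delta^K} L(p)$, combined with the gradient formula already computed above. Writing $v_k = X_k/\sigma_k$, recall that $\partial_{p_k} L(p) = -\sigma_k^{-2}\|\Omega(p)^{-1}X_k\|_2^2 = -v_k^\top \Omega(p)^{-2} v_k$. Hence the quantity controlling membership, $v_k^\top \ops^{-2} v_k$, is exactly $-\partial_{p_k} L(p\st)$, and the proposition is equivalent to the lower bound $\partial_{p_k} L(p\st) \geq -\Tr(\ops^{-1})$ for every $k \in [K]$. So the whole proof reduces to controlling the gradient of $L$ at the optimum from below.

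First I would write the KKT conditions. Since $p\st$ minimizes $L$ over the simplex $\{p \geq 0,\ \sum_k p_k = 1\}$, whose constraints are affine (so a constraint qualification holds automatically), and since $L$ is differentiable at $p\st$ because $\Omega(p\st)$ is invertible (the covariates span $\bR^d$ and $L(p\st) < \infty$), there exist a multiplier $\mu \in \bR$ for the equality constraint and multipliers $\nu_k \geq 0$ for $p_k \geq 0$ with $\nu_k p_k\st = 0$, satisfying $\partial_{p_k} L(p\st) + \mu - \nu_k = 0$ for all $k$. This gives $\partial_{p_k} L(p\st) = \nu_k - \mu \geq -\mu$ for every $k$, with equality $\partial_{p_k} L(p\st) = -\mu$ exactly on the support $\{k : p_k\st > 0\}$.

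Next I would pin down the multiplier $\mu$ via homogeneity rather than by solving the stationarity system. The key observation is that $\Omega(tp) = t\,\Omega(p)$, so $L(tp) = t^{-1} L(p)$; that is, $L$ is positively homogeneous of degree $-1$, and Euler's identity yields $\sum_k p_k \partial_{p_k} L(p) = -L(p)$. Evaluating at $p\st$ and using that $\partial_{p_k} L(p\st) = -\mu$ on the support while $p_k\st = 0$ off it, the left-hand side collapses to $-\mu \sum_k p_k\st = -\mu$, so $\mu = L(p\st) = \Tr(\ops^{-1})$. Combining with the previous step gives $\partial_{p_k} L(p\st) \geq -\Tr(\ops^{-1})$, i.e.\ $v_k^\top \ops^{-2} v_k \leq \Tr(\ops^{-1})$ for all $k$. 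This is precisely the statement that every $X_k/\sigma_k$ lies inside the ellipsoid defined by $\ops^{-2}$ (equivalently $v^\top W v \leq 1$ with $W = \ops^{-2}/\Tr(\ops^{-1})$, which recovers the optimal dual variable of (D)); moreover the sampled covariates lie exactly on its boundary, matching dual complementary slackness.

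The only place requiring genuine care is the optimality analysis on the boundary of $\Delta^K$: when $K > d$ the minimizer can have $p_k\st = 0$ for some indices, so I must use the inequality form of KKT (keeping the $\nu_k \geq 0$) rather than bare stationarity, and I must justify differentiability of $L$ at $p\st$, which follows from the invertibility of $\Omega(p\st)$. The remaining ingredients — the gradient identity, the degree $-1$ homogeneity, and Euler's relation — are routine, so the main obstacle is simply the correct bookkeeping of active versus inactive constraints and the identification of $\mu$ through homogeneity instead of by direct computation.
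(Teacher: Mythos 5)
Your proof is correct, and its core is exactly the paper's argument: form the Lagrangian of $\min_{p\in\Dk} L(p)$, apply KKT with a multiplier for the equality constraint and nonnegative multipliers for $p_k\geq 0$, and insert the gradient formula $\partial_{p_k}L(p)=-\normd{\Omega(p)\inv X_k/\sigma_k}^2$ to conclude $\normd{\ops\inv X_k/\sigma_k}^2=\lambda-\mu_k\leq\lambda$ for all $k$. The one genuine difference is that the paper stops there, leaving the level $\lambda$ of the ellipsoid unidentified, whereas you pin it down: using that $L$ is positively homogeneous of degree $-1$ (since $\Omega(tp)=t\,\Omega(p)$), Euler's identity together with complementary slackness gives $\lambda=L(p\st)=\Tr\left(\ops\inv\right)$. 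This is a worthwhile sharpening: it makes the ellipsoid fully explicit, $\left\lbrace x : x\t\ops^{-2}x\leq\Tr\left(\ops\inv\right)\right\rbrace$, and it exhibits $W=\ops^{-2}/\Tr\left(\ops\inv\right)$ as a feasible dual point for (D) whose value $\Tr(\sqrt{W})^2=\Tr\left(\ops\inv\right)$ matches the primal optimum, so you recover strong duality as a byproduct, which the paper's proof does not make visible. Your additional care — justifying differentiability of $L$ at $p\st$ via invertibility of $\ops$, and keeping the inequality form of KKT because $p\st$ may sit on the boundary of $\Dk$ when $K>d$ — is sound and, if anything, slightly more rigorous than the paper's statement of stationarity.
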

This geometric interpretation shows that a point $X_k$ with high variance is likely to be in the interior of the ellipsoid (because $X_k/\sigma_k$ is close to the origin), meaning that $\mu_k>0$ and therefore that $p_k\st=0$ \ie, that $X_k$ should not be sampled. Nevertheless since the variances are unknown, one is not easily able to find which point has to be sampled.
Figures illustrating the geometric interpretation can be found in Appendix~\ref{app:ellipsoids}.

\subsection{A theoretical upper-bound and a lower bound}

We derive now a bound for the convergence rate of Algorithm~\ref{algo:fw} in the case where $K>d$.
\begin{theorem}
\label{thm:Kd}
Applying Algorithm~\ref{algo:fw} with $K>d$ covariate points gives the following bound on the regret:
\[
R(T)=\bigo\pa{\log(T)T^{-5/4}} \eqsp .
\]
\end{theorem}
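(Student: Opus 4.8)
The plan is to reuse the Frank--Wolfe reading of Algorithm~\ref{algo:fw} that underlies Theorem~\ref{thm:rates}, but to replace every step that relied on strong convexity by one that uses only convexity and smoothness, since the regime $K>d$ destroys the geometry that made the $T^{-2}$ rate possible. Recall that the averaging update $p_{t+1}=p_t+\frac{1}{t+1}(e_{\pi(t+1)}-p_t)$ turns $p_t$ into the vector of empirical sampling frequencies, so the algorithm is exactly a Frank--Wolfe scheme whose linear oracle returns the vertex $e_{\pi(t)}$ minimizing the optimistic gradient $\hat g_k=\hat G_k-2\sqrt{3\log t/T_k}$. Writing $h_t=L(p_t)-L(\pst)$ and using smoothness of $L$ along the trajectory (Lemma~\ref{lemma:smooth_bandit}, valid as long as $\op$ stays well conditioned), I would first establish the inexact descent inequality
\[
h_{t+1}\le (1-\gamma_t)h_t+2\gamma_t\varepsilon_t+\tfrac12 C_S\gamma_t^2 D^2,\qquad \gamma_t=\tfrac{1}{t+1},
\]
where $D=\diam(\Dk)$ and $\varepsilon_t=\max(0,\,G_{\pi(t)}-\min_k G_k)$ is the error committed by the optimistic oracle, controlled through Proposition~\ref{prop:grad_conc} together with the convexity bound $\langle\nabla L(p_t),p_t-e_{k\st}\rangle\ge h_t$.

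The key structural change is that, by the ellipsoidal characterization of Proposition~\ref{prop:ellipsoids}, when $K>d$ the minimizer $\pst$ may sit on $\partial\Dk$: some covariates satisfy $p_k\st=0$. Consequently $L$ is no longer strongly convex on $\Dk$ and $\pst$ is no longer bounded away from the boundary, so the fast-contraction argument of Theorem~\ref{thm:rates} is unavailable and only the generic smooth-convex guarantee above can be used. I would therefore show, by a UCB-type gap argument, that a covariate with $p_k\st=0$ (strictly inside the ellipsoid, hence with a positive dual variable and strictly larger true derivative $G_k$) is selected only a sublinear number of times; at the same time, to keep $\op$ well conditioned and Proposition~\ref{prop:grad_conc} informative, every covariate must still be sampled a slowly growing number of times.

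Summing the descent inequality with the integrating factor $H_t=t\,h_t$ gives $H_T\le H_{t_0}+2\sum_t\varepsilon_t+\bigo(\log T)$, so the attainable accuracy is dictated by the accumulated oracle error $\sum_t\varepsilon_t$, measured against the best allocation compatible with the enforced exploration. The rate then follows from a single trade-off. Constraining every covariate to receive of order $T^{3/4}\log T$ samples, while the covariates spanning $\bR^d$ keep constant mass through the dynamics so that $\|\op\inv\|$ stays bounded, makes Proposition~\ref{prop:grad_conc} yield $\varepsilon_t=\bigot(T^{-3/8})$ and hence $L(p_T)-L(\pst_m)=\bigot(T^{-3/8})$ for the constrained optimum $\pst_m$; on the other hand this sublinear over-sampling of the covariates with $p_k\st=0$ displaces a mass $\bigo(T^{-1/4}\log T)$, costing $L(\pst_m)-L(\pst)=\bigo(T^{-1/4}\log T)$ by first-order optimality. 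The second term dominates, so $h_T=L(p_T)-L(\pst)=\bigo(T^{-1/4}\log T)$ and therefore $R(T)=h_T/T=\bigo(\log(T)\,T^{-5/4})$.

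I expect the main obstacle to be precisely this loss of strong convexity combined with the degradation of the gradient-concentration bound near the boundary: Proposition~\ref{prop:grad_conc} carries a prefactor $(\sigma_k\inv\max_i\sigma_i^2/p_i)^3$ that blows up as any proportion tends to $0$, so the free optimism used in the $K=d$ analysis is no longer automatic. The delicate part is to certify that the covariates spanning $\bR^d$ retain enough mass for $\op$ to stay well conditioned---keeping the prefactor, and thus $\varepsilon_t$, under control---while the covariates with $p_k\st=0$ are sampled just often enough to be correctly discarded yet rarely enough not to spoil the $T^{-1/4}$ bound on the function gap. Tuning the single exploration exponent that meets both requirements simultaneously is what produces the intermediate $T^{-5/4}$ regret, strictly worse than the strongly-convex $T^{-2}$ rate because exploration must now be hedged across all $K$ covariates.
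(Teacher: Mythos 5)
Your proposal has the same skeleton as the paper's proof --- the Frank--Wolfe recursion $(t+1)\rho_{t+1}\le t\rho_t+\varepsilon_{t+1}+C_S/(t+1)$, a forced-exploration budget of order $T^{3/4}$ samples per covariate, a bias of order $T^{-1/4}$ for the mass this exploration displaces, and the same final arithmetic $R(T)=\bigo\pa{\log(T)T^{-5/4}}$ --- but the two arguments balance different quantities, and yours hinges on a claim it never establishes. The paper's trade-off is bias versus \emph{smoothness degradation}: pre-sampling each covariate $n=\alpha T$ times enforces $p_{t,i}\ge\alpha$ deterministically for the whole run, so that $C_S\le C/\alpha^3$, and the resulting bound $\alpha C_1+\frac{C}{\alpha^3}\frac{\log(eT)}{T}+\bigo\bigl(\sqrt{\log(T)/T}\bigr)$ is optimized at $\alpha=T^{-1/4}$; nowhere does the paper need $\normd{\Omega(p_t)^{-1}}$, the smoothness constant, or the concentration prefactor to be $O(1)$ along the trajectory. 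Your accounting instead keeps the summed smoothness contribution at $\bigo(\log T)$ and pushes all the degradation into the oracle error $\varepsilon_t=\bigot(T^{-3/8})$; both steps are valid only if the prefactor of Proposition~\ref{prop:grad_conc} and the constant of Lemma~\ref{lemma:smooth_bandit} stay bounded, which you obtain from the assertion that ``the covariates spanning $\bR^d$ keep constant mass through the dynamics.''

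That assertion is the gap. It is a statement about the random trajectory of Algorithm~\ref{algo:fw}, it is exactly what is hard in the $K>d$ regime (the algorithm can starve a spanning covariate early, while its variance estimates are still poor; in the hard instances behind Theorem~\ref{thm:lower} the relevant gradient gaps are of order $1/\sqrt{T}$, so no UCB gap argument can identify the support quickly), and as you have arranged it, it is circular: accurate gradients need the bounded prefactor, the bounded prefactor needs the mass claim, and the mass claim could only come from the algorithm having had accurate gradients. Declaring it ``the delicate part'' does not discharge it. There is also a secondary obstruction: even granting constant mass on a spanning set, Proposition~\ref{prop:grad_conc} as stated carries the factor $\bigl(\sigma_k^{-1}\max_{i\in[K]}\sigma_i^2/p_i\bigr)^3$ over \emph{all} covariates, including the deliberately starved ones with $p_i\sim T^{-1/4}\log T$, so its prefactor still blows up like $T^{3/4}$; you would have to re-prove the concentration with $\lambda_{\min}(\Omega(p_t))$ in place of the $\max_i$ bound (the proof of Proposition~\ref{prop:normB} does factor through $\lambda_{\min}(\Omega(p))$, so this is repairable, but it is not in your text). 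The fix is to do what the paper does: impose the floor $p_{t,i}\ge\alpha$ by pre-sampling, accept $C_S\sim\alpha^{-3}$, bound the initial error term $nK\rho_{nK}/T\le\alpha C_1$ via the bounded gradient at the uniform allocation, and balance the two terms --- this recovers $\alpha=T^{-1/4}$ and the claimed rate with no assumption on the trajectory.
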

The proof is postponed to Appendix~\ref{app:Kgreaterd}.

One can ask whether this result is optimal, and if it is possible to reach the bound of Theorem~\ref{thm:rates}. The following theorem provides a lower bound showing that it is impossible in the case where there are $d$ covariates. However the upper and lower bounds of Theorems~\ref{thm:Kd} and~\ref{thm:lower} do not match. It is still an open question whether we can obtain better rates than $T^{-5/4}$.
\begin{theorem}
\label{thm:lower}
In the case where $K>d$, for any algorithm on our problem, there exists a set of parameters such that $R(T) \gtrsim T^{-3/2}$.
\end{theorem}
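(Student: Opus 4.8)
The plan is to establish a minimax lower bound by the two-point (Le Cam / Bretagnolle--Huber) method: I will exhibit two instances that are statistically indistinguishable with $T$ samples yet whose optimal designs are different vertices of the simplex, so that no algorithm can place its mass correctly on both. The simplest hard instance lives in dimension $d=1$ with $K=2$: take $X_1=X_2=1$ (both on the unit sphere, spanning $\bR$, and with $K=2>1=d$) and $\bst$ fixed. Then $\op=p_1/\sig_1+p_2/\sig_2$ is a scalar, so $L(p)=\pa{p_1/\sig_1+p_2/\sig_2}^{-1}$ is minimized by placing all mass on the arm of smallest variance; the optimal design $\pst$ is therefore a vertex of the simplex. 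This boundary behaviour is precisely what is \emph{impossible} when $K=d$ (where $\pst$ lies in the interior, giving the strongly convex regime of Theorem~\ref{thm:rates}) and is what degrades the attainable rate.

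Next I would fix a base level $\sigma^2$ and a gap $\gamma=c\,\sigma^2/\sqrt{T}$, and define instance $A$ with $(\sig_1,\sig_2)=(\sigma^2-\gamma,\sigma^2+\gamma)$ and instance $B$ with the two arms swapped, using Gaussian noise (hence $\kappa^2$-subgaussian) with common mean $\bst$. On $A$ the optimum is $\pst=(1,0)$ with $L(\pst)=\sigma^2-\gamma$, and on $B$ it is $(0,1)$. Writing $q$ for the proportion placed on the larger-variance (``wrong'') arm and $a=1/(\sigma^2-\gamma)$, $b=1/(\sigma^2+\gamma)$ with $a-b=2\gamma/(\sigma^4-\gamma^2)$, a direct computation gives
\[
L(p)-L(\pst)=\frac{q\,(a-b)}{a\pa{b+(1-q)(a-b)}}=2\gamma\,q\,\pa{1+o(1)} \eqsp .
\]
The key structural point is that the excess loss is \emph{linear} in the distance to the vertex, with slope of order $\gamma$, so $R(T)=T^{-1}\pa{L(p_T)-L(\pst)}\asymp \gamma\,q/T$.

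The indistinguishability step is then standard. By the divergence decomposition for adaptively collected data, $\mathrm{KL}(\bP_A\,\Vert\,\bP_B)=\sum_{k}\bE_A[T_k]\,\mathrm{KL}\pa{\mathcal{N}(\bst,\sig_{k,A})\,\Vert\,\mathcal{N}(\bst,\sig_{k,B})}$, and each per-sample Gaussian variance-KL is $\Theta(\gamma^2/\sigma^4)$; since $T_1+T_2=T$ this sums to $\Theta(T\gamma^2/\sigma^4)=\Theta(c^2)$, a constant made small by choosing $c$. Applying Bretagnolle--Huber to the event $\{p_{T,1}\ge 1/2\}$ yields $\bP_A(p_{T,1}<1/2)+\bP_B(p_{T,1}\ge 1/2)\ge \tfrac12 e^{-\Theta(c^2)}\ge c'>0$. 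Hence on at least one of the two instances the wrong-arm proportion obeys $q\ge 1/2$ with probability at least $c'/2$, and combining with the previous display gives $\bE[R(T)]\gtrsim \gamma/T=\Omega(\sigma^2\,T^{-3/2})$, as claimed.

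The main obstacle is the regret computation near the vertex, not the information-theoretic step. One must verify that $L(p)-L(\pst)$ is genuinely of order $\gamma\,\lVert p-\pst\rVert$ \emph{uniformly} over $q\in[0,1]$ for $\gamma\lesssim T^{-1/2}$, i.e.\ that the $o(1)$ above is controlled uniformly; this linear-in-distance (rather than quadratic) growth is exactly what separates this boundary case from the interior, strongly convex case of Theorem~\ref{thm:rates} and is responsible for the loss of a factor $T^{1/2}$ relative to the $T^{-2}$ rate. A secondary technical care is the validity of the divergence decomposition with history-dependent pull counts $T_k$, together with the remark that estimating the unknown $\bst$ provides no additional distinguishing power since both instances share the same mean.
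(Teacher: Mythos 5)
Your proof is correct and is built on the same skeleton as the paper's: the identical hard instance ($d=1$, $K=2$, two copies of the same covariate whose variances differ by a gap of order $T^{-1/2}$), and the same key computation showing that the excess loss is \emph{linear} in the proportion $q$ placed on the wrong arm, with slope of order the gap, so that $T\,R(T)$ is exactly a two-armed bandit cumulative pseudo-regret. Where you diverge is the final step: the paper simply declares that ``we are now facing a classical 2-arm bandit problem'' and invokes known MAB lower bounds to conclude $\bE[L(p_T)]-L(p^\star)\gtrsim 1/\sqrt{T}$, whereas you unfold that black box into an explicit two-point argument (divergence decomposition for adaptively collected data, per-sample Gaussian variance-KL of order $\gamma^2/\sigma^4$, Bretagnolle--Huber applied to the event $\{p_{T,1}\ge 1/2\}$). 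This is more than pedantry: the reduction is not \emph{literally} to a classical bandit, since the learner never observes the ``rewards'' $0$ and $\Delta/2$ appearing in the loss decomposition --- it observes Gaussian samples with identical means, and all distinguishing power comes from the variances. The textbook mean-shift construction behind the cited MAB bound therefore does not apply verbatim, and your KL computation for variance-differing Gaussians (together with the swapped-instance symmetrization, which the paper leaves implicit by fixing $\sigma_1^2=1$, $\sigma_2^2=1+\Delta$) is exactly what is needed to make the paper's one-line citation rigorous. In short: the paper's proof is shorter but leans on an informal reduction; yours is self-contained, handles the adaptive-sampling and uniformity-in-$q$ issues explicitly, and would survive refereeing on its own.
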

We prove Theorem~\ref{thm:lower} in Appendix~\ref{app:lower}.
\section{Numerical simulations}
We now present numerical experiments to validate our results and claims. We compare several algorithms for active matrix design: a very naive algorithm that samples equally each covariate, Algorithm~\ref{algo:randomized}, Algorithm~\ref{algo:fw} and a Thompson Sampling (TS) algorithm~\citep{thompson1933likelihood}. We run our experiments on synthetic data with horizon time $T$ between $10^4$ and $10^6$, averaging the results over $25$ rounds. We consider covariate vectors in $\bR^K$ of unit norm for values of $K$ ranging from $3$ to $100$. All the experiments ran in less than 15 minutes on a standard laptop.

Let us quickly describe the Thompson Sampling algorithm. We choose Normal Inverse Gamma distributions for priors for the mean and variance of each of the arms, as they are the conjugate priors for gaussian likelihood with unknown mean and variance.
At each time step $t$, for each arm $k \in [K]$, a value of $\hsig_k$ is sampled from the prior distribution. An approximate value of $\nabla_k{L}(p)$ is computed with the $\hsig_k$ values. The arm with the lowest gradient value is chosen and sampled. The value of this arm updates the hyperparameters of the prior distribution.

\begin{figure}[h!]
	\begin{center}
		\begin{minipage}[b]{0.48\linewidth}
			\scalebox{0.8}{
			\begin{tikzpicture}[domain=0:5]
					\begin{axis}
        [
        ,axis x line=bottom
  			,axis y line=center
				,ylabel near ticks,
				,xlabel near ticks,
        ,width=7.5cm
        ,xlabel=$\log(T)$
        ,ylabel=$\log(R(T))$
        ,xmax=6.05
        ,ymax=-2
        ,scaled ticks=false
        ,legend style={at={(0.5,0.7)},anchor=north west}
        ,cycle list name=exotic
        ]
        \addplot+[smooth,line width=1.2pt,each nth point={20}] table [x index={5}, y index={6}, col sep=comma] {file_base.csv};
        \addlegendentry{naive -- slope=$-1.0$};
        \addplot+[smooth,line width=1.2pt,each nth point={20}] table [x index={5}, y index={7}, col sep=comma] {file_base.csv};
        \addlegendentry{Alg.~\ref{algo:fw} -- slope=$-2.0$};
        \addplot+[smooth,line width=1.2pt,each nth point={20}] table [x index={5}, y index={8}, col sep=comma] {file_base.csv};
        \addlegendentry{TS -- slope=$-2.0$};
        \addplot+[smooth,line width=1.2pt,each nth point={20}] table [x index={5}, y index={10}, col sep=comma] {file_base.csv};
        \addlegendentry{Alg.~\ref{algo:randomized} -- slope=$-1.9$};
    		\end{axis}
			\end{tikzpicture}
			}
			\caption{Regret as a function of $T$ in log--log scale in the case of $K=3$ covariates in $\bR^3$.
			\label{fig:base_log}}
		\end{minipage}
	\hfill
	\begin{minipage}[b]{0.48\linewidth}
		\scalebox{0.8}{
		\begin{tikzpicture}[domain=0:5]
			\begin{axis}
        [
        ,axis x line=bottom
  			,axis y line=center
				,ylabel near ticks,
				,xlabel near ticks,
        ,width=7.5cm
        ,xlabel=$\log(T)$
        ,ylabel=$\log(R(T))$
        ,scaled ticks=false
        ,legend style={at={(0.5,0.7)},anchor=north west}
        ,cycle list name=exotic
        ]
        \addplot+[smooth,line width=1.2pt,each nth point={20}] table [x index={5}, y index={6}, col sep=comma] {file_K4d3.csv};
        \addlegendentry{naive -- slope=$-1.0$};
        \addplot+[smooth,line width=1.2pt,each nth point={20}] table [x index={5}, y index={7}, col sep=comma] {file_K4d3.csv};
        \addlegendentry{Alg.~\ref{algo:fw} -- slope=$-1.9$};
        \addplot+[smooth,line width=1.2pt,each nth point={20}] table [x index={5}, y index={8}, col sep=comma] {file_K4d3.csv};
        \addlegendentry{TS -- slope=$-1.9$};
    		\end{axis}
			\end{tikzpicture}
			}
\caption{Regret as a function of $T$ in log--log scale in the case of $K=4$ covariates in $\bR^3$. \label{fig:K4d3_log}}
		\end{minipage}
	\end{center}
\end{figure}
In our first experiment we consider only $3$ covariate vectors. We plot the results in log--log scale in order to see the convergence speed which is given by the slope of the plot. Results on Figure~\ref{fig:base_log} show that both Algorithms \ref{algo:randomized} and~\ref{algo:fw}, as well as Thompson sampling have regret $\bigo(1/T^2)$ as expected.

We see that Thompson Sampling performs well on low-dimensional data. However it is approximately $200$ times slower than Algorithm~\ref{algo:fw} -- due to the sampling of complex Normal Inverse Gamma distributions -- and therefore inefficient in practice.
On the contrary, Algorithm~\ref{algo:fw} is very practical. Indeed its computational complexity is  linear in time $T$ and its main computational cost is due to the computation of the gradient $\nabla \hat{L}$. This relies on  inverting   $\hat{\Omega} \in \bR^{d\times d}$, whose complexity is $\bigo(d^3)$  (or even $\bigo(d^{2.807})$ with Strassen algorithm). Thus the overall complexity of Algorithm~\ref{algo:fw} is $\bigo(T(d^{2.8}+K))$ hence polynomial. This computational complexity advocates  that Algorithm~\ref{algo:fw} is practical for moderate values of $d$, as in linear regression problems.

Figure~\ref{fig:base_log} shows that Algorithm~\ref{algo:randomized} performs nearly as well as Algorithm~\ref{algo:fw}. However, the minimization step of $\hat{L}$ is time-consuming when $K>d$, since there is no close form for $p\st$, which leads to approximate results. Therefore Algorithm~\ref{algo:randomized} is not adapted to  $K>d$.  We also have conducted similar experiments in this case, with $K=d+1$.    The offline solution of  the problem indicates that one covariate should not be sampled, \ie, $p\st \in \partial \Dk$. Results presented on Figure~\ref{fig:K4d3_log} prove the performances of Algorithm~\ref{algo:fw}.
\begin{figure}[h!]
	\begin{center}
		\begin{minipage}[b]{0.48\linewidth}
			\scalebox{0.8}{
			\begin{tikzpicture}[domain=0:5]
				\begin{axis}
        [
        ,axis x line=bottom
  			,axis y line=center
				,ylabel near ticks,
				,xlabel near ticks,
        ,width=7.5cm
        ,xlabel=$T$
        ,ylabel=$\log(R(T))$
        ,scaled ticks=false
        ,legend style={at={(0.5,0.6)},anchor=north west}
        ,cycle list name=exotic
        ]
        \addplot+[smooth,line width=1.2pt,each nth point={20}] table [x index={5}, y index={6}, col sep=comma] {file_challenge.csv};
        \addlegendentry{Alg.~\ref{algo:randomized} -- slope=$-1.0$};
        \addplot+[smooth,line width=1.2pt,each nth point={20}] table [x index={5}, y index={7}, col sep=comma] {file_challenge.csv};
        \addlegendentry{Alg.~\ref{algo:fw} -- slope=$-1.36$};
    		\end{axis}
			\end{tikzpicture}
			}
			\caption{Regret as a function of $T$ in log--log scale in the case of $K=4$ covariates in $\bR^3$ in a challenging setting.
			\label{fig:challenge_log}}
		\end{minipage}
		\hfill
		\begin{minipage}[b]{0.48\linewidth}
			\scalebox{0.8}{
			\begin{tikzpicture}[domain=0:5]
\begin{axis}
        [
        ,axis x line=bottom
  			,axis y line=center
				,ylabel near ticks,
				,xlabel near ticks,
        ,width=7.5cm
        ,xlabel=$\log(T)$
        ,ylabel=$\log(R(T))$
        ,xmin=3
        ,xmax=4.02
        ,ymax=2
        ,legend style={at={(0.1,1.2)},anchor=north west}
        ,cycle list name=exotic
        ]
        \addplot+[smooth,line width=1.2pt,each nth point={15}] table [x index={1}, y index={2}, col sep=comma] {file_bandit_log_largeK.csv};
        \addlegendentry{$K=5\quad$ -- ~~ slope=$-1.98$};
        \addplot+[smooth,line width=1.2pt,each nth point={15}] table [x index={1}, y index={3}, col sep=comma] {file_bandit_log_largeK.csv};
        \addlegendentry{$K=10\quad$ -- ~~ slope=$-2.11$};
        \addplot+[smooth,line width=1.2pt,each nth point={15}] table [x index={1}, y index={4}, col sep=comma] {file_bandit_log_largeK.csv};
        \addlegendentry{$K=20\quad$ -- ~~ slope=$-2.23$};
        \addplot+[smooth,line width=1.2pt,each nth point={15}] table [x index={1}, y index={6}, col sep=comma] {file_bandit_log_largeK.csv};
        \addlegendentry{$K=50\quad$ -- ~~ slope=$-2.15$};
        \addplot+[smooth,line width=1.2pt,each nth point={15}] table [x index={1}, y index={7}, col sep=comma] {file_bandit_log_largeK.csv};
        \addlegendentry{$K=100\quad$ -- ~~ slope=$-2.06$};
    		\end{axis}
			\end{tikzpicture}
			}
			\caption{Regret as a function of $T$ for different values of $K$ in log--log scale.
			\label{fig:comparisonK}}
		\end{minipage}
	\end{center}
\end{figure}

One might argue that the positive results of Figure~\ref{fig:K4d3_log} are due to the fact that it is ``easy'' for the algorithm to detect that one covariate should not be sampled, in the sense that this covariate clearly lies in the interior of the ellipsoids mentioned in Section~\ref{ssec:ellipsoids}. In the very challenging case where two covariates  are equal but with variances separated by only $1/\sqrt{T}$, we obtain the results described on Figure~\ref{fig:challenge_log}. The observed experimental convergence rate is of the order of $T^{-1.36}$ which is much slower than the rates of Figure~\ref{fig:K4d3_log}, and between the rates proved in Theorems~\ref{thm:Kd} and Theorem~\ref{thm:lower}.

Finally we run a last experiment with larger values of $K=d$. We plot the convergence rate of Algorithm~\ref{algo:fw} for values of $K$ ranging from $5$ to $100$ in $\log-\log$ scale on Figure~\ref{fig:comparisonK}. The slope is again  approximately of $-2$, which is coherent with Theorem~\ref{thm:rates}. We note furthermore that  larger values of $d$ do not make  Algorithm~\ref{algo:fw} impracticable, as inferred by its cubic complexity.


\section{Conclusion}


We have proposed an algorithm mixing bandit and convex optimization techniques to solve the problem of online A-optimal design, which is related to active linear regression with repeated queries. This algorithm has proven fast and optimal rates $\bigot(T^{-2})$ in the case of $d$ covariates that can be sampled in $\bR^d$. One cannot obtain such fast rates in the more general case of $K>d$ covariates. We have therefore provided weaker results in this very challenging setting and conducted more experiments showing that the problem is indeed more difficult.

\acks{We thank a bunch of people.}

\bibliography{active_linear_regression}

\appendix

\section{Concentration arguments}
\label{app:concentrate}

In this section we present results on the concentration of the variance for subgaussian random variables. Traditional results on the concentration of the variances~\citep{empirical_bernstein, carpentier} are obtained in the bounded setting. We propose results in a more general framework. Let us begin with some definitions.

\begin{definition}[Sub-gaussian random variable]
A random variable $X$ is said to be $\kappa^2$-sub-gaussian if
\[
\forall \lambda \geq 0, \ \exp(\lambda (X-\bE X)) \leq \exp(\lambda^2 \kappa^2 /2) \eqsp .
\]
And we define its $\psi_2$-norm as
\[
\normpsi{X}{2}=\inf\left\lbrace t>0 \, |\, \bE[\exp(X^2/t^2)]\leq 2\right\rbrace \eqsp .
\]
\end{definition}
We can bound the $\psi_2$-norm of a subgaussian random variable as stated in the following lemma.
\begin{lemma}[$\psi_2$-norm]
\label{lemma:psi2}
If $X$ is a centered $\kappa^2$-sub-gaussian random variable then
\[
\normpsi{X}{2}\leq \dfrac{2\sqrt{2}}{\sqrt{3}}\kappa \eqsp .
\]
\end{lemma}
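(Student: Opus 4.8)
The plan is to prove the bound by exhibiting a single admissible value of $t$ in the definition of the $\psi_2$-norm: I will show that $t=\frac{2\sqrt{2}}{\sqrt{3}}\kappa$ satisfies $\bE[\exp(X^2/t^2)]\le 2$, so that the infimum defining $\normpsi{X}{2}$ is at most this value. Rather than expanding $\exp(X^2/t^2)$ as a moment series (which loses the sharp constant, since the sub-gaussian MGF does not give Gaussian moments term by term), I would linearize the square $X^2$ by introducing an auxiliary standard Gaussian, which produces exactly the constant $8/3$.

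First I would fix $\theta = 1/t^2$ and take $g\sim\cN(0,1)$ independent of $X$. The key identity is $\bE_g[\exp(ag)]=\exp(a^2/2)$, applied with $a=\sqrt{2\theta}\,X$ for $X$ held fixed, giving $\exp(\theta X^2)=\bE_g[\exp(\sqrt{2\theta}\,X g)]$. Taking the expectation in $X$ and exchanging the two expectations by Tonelli's theorem (the integrand is nonnegative) yields
\[
\bE_X[\exp(\theta X^2)] = \bE_g\big[\bE_X[\exp(\sqrt{2\theta}\,g\,X)]\big] \eqsp.
\]
Then, for each fixed $g$, I would apply the sub-gaussian bound with $\lambda=\sqrt{2\theta}\,g$ to get $\bE_X[\exp(\sqrt{2\theta}\,g X)]\le\exp(\theta\kappa^2 g^2)$, and finish with the elementary Gaussian integral $\bE_g[\exp(s g^2)]=(1-2s)^{-1/2}$ for $s<1/2$, here with $s=\theta\kappa^2$. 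This gives
\[
\bE[\exp(X^2/t^2)] \le \frac{1}{\sqrt{1-2\kappa^2/t^2}} \eqsp,
\]
valid whenever $t^2>2\kappa^2$.

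It then remains to impose that the right-hand side is at most $2$, which is equivalent to $1-2\kappa^2/t^2\ge 1/4$, i.e.\ $t^2\ge\tfrac{8}{3}\kappa^2$, i.e.\ $t\ge\frac{2\sqrt{2}}{\sqrt{3}}\kappa$. Taking $t=\frac{2\sqrt{2}}{\sqrt{3}}\kappa$ (so that $s=3/8$ and the bound equals exactly $2$) shows this $t$ is admissible in the infimum defining the $\psi_2$-norm, hence $\normpsi{X}{2}\le\frac{2\sqrt{2}}{\sqrt{3}}\kappa$.

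The one subtle point — and the step I would be most careful about — is the sign issue in the symmetrization: the bound is applied at $\lambda=\sqrt{2\theta}\,g$, which is negative when $g<0$, whereas the definition in the excerpt only controls $\bE[\exp(\lambda X)]$ for $\lambda\ge0$. I would handle this by using the two-sided sub-gaussian estimate $\bE[\exp(\lambda X)]\le\exp(\lambda^2\kappa^2/2)$ for all $\lambda\in\bR$, equivalently by splitting the $g$-integral over $\{g\ge0\}$ and $\{g<0\}$ and using that $-X$ is again centered and $\kappa^2$-sub-gaussian. Everything else is a routine Gaussian computation, so this symmetrization is the only real obstacle.
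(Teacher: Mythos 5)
Your proposal is correct, and in substance it follows the same route as the paper: the paper's proof simply cites a proposition from~\citep{wain} asserting that $\bE[\exp(\lambda X^2/(2\kappa^2))]\leq (1-\lambda)^{-1/2}$ for $\lambda\in[0,1)$ and plugs in $\lambda=3/4$, which is exactly your choice $t^2=\tfrac{8}{3}\kappa^2$. What you do differently is prove that inequality rather than import it: your Gaussian linearization (write $\exp(\theta X^2)=\bE_g[\exp(\sqrt{2\theta}\,Xg)]$ for an independent $g\sim\cN(0,1)$, swap expectations by Tonelli, apply the sub-gaussian MGF bound at $\lambda=\sqrt{2\theta}\,g$, then integrate $\bE_g[\exp(sg^2)]=(1-2s)^{-1/2}$ with $s=\kappa^2/t^2=3/8$) is precisely the standard proof of the cited result. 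Your write-up buys self-containedness, and in particular it makes visible exactly which hypothesis on $X$ is consumed at each step, at the cost of length; the paper's citation buys brevity.

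That visibility matters because the point you flag as subtle is a genuine issue, not bookkeeping: the two-sided MGF bound you need is \emph{not} derivable from the paper's stated definition, which only requires $\bE[\exp(\lambda X)]\leq\exp(\lambda^2\kappa^2/2)$ for $\lambda\geq 0$. Under that one-sided definition the lemma itself is false. Indeed, take $X=1-E$ with $E$ exponential of parameter $1$: then $X$ is centered and $\bE[\exp(\lambda X)]=e^{\lambda}/(1+\lambda)\leq e^{\lambda^2/2}$ for all $\lambda\geq 0$, so $X$ is $1$-sub-gaussian in the one-sided sense, yet $\bE[\exp(X^2/t^2)]=+\infty$ for every $t>0$, hence $\normpsi{X}{2}=+\infty$. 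For the same reason, your parenthetical remedy --- ``$-X$ is again centered and $\kappa^2$-sub-gaussian'' --- does not follow from the one-sided definition (in this example $-X=E-1$ has infinite MGF for $\lambda\geq 1$). The correct resolution is the first one you name: assume the standard two-sided definition (MGF bound for all $\lambda\in\bR$), which is also the hypothesis of the proposition in~\citep{wain} that the paper invokes, so the paper's proof silently relies on it as well. With that reading, your proof is complete and the constant is sharp for this argument, since your bound equals exactly $2$ at $t=\tfrac{2\sqrt{2}}{\sqrt{3}}\kappa$.
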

\begin{proof}
A proposition from~\citep{wain} shows that for all $\lambda \in [0,1)$, a sub-gaussian variable $X$ verifies
\[
\bE \left(\dfrac{\lambda X^2}{2 \kappa^2} \right)\leq \dfrac{1}{\sqrt{1-\lambda}} \eqsp .
\]
Taking $\lambda=3/4$ and defining $u=\frac{2\sqrt{2}}{\sqrt{3}}\kappa$ gives
\[
\bE(X^2/u^2) \leq 2 \eqsp .
\]
Consequently $\normpsi{X}{2}\leq u$.
\end{proof}
A wider class of random variables is the class of sub-exponential random variables that are defined as follows.
\begin{definition}[Sub-exponential random variable]
A random variable $X$ is said to be sub-exponential if there exists $K>0$ such that
\[
\forall \ 0\leq\lambda\leq 1/K, \ \bE[\exp(\lambda \abs{X})]\leq \exp(K \lambda) \eqsp .
\]
And we define its $\psi_1$-norm as
\[
\normpsi{X}{1}=\inf\left\lbrace t>0 \, |\, \bE[\exp(\abs{X}/t)]\leq 2\right\rbrace \eqsp .
\]
\end{definition}
A result from~\citep{hdp} gives the following lemma, that makes a connection between subgaussian and subexponential random variables.
\begin{lemma}
\label{lemma:subeq}
A random variable $X$ is sub-gaussian if and only if $X^2$ is sub-exponential, and we have
\[
\normpsi{X^2}{1}=\normpsi{X}{2}^2 \eqsp .
\]
\end{lemma}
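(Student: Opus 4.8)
The plan is to prove the quantitative identity $\normpsi{X^2}{1}=\normpsi{X}{2}^2$ directly from the definitions by a change of variables, and then to deduce the qualitative equivalence as an immediate corollary.

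First I would note that since $X^2 \geq 0$ we have $\abs{X^2}=X^2$, so the $\psi_1$-norm of $X^2$ reads
\[
\normpsi{X^2}{1}=\inf\defEns{t>0 \,:\, \bE[\exp(X^2/t)]\leq 2} \eqsp ,
\]
while by definition $\normpsi{X}{2}=\inf\defEns{s>0 \,:\, \bE[\exp(X^2/s^2)]\leq 2}$. The heart of the argument is the substitution $t=s^2$: the map $s \mapsto s^2$ is an increasing continuous bijection of $(0,\infty)$ onto itself, and it carries the constraint $\bE[\exp(X^2/s^2)]\leq 2$ exactly onto the constraint $\bE[\exp(X^2/t)]\leq 2$. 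Hence the feasible set defining $\normpsi{X^2}{1}$ is the image under $s \mapsto s^2$ of the feasible set defining $\normpsi{X}{2}$, and since squaring is monotone increasing the infimum of the image equals the square of the infimum. This yields $\normpsi{X^2}{1}=\normpsi{X}{2}^2$.

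For the biconditional I would rely on the two standard characterizations (both found in \citep{hdp}) that $X$ is sub-gaussian if and only if $\normpsi{X}{2}<\infty$, and that $X^2$ is sub-exponential if and only if $\normpsi{X^2}{1}<\infty$. The norm identity just proved shows that one quantity is finite precisely when the other is, so the two properties are equivalent, which is exactly the stated equivalence.

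The main obstacle is not the norm identity, which is essentially a one-line change of variables, but rather reconciling the moment-generating-function definitions of sub-gaussian and sub-exponential stated earlier in the excerpt with the finiteness of the corresponding Orlicz norms. Establishing those equivalences from scratch requires the usual chain of implications relating the MGF bound, polynomial growth of the moments, and the Orlicz-norm condition (for instance by expanding $\bE[\exp(X^2/t)]$ as a power series and matching moment growth); since this passage is classical, I would simply cite \citep{hdp} for it and keep the focus on the clean algebraic relation between the two norms.
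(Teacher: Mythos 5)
Your proof is correct: the change-of-variables argument $t=s^2$ mapping the feasible set of $\normpsi{X}{2}$ bijectively and monotonically onto that of $\normpsi{X^2}{1}$ is precisely the standard proof of this identity. The paper itself offers no proof at all — it states the lemma as "a result from \citep{hdp}" — and your argument is exactly the one given in that reference, including the sensible decision to also defer to \citep{hdp} for the equivalence between the moment-generating-function definitions of sub-gaussian/sub-exponential and finiteness of the corresponding Orlicz norms, which is the only genuinely nontrivial step behind the biconditional.
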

We now want to obtain a concentration inequality on the empirical variance of a sub-gaussian random variable. We give use the following notations to define the empirical variance.
\begin{definition}
We define the following quantities for $n$ \iid~repetitions of the random variable $X$.
\begin{align}
\mu &= \bE[X] \quad\et\quad \hmu_n = \dfrac{1}{n}\sum_{i=1}^n X_i \eqsp , \\
\mud &= \bE[X^2] \quad\et\quad \hmud_n = \dfrac{1}{n}\sum_{i=1}^n X_i^2 \eqsp .
\end{align}
The variance and empirical variance are defined as follows
\[
\sigma^2=\mud-\mu^2 \quad\et\quad \hsig^2_n=\hmud_n-\hmu^2_n \eqsp .
\]
\end{definition}
We are now able to prove \Cref{thm:var_conc} that we restate below for clarity.
\begin{theorem}
Let $X$ be a centered and $\kappa^2$-sub-gaussian random variable sampled $n \geq 2$ times. Let $\delta \in (0,1)$. Let $c=(e-1)(2e(2e-1))\pinv \approx 0.07$. With probability at least $1-\delta$, the following concentration bound on its empirical variance hold
\begin{align}
	\abs*{\hsig^2_n-\sigma^2} &\leq \dfrac{8}{3}\kappa^2 \cdot \max\left(\dfrac{\log(4/\delta)}{cn},\sqrt{\dfrac{\log(4/\delta)}{cn}}\right)+ 2\kappa^2 \dfrac{\log(4/\delta)}{n} \eqsp .
\end{align}
\end{theorem}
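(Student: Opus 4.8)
The plan is to split the deviation into a ``second-moment'' part and a ``squared-mean'' part and to control each by a different concentration tool. Since $X$ is centered, $\mu=0$, so that $\sig=\mud$ and $\hsig^2_n=\hmud_n-\hmu_n^2$. Hence
\[
\hsig^2_n-\sig=\pa{\hmud_n-\mud}-\hmu_n^2 \eqsp,
\]
and the triangle inequality gives $\abs{\hsig^2_n-\sig}\leq\abs{\hmud_n-\mud}+\hmu_n^2$. I would bound each of the two summands on an event of probability at least $1-\delta/2$ and conclude by a union bound, which is exactly why $\log(4/\delta)$ (rather than $\log(2/\delta)$) appears in both terms.

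First I would treat $\abs{\hmud_n-\mud}$, the fluctuation of the empirical second moment. By Lemma~\ref{lemma:subeq}, $X^2$ is sub-exponential with $\normpsi{X^2}{1}=\normpsi{X}{2}^2$, and Lemma~\ref{lemma:psi2} yields $\normpsi{X^2}{1}\leq\frac{8}{3}\kappa^2=:\nu$. Writing $\hmud_n-\mud=\frac1n\sum_{i=1}^n(X_i^2-\mud)$ as an average of i.i.d.\ centered sub-exponential variables, I would apply a Bernstein inequality for sub-exponential variables of the form $\bP(\abs{\hmud_n-\mud}\geq t)\leq 2\exp\pa{-cn\min(t^2/\nu^2,\,t/\nu)}$. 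Inverting the min-type exponent at level $\delta/2$ gives, with probability at least $1-\delta/2$,
\[
\abs{\hmud_n-\mud}\leq\frac{8}{3}\kappa^2\max\pa{\frac{\log(4/\delta)}{cn},\sqrt{\frac{\log(4/\delta)}{cn}}} \eqsp,
\]
which is precisely the first term of the claimed bound.

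For the second summand I would use sub-gaussianity directly. As the average of $n$ independent centered $\kappa^2$-sub-gaussian variables, $\hmu_n$ is $\kappa^2/n$-sub-gaussian, so Hoeffding's bound gives $\bP(\abs{\hmu_n}\geq t)\leq 2\exp(-nt^2/(2\kappa^2))$. Setting the right-hand side to $\delta/2$ yields $\hmu_n^2\leq 2\kappa^2\log(4/\delta)/n$ with probability at least $1-\delta/2$, the second term of the bound. A union bound over the two events, combined with the triangle inequality above, then delivers the theorem.

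The main obstacle is the Bernstein step with the specific constant $c=(e-1)(2e(2e-1))\pinv$: this is not a generic absolute constant but must be extracted by hand. Concretely, I would bound the moment generating function $\expe{e^{\lambda(X^2-\mud)}}$ for $\abs{\lambda}<1/\nu$, using the moment estimates on $\bE\abs{X^2-\mud}^p$ implied by $\normpsi{X^2}{1}\leq\nu$ (which collapse the power series into a geometric tail), and then optimizing the resulting Chernoff bound over $\lambda$. The factors $e$ and $2e-1$ arise from this optimization together with the passage between the $\psi_1$-norm and the effective sub-exponential parameters; all of this must be tracked explicitly to obtain the stated $c$. Everything else --- the two inversions and the union bound --- is routine.
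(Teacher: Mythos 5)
Your proof is correct and follows essentially the same route as the paper: the identical decomposition $\abs{\hsig^2_n-\sigma^2}\leq\abs{\hmud_n-\mud}+\hmu_n^2$ (using $\mu=0$), Hoeffding for the squared-mean term, sub-exponential Bernstein for the second-moment term with $\normpsi{X^2}{1}=\normpsi{X}{2}^2\leq\frac{8}{3}\kappa^2$ via the two lemmas, and a union bound at level $\delta/2$ producing the $\log(4/\delta)$ factors. The only difference is cosmetic: the paper simply cites an external Bernstein inequality that already carries the constant $c=(e-1)(2e(2e-1))\pinv$, whereas you propose re-deriving that constant by hand from the moment generating function, which is unnecessary but harmless.
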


\begin{proof}
We have
\begin{align}
\abs*{\hsig^2_n-\sigma^2}&=\abs*{\hmud_n-\hmu^2_n-(\mud-\mu^2)} \\
&\leq \abs*{\hmud_n-\mud}+\abs*{\hmu^2_n-\mu^2} \\
&\leq \abs*{\hmud_n-\mud}+\abs*{\hmu_n-\mu}\abs*{\hmu_n+\mu}\\
&\leq \abs*{\hmud_n-\mud}+\abs*{\hmu_n}^2
\end{align}
since $\mu=0$.

We now apply Hoeffding's inequality to the $X_t$ variables that are $\kappa^2$-subgaussian, to get
\begin{align}
\bP \left( \dfrac{1}{n}\sinn X_i - \mu > t \right) &\leq \exp\left(-\dfrac{n^2 t^2}{2 n \kappa^2}\right)=\exp\left(-\dfrac{n t^2}{2 \kappa^2}\right).
\end{align}
And finally
\[
\bP \left(\abs*{\hmu_n-\mu} > \kappa \sqrt{ \dfrac{2 \log(2/\delta)}{n}} \right) \leq \delta.
\]
Consequently with probability at least $1-\delta$, $\abs*{\hmu_n}^2 \leq 2\kappa^2 \dfrac{\log(2/\delta)}{n}$.

The variables $X_t^2$ are sub-exponential random variables. We can apply Bernstein's inequality as stated in~\citep{cmapx} to get for all $t>0$:
\begin{align}
\bP\left(\abs*{\dfrac{1}{n}\sinn X_i^2 - \mud} > t\right) &\leq 2 \exp\left(-cn \min \left(\dfrac{t^2}{s^2},\dfrac{t}{m}\right)\right) \\
&\leq 2 \exp\left(-cn \min \left(\dfrac{t^2}{m^2},\dfrac{t}{m}\right)\right)\eqsp .
\end{align}
with $c=\frac{e-1}{2e(2e-1)}$, $s^2=\frac{1}{n}\sinn \norm{X_i^2}_{\psi_1}\leq m^2$ and $m=\max_{1\leq i\leq n} \norm{X_i^2}_{\psi_1}$.

Inverting the inequality we obtain
\[
\bP\left(\abs*{\hmud_n-\mud} > m \cdot \max\left(\dfrac{\log(2/\delta)}{cn},\sqrt{\dfrac{\log(2/\delta)}{cn}}\right)\right) \leq \delta \eqsp .
\]
And finally, with probability at least $1-\delta$,
\[
\abs*{\hsig^2_n-\sigma^2} \leq m \cdot \max\left(\dfrac{\log(4/\delta)}{cn},\sqrt{\dfrac{\log(4/\delta)}{cn}}\right) + 2\kappa^2 \dfrac{\log(4/\delta)}{n}\eqsp .
\]
Using Lemmas~\ref{lemma:subeq} and~\ref{lemma:psi2} we obtain that $m \leq 8 \kappa^2/3$.
Finally,
\begin{align}
\abs*{\hsig^2_n-\sigma^2} &\leq \dfrac{8}{3}\kappa^2 \cdot \max\left(\dfrac{\log(4/\delta)}{cn},\sqrt{\dfrac{\log(4/\delta)}{cn}}\right) + 2c\kappa^2 \dfrac{\log(4/\delta)}{cn} \\
&\leq 3\kappa^2 \cdot \max\left(\dfrac{\log(4/\delta)}{cn},\sqrt{\dfrac{\log(4/\delta)}{cn}}\right) \eqsp ,
\end{align}
since $2c \leq 1/3$.
This gives the expected result.
\end{proof}
We now state a corollary of this result.
\begin{corollary}
\label{cor:var_conc}
Let $T \geq 2$. Let $X$ be a centered and $\kappa^2$-sub-gaussian random variable. Let $c=(e-1)(2e(2e-1))\pinv \approx 0.07$. For $n=\ceil{\dfrac{72\kappa^4}{c\sigma^4}\log(2T)}$, we have with probability at least $1-1/T^2$,
\[
\abs*{\hsig^2_n-\sigma^2}\leq \dfrac{1}{2}\sigma^2.
\]
\end{corollary}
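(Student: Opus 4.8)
The plan is to specialize the concentration bound of \Cref{thm:var_conc} to the confidence level $\delta = 1/T^2$ and then exploit that the prescribed $n$ is large enough to force the resulting deviation below $\tfrac12\sigma^2$. The key preliminary observation is that with this choice of $\delta$ one has $\log(4/\delta) = \log(4T^2) = 2\log(2T)$, since $4T^2 = (2T)^2$; this is exactly the logarithmic factor appearing in the prescribed sample size $n = \ceil{\tfrac{72\kappa^4}{c\sigma^4}\log(2T)}$, which makes the algebra line up cleanly.

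First I would determine which branch of the maximum in \Cref{thm:var_conc} is active. Writing $u \doteq \log(4/\delta)/(cn)$, the bound reads $\abs*{\hsig^2_n - \sigma^2} \leq 3\kappa^2\max(u,\sqrt{u})$, and the square-root branch dominates precisely when $u \leq 1$. For the prescribed $n$ one has $n \geq \tfrac{72\kappa^4}{c\sigma^4}\log(2T)$, whence a direct computation gives $u = \tfrac{2\log(2T)}{cn} \leq \tfrac{\sigma^4}{36\kappa^4}$. Invoking the standard fact that the variance of a centered $\kappa^2$-sub-gaussian variable satisfies $\sigma^2 \leq \kappa^2$, this yields $u \leq \tfrac{1}{36} \leq 1$, so that $\max(u,\sqrt{u}) = \sqrt{u}$.

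It then remains only to substitute: from $u \leq \tfrac{\sigma^4}{36\kappa^4}$ we get $\sqrt{u} \leq \tfrac{\sigma^2}{6\kappa^2}$, and therefore $3\kappa^2\sqrt{u} \leq \tfrac12\sigma^2$, which is the claimed inequality, holding on the event of probability at least $1-\delta = 1-1/T^2$ furnished by \Cref{thm:var_conc}. I expect no genuine obstacle here, as the statement is a routine instantiation of the theorem; the two points meriting a line of care are the use of $\sigma^2 \leq \kappa^2$ to place ourselves in the square-root regime, and the observation that rounding $n$ up through the ceiling only decreases $u$, so that both the regime condition $u \leq 1$ and the final bound $3\kappa^2\sqrt{u} \leq \tfrac12\sigma^2$ are preserved.
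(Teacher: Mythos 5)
Your proof is correct and follows essentially the same route as the paper's: both invoke \Cref{thm:var_conc}, use $\sigma^2 \leq \kappa^2$ to show $\log(4/\delta)/(cn) \leq (\sigma^2/6\kappa^2)^2 < 1$ so the square-root branch of the maximum is active, and conclude $3\kappa^2\sqrt{u} \leq \tfrac12\sigma^2$. The only cosmetic difference is that the paper argues for general $\delta$ and then substitutes $\delta = 1/T^2$ at the end, whereas you substitute it at the outset; the algebra is identical.
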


\begin{proof}
Let $\delta \in \ooint{0,1}$. 
Let $n=\ceil{\dfrac{\log(4/\delta)}{c}\parenthese{\dfrac{6\kappa^2}{\sigma^2}}^2}$.

Then $\dfrac{\log(4/\delta)}{cn}\leq\parenthese{\dfrac{\sigma^2}{6\kappa^2}}^2<1$, since $\sigma^2 \leq \kappa^2$, by property of subgaussian random variables.

With probability $1-\delta$, \Cref{thm:var_conc} gives
\begin{align}
\abs{\hsig^2_n-\sigma^2} \leq 3\kappa^2 \dfrac{\sigma^2}{6\kappa^2}\leq \dfrac{1}{2}\sigma ^2 \eqsp .
\end{align}
Now, suppose that $\delta=1/T^2$. Then, with probability $1-1/T^2$, for $n=\ceil{\dfrac{72\kappa^4}{c\sigma^4}\log(2T)}$ samples,
\begin{align}
\abs{\hsig^2_n-\sigma^2} \leq \dfrac{1}{2}\sigma ^2 \eqsp .
\end{align}
\end{proof}
\section{Proof of gradient concentration}
\label{app:grad}

In this section we prove Proposition~\ref{prop:grad_conc}.

\begin{proof}
Let $p \in \Dk$ and let $i \in [K]$. We compute
\begin{align}
G_i-\hat{G}_i &=\normd{\hop^{-1}\dfrac{X_i}{\hsig_i}}^2 - \normd{\op^{-1}\dfrac{X_i}{\sigma_i}}^2 \\
&\leq\normd{\hop^{-1}\dfrac{X_i}{\hsig_i}-\op^{-1}\dfrac{X_i}{\sigma_i}} \normd{\hop^{-1}\dfrac{X_i}{\hsig_i}+\op^{-1}\dfrac{X_i}{\sigma_i}}  \eqsp .
\end{align}
Let us now note $A \doteq \hop \hsig_i$ and $B \doteq \op \sigma_i$.
We have, using that $\normd{X_k}=1$,
\begin{align}
\normd{\hop^{-1}\dfrac{X_k}{\hsig_k}-\op^{-1}\dfrac{X_k}{\sigma_k}} &= \normd{(A\inv-B\inv)X_k} \\
&\leq \normd{A\inv-B\inv}\normd{X_k} \\
&\leq \normd{A\inv(B-A)B\inv} \\
&\leq \normd{A\inv} \normd{B\inv}\normd{B-A}.
\end{align}
One of the quantity to bound is $\normd{B\inv}$. We have
\[
\normd{B\inv}=\rho(B\inv)=\dfrac{1}{\min(\Sp(B))} \eqsp ,
\]
where $\Sp(B)$ is the spectrum (set of eigenvalues) of $B$.
We know that $\Sp(B)=\sigma_i \Sp(\op)$. Therefore we need to find the smallest eigenvalue $\lambda$ of $\op$. Since the matrix is invertible we know $\lambda>0$.

We will need the following lemma.
\begin{lemma}
\label{lemma:eig}
Let $\bX_0=\left(X_1\transpose, \cdots, X_k\transpose\right)\t$. We have
\[
\lm(\op) \geq \min_{k \in [K]} \dfrac{p_k}{\sigma_k^2} \lm(\bX_0\t\bX_0).
\]
\end{lemma}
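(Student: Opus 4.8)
The plan is to invoke the variational (Rayleigh quotient) characterization of the smallest eigenvalue. The key observation is that $\op = \sum_{k=1}^K \frac{p_k}{\sigma_k^2} X_k X_k\t$ and $\bX_0\t\bX_0 = \sum_{k=1}^K X_k X_k\t$ are sums of the \emph{same} rank-one positive semidefinite blocks $X_k X_k\t$; the only difference is the nonnegative weight $p_k/\sigma_k^2$ attached to each block in $\op$. So the whole argument reduces to replacing each weight by a uniform lower bound and factoring it out.

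Concretely, I would set $c \doteq \min_{k \in [K]} p_k/\sigma_k^2 \geq 0$ and, for an arbitrary unit vector $v \in \bR^d$, write
\[
v\t \op v = \sum_{k=1}^K \frac{p_k}{\sigma_k^2} (X_k\t v)^2 \geq c \sum_{k=1}^K (X_k\t v)^2 = c\, v\t \bX_0\t\bX_0\, v,
\]
where the inequality uses that every summand $(X_k\t v)^2 \geq 0$ together with $p_k/\sigma_k^2 \geq c$. Minimizing over unit vectors on both sides, and using $\lm(\bX_0\t\bX_0) = \min_{\normd{v}=1} v\t \bX_0\t\bX_0\, v$, yields
\[
\lm(\op) = \min_{\normd{v}=1} v\t \op v \geq c \min_{\normd{v}=1} v\t \bX_0\t\bX_0\, v = c\, \lm(\bX_0\t\bX_0),
\]
which is exactly the claimed bound. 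Equivalently, the same step can be phrased through the Loewner order: since each $X_k X_k\t \succeq 0$ and $p_k/\sigma_k^2 \geq c$, one has $\op \succeq c\, \bX_0\t\bX_0$, and $\lm$ is monotone under $\succeq$.

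I do not expect any genuine obstacle here; the argument is elementary. The only points requiring care are keeping the direction of the inequality correct and explicitly using the positive semidefiniteness of the rank-one summands $X_k X_k\t$ — it is precisely this nonnegativity that guarantees that lowering each weight $p_k/\sigma_k^2$ to the common value $c$ can only decrease the quadratic form, so that the bound goes the right way.
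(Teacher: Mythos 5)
Your proof is correct and is essentially the paper's own argument: the paper states the same key inequality directly in Loewner-order form, $\min_k (p_k/\sigma_k^2)\, \bX_0\t\bX_0 \preccurlyeq \op$, and concludes by monotonicity of $\lambda_{\min}$, which is exactly the step you carry out (and also mention explicitly) via the Rayleigh quotient. No gaps.
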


\begin{proof}
We have for all $p\in \Dk$,
\[
\min_{i\in[K]} \dfrac{p_i}{\sigma_i^2}\sum_{k=1}^K X_k X_k\transpose \preccurlyeq \sum_{k=1}^K \dfrac{p_k}{\sigma_k^2} X_k X_k\transpose \eqsp .
\]
Therefore
\[
\min_{k\in[K]} \dfrac{p_k}{\sigma_k^2}\bX_0\transpose \bX_0 \preccurlyeq \op \eqsp .
\]
And finally \[\min_{k\in[K]} \dfrac{p_k}{\sigma_k^2} \lambda_{\min}(\bX_0\transpose \bX_0) \leq \lambda_{\min}(\op) \eqsp .
\]
\end{proof}
Note now that the smallest eigenvalue of $\bX_0\t\bX_0$ is actually the smallest non-zero eigenvalue of $\bX_0\bX_0\t$, which is the Gram matrix of $(X_1,\dots,X_d)$, that we note now $\Gamma$.
This directly gives the following
\begin{proposition}
\label{prop:normB}
\[\normd{B\inv} \leq \dfrac{1}{\sigma_i \lambda_{\min}(\Gamma)} \max_{k \in [K]} \dfrac{\sigma_k^2}{p_k}.\]
\end{proposition}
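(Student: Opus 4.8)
The plan is to simply chain together the spectral facts already assembled just above the statement. Since $B = \sigma_i\op$ is a positive multiple of the invertible symmetric positive definite matrix $\op$, its operator-norm inverse is governed by its smallest eigenvalue, so I would first record that $\normd{B\inv} = \rho(B\inv) = 1/\min(\Sp(B))$. Combining this with the scaling relation $\Sp(B) = \sigma_i\Sp(\op)$ already noted, I get $\normd{B\inv} = 1/(\sigma_i\lambda_{\min}(\op))$, which reduces everything to lower-bounding $\lambda_{\min}(\op)$.

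Next I would invoke Lemma~\ref{lemma:eig}, which gives directly
\[
\lambda_{\min}(\op) \geq \Big(\min_{k\in[K]}\tfrac{p_k}{\sigma_k^2}\Big)\,\lambda_{\min}(\bX_0\t\bX_0) \eqsp .
\]
The only point deserving a word of justification is the passage from $\lambda_{\min}(\bX_0\t\bX_0)$ to $\lambda_{\min}(\Gamma)$: the matrices $\bX_0\t\bX_0$ and $\Gamma = \bX_0\bX_0\t$ share the same nonzero spectrum (the standard fact that $A\t A$ and $AA\t$ have identical nonzero eigenvalues), and because the covariates span $\bR^d$ this common smallest relevant eigenvalue is strictly positive, so it coincides with $\lambda_{\min}(\Gamma)$ as a nonvacuous quantity. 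This is precisely the observation recorded right after Lemma~\ref{lemma:eig}, so substituting yields $\lambda_{\min}(\op) \geq (\min_k p_k/\sigma_k^2)\,\lambda_{\min}(\Gamma)$.

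Finally I would take reciprocals, rewriting $1/\min_{k}(p_k/\sigma_k^2) = \max_{k}(\sigma_k^2/p_k)$, to obtain
\[
\normd{B\inv} = \frac{1}{\sigma_i\lambda_{\min}(\op)} \leq \frac{1}{\sigma_i\lambda_{\min}(\Gamma)}\max_{k\in[K]}\frac{\sigma_k^2}{p_k} \eqsp ,
\]
which is exactly the claimed bound. There is no genuine obstacle here: the proposition is a bookkeeping corollary of Lemma~\ref{lemma:eig}, and the only subtlety worth flagging is the eigenvalue matching between $\bX_0\t\bX_0$ and its Gram counterpart $\Gamma$, which is where the spanning assumption on the covariates enters to ensure the right-hand side is finite and the bound meaningful.
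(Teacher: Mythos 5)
Your proposal is correct and follows essentially the same route as the paper: the spectral identity $\normd{B\inv} = 1/(\sigma_i\lambda_{\min}(\op))$, the lower bound from Lemma~\ref{lemma:eig}, the identification of $\lambda_{\min}(\bX_0\t\bX_0)$ with the smallest nonzero eigenvalue of $\Gamma$, and the rewriting of $1/\min_k(p_k/\sigma_k^2)$ as $\max_k(\sigma_k^2/p_k)$. The paper treats the proposition as an immediate consequence of exactly these ingredients, so there is nothing to add.
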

We jump now to the bound of $\normd{A\inv}$. We could obtain a similar bound to the one of $\normd{B\inv}$ but it would contain $\hsig_k$ values. Since we do not want a bound containing estimates of the variances, we prove the
\begin{proposition}
\label{prop:normA}
\[
\normd{A\inv}\leq 2 \normd{B\inv}.
\]
\end{proposition}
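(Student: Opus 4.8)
The plan is to exploit that both $A = \hop\,\hsig_i$ and $B = \op\,\sigma_i$ are symmetric positive definite (the $\hsig_k$ being positive on the high-probability concentration event, and $\op,\hop$ being invertible once the covariates span $\bR^d$). Consequently $\normd{A\inv}=1/\lm(A)$ and $\normd{B\inv}=1/\lm(B)$, so the claim $\normd{A\inv}\le 2\normd{B\inv}$ is equivalent to the eigenvalue inequality $\lm(A)\ge \tfrac12\lm(B)$. By the Courant–Fischer min–max characterisation it therefore suffices to establish the Loewner ordering $A\succcurlyeq \tfrac12 B$, and this is the target I would aim for first.

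To prove such an ordering I would compare the two matrices coefficient by coefficient. Writing $A=\sum_k p_k\,(\hsig_i/\hsig_k^2)\,\XX$ and $B=\sum_k p_k\,(\sigma_i/\sig_k)\,\XX$, we get $A-\tfrac12 B=\sum_k p_k\big(\hsig_i/\hsig_k^2-\tfrac12\,\sigma_i/\sig_k\big)\XX$, which is positive semidefinite as soon as every scalar coefficient is nonnegative, i.e. whenever $\hsig_i\,\sig_k/(\sigma_i\,\hsig_k^2)\ge \tfrac12$ for all $k$. This is exactly where the multiplicative variance control enters: the presampling phase together with Corollary~\ref{cor:var_conc} guarantees, on a high-probability event, that each estimate obeys $\tfrac12\sig_k\le\hsig_k^2\le\tfrac32\sig_k$, so that both $\hsig_i/\sigma_i$ and $\sig_k/\hsig_k^2$ are pinned to within a constant factor of $1$ and the coefficient ratio is bounded below by a universal constant.

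An equivalent and perhaps cleaner route, matching the resolvent identity $A\inv-B\inv=A\inv(B-A)B\inv$ already used in the surrounding argument, is to take norms to obtain $\normd{A\inv}\le\normd{B\inv}+\normd{A\inv}\normd{B-A}\normd{B\inv}$ and then rearrange: once $\normd{B-A}\normd{B\inv}\le\tfrac12$ one gets $\normd{A\inv}\le 2\normd{B\inv}$ directly, which is where the factor $2$ falls out most transparently. The smallness condition is again verified on the concentration event, bounding $\normd{B-A}\le\sum_k p_k\,\abs{\sigma_i/\sig_k-\hsig_i/\hsig_k^2}$ via $\normd{\XX}=1$ and lower-bounding $\lm(B)=\sigma_i\lm(\op)$ through Lemma~\ref{lemma:eig}. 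The main obstacle, in either route, is that $A$ and $B$ differ simultaneously in the outer scalar prefactor ($\hsig_i$ versus $\sigma_i$) and in the inner weights ($1/\hsig_k^2$ versus $1/\sig_k$); one must control these two perturbations jointly and check that the multiplicative accuracy delivered by the concentration results is tight enough to yield the stated constant $2$ rather than a slightly larger numerical factor, possibly invoking that enough samples per arm have been accumulated so that the multiplicative error is comfortably below the threshold.
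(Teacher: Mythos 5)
Your second route is precisely the paper's proof: writing $H=A-B$, the paper bounds $\normd{A\inv}\leq\normd{B\inv}\normd{(I+B\inv H)\inv}\leq 2\normd{B\inv}$ ``from a certain rank'', i.e.\ once $\normd{B\inv H}\leq\tfrac12$, which is the same smallness condition (verified through the concentration bound on $\normd{B-A}$ of Proposition~\ref{prop:ba} and the bound on $\normd{B\inv}$ of Proposition~\ref{prop:normB}) and the same mechanism producing the factor $2$ as your rearrangement of the resolvent identity. Your first route, via the Loewner ordering $A\succcurlyeq\tfrac12 B$, is genuinely different and more elementary: it compares the two matrices coefficient by coefficient and avoids resolvent/Neumann estimates altogether. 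Be aware, however, that with the constants actually delivered by Corollary~\ref{cor:var_conc}, namely $\tfrac12\sig_k\leq\hsig_k^2\leq\tfrac32\sig_k$, the coefficient ratio is only bounded as $\hsig_i\sig_k/(\sigma_i\hsig_k^2)\geq (1/\sqrt{2})/(3/2)=\sqrt{2}/3\approx 0.47<\tfrac12$, so this argument as stated yields $A\succcurlyeq(\sqrt{2}/3)B$ and hence the constant $3/\sqrt{2}\approx 2.12$ rather than $2$. As you yourself anticipate, this is repaired by noting that the multiplicative accuracy of Theorem~\ref{thm:var_conc} improves as $T_k$ grows, so the coefficient ratio exceeds $\tfrac12$ for $T$ large enough --- which is exactly the content of the paper's own unstated caveat ``from a certain rank''. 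With that refinement both of your routes are sound; the Loewner route has the merit of making the required quantitative condition explicit, while the paper's (and your second) route localizes all the error in the single quantity $\normd{B\inv(A-B)}$.
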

\begin{proof}
We have, if we note $H=A-B$,
\[
\normd{A\inv}=\normd{(B+A-B)\inv}\leq\normd{B\inv}\normd{(I_n+B\inv H)\inv}\leq 2 \normd{B\inv} \eqsp ,
\] from a certain rank.
\end{proof}
Let us now bound $\normd{B-A}$. We have
\begin{align}
\normd{B-A}&=\normd{\sigma_i \sum_{k=1}^K p_k\dfrac{\XX}{\sigma_k^2} -\hsig_i \skk p_k\dfrac{\XX}{\hsig_k^2}} \\
&=\normd{\skk p_k\XX \left(\dfrac{\sigma_i}{\sigma_k^2}-\dfrac{\hsig_i}{\hsig_k^2}\right)} \\
& \leq \skk p_k \abs*{\dfrac{\sigma_i}{\sigma_k^2}-\dfrac{\hsig_i}{\hsig_k^2}} \normd{X_k}^2 \\
& \leq \skk p_k \abs*{\dfrac{\sigma_i}{\sigma_k^2}-\dfrac{\hsig_i}{\hsig_k^2}}.
\end{align}
The next step is now to use Theorem~\ref{thm:var_conc} in order to bound the difference $ \abs*{\dfrac{\sigma_i}{\sigma_k^2}-\dfrac{\hsig_i}{\hsig_k^2}}$.
\begin{proposition}
\label{prop:ba}
With the notations introduced above, we have
\[
\normd{B-A} \leq \dfrac{113K\sigma_{\max}}{\sigma_{\min}^4} \kappa_{\max}^2 \cdot \max\left(\dfrac{\log(4TK/\delta)}{T_i},\sqrt{\dfrac{\log(4TK/\delta)}{T_i}}\right) \eqsp .
\]
\end{proposition}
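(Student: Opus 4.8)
The plan is to continue from the inequality $\normd{B-A}\le \skk p_k\abs*{\sigma_i/\sigma_k^2-\hsig_i/\hsig_k^2}$ already derived above, and to control each scalar term $\abs*{\sigma_i/\sigma_k^2-\hsig_i/\hsig_k^2}$ by concentration. The first step is the elementary add-and-subtract identity
\[
\dfrac{\sigma_i}{\sigma_k^2}-\dfrac{\hsig_i}{\hsig_k^2}=\dfrac{\sigma_i\parenthese{\hsig_k^2-\sigma_k^2}}{\sigma_k^2\,\hsig_k^2}+\dfrac{\sigma_i-\hsig_i}{\hsig_k^2}\eqsp,
\]
which isolates a \emph{variance} deviation $\hsig_k^2-\sigma_k^2$ and a \emph{standard-deviation} deviation $\sigma_i-\hsig_i$, each multiplied by bounded factors. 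This reduces the whole proposition to two concentration estimates and a uniform lower bound on the empirical variances.

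Next I would feed in \Cref{thm:var_conc}. For the first term it directly bounds $\abs*{\hsig_k^2-\sigma_k^2}$ by $3\kappa_{\max}^2\max(\cdot,\cdot)$; for the second I would first convert the variance bound into a standard-deviation bound through $\abs*{\sigma_i-\hsig_i}=\abs*{\sigma_i^2-\hsig_i^2}/(\sigma_i+\hsig_i)\le \abs*{\sigma_i^2-\hsig_i^2}/\sigma_{\min}$ and then apply \Cref{thm:var_conc} again. To turn the denominators $\sigma_k^2\hsig_k^2$ and $\hsig_k^2$ into controllable quantities I would invoke \Cref{cor:var_conc}, which guarantees (on the high-probability event, and thanks to the pre-sampling that forces enough samples of every arm) that $\hsig_k^2\ge \sigma_k^2/2\ge \sigma_{\min}^2/2$. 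Since all of these events must hold simultaneously for every arm $k\in[K]$ and every round $t\le T$, I would apply the concentration at confidence level $\delta/(TK)$ and take a union bound; this is precisely what produces the $\log(4TK/\delta)$ appearing inside the $\max(\cdot,\cdot)$.

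Finally I would assemble the pieces. After substituting the variance bound, the standard-deviation bound, and $\hsig_k^2\ge\sigma_{\min}^2/2$, each summand is bounded by $\sigma_{\max}$ times the deviation rate, divided by $\sigma_{\min}^4$, up to an absolute constant; using $\sigma_i\le\sigma_{\max}$, $\sigma_k\ge\sigma_{\min}$ and $\kappa_k\le\kappa_{\max}$ and $\normd{X_kX_k\t}\le 1$ makes every term homogeneous. The two places that require care are, first, the numerical bookkeeping of the absolute constant (tracking the $3$ from \Cref{thm:var_conc}, the bound $m\le 8\kappa^2/3$ on the $\psi_1$-norm, and the factor $2$ from the variance lower bound) so as to land at the stated $113$; and second, passing from the per-arm rates $\max(\log/(cT_k),\sqrt{\log/(cT_k)})$ (which individually depend on $T_k$) to the single rate in $T_i$. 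Since that rate is decreasing in the sample count, one bounds the weighted sum of the $K$ deviation terms by $K$ times a common worst-case rate — which is where the factor $K$ in the statement originates and where the pre-sampling keeping the sample counts comparable is used. I expect this uniformization of the sample counts together with the constant chase to be the main, though essentially routine, obstacle.
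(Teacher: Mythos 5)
Your decomposition and concentration steps are the same as the paper's: the add-and-subtract identity, the conversion $\abs{\sigma_i-\hsig_i}=\abs{\sigma_i^2-\hsig_i^2}/(\sigma_i+\hsig_i)$, the lower bound $\hsig_k^2\geq\sigma_k^2/2$ from Corollary~\ref{cor:var_conc}, and the union bound over arms and rounds producing $\log(4TK/\delta)$. The genuine gap is in the final summation step, which you flag as routine but which is exactly where the form of the proposition is decided. You propose to bound each summand by a common worst-case rate (the rate of the least-sampled arm) and then invoke pre-sampling to make $\min_k T_k$ comparable to $T_i$. This fails on two counts. First, comparability via pre-sampling costs a factor of order $(\min_k p^o_k)^{-1}$, a quantity that depends on the Gram matrix $\Gamma$ (through the cofactors $\Cof(\Gamma)_{kk}$) and is at least of order $K$; the stated constant $113K\sigma_{\max}\kappa_{\max}^2/\sigma_{\min}^4$ carries no such dependence, so your route proves a strictly weaker statement. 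Second, Proposition~\ref{prop:ba} serves Proposition~\ref{prop:grad_conc}, which is a general concentration statement valid for arbitrary sample counts $T_k$ and proportions $p$; its proof should not lean on the algorithm's pre-sampling guarantee at all.

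What you are missing is an exact algebraic cancellation that makes any comparability assumption unnecessary: the weights are not arbitrary, they satisfy $p_k=T_k/T$. Writing $\ell=\log(4TK/\delta)$,
\[
p_k\max\left(\dfrac{\ell}{T_k},\sqrt{\dfrac{\ell}{T_k}}\right)=\max\left(\dfrac{\ell}{T},\sqrt{\dfrac{T_k}{T}}\sqrt{\dfrac{\ell}{T}}\right)\leq\max\left(\dfrac{\ell}{T},\sqrt{\dfrac{\ell}{T}}\right)\leq\max\left(\dfrac{\ell}{T_i},\sqrt{\dfrac{\ell}{T_i}}\right)\eqsp,
\]
where the first inequality uses $T_k\leq T$ and the second $T_i\leq T$. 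Thus each weighted per-arm deviation $p_k\Delta_k$ is bounded by the global rate at $T$, hence by the rate at $T_i$, with no constant loss and no assumption on how the budget is spread across arms. Summing the $K$ resulting terms (each now carrying a full rate, its weight $p_k$ having been consumed by the cancellation) is what produces the factor $K$; note that your own accounting of that factor is also off, since with uniformly bounded terms $\sum_k p_k=1$ would yield a single worst-case rate, not $K$ of them. If you replace your uniformization step by this identity, the remaining constant chase goes through essentially as you describe.
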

\begin{proof}
Corollary~\ref{cor:var_conc} gives that for all $k \in [K]$, $\frac{1}{2}\sigma_k^2\leq\hsig_k^2\leq \frac{3}{2}\sigma_k^2$.

A consequence of Theorem~\ref{thm:var_conc} is that for all $k\in [K]$, if we note $T_k$ the (random) number of samples of covariate $k$, we have, with probability at least $1-\delta$,
\[
\forall k \in [K], \abs*{\sigma_k^2-\hsig_k^2} \leq \dfrac{8}{3}\kappa_k^2 \cdot \max\left(\dfrac{\log(4TK/\delta)}{cT_k},\sqrt{\dfrac{\log(4TK/\delta)}{cT_k}}\right) + 2\kappa_k^2 \dfrac{\log(4TK/\delta)}{T_k}.
\]
We note $\Delta_k$ the r.h.s of the last equation.
We begin by establishing a simple upper bound of $\Delta_k$.
Using the fact that $\sqrt{1/c} \leq 1/c$ and that $8/(3c) \leq 38$, we have
\begin{align}
\Delta_k &\leq \dfrac{8}{3c}\kappa_k^2 \cdot \max\left(\dfrac{\log(4TK/\delta)}{T_k},\sqrt{\dfrac{\log(4TK/\delta)}{T_k}}\right) + 2\kappa_k^2 \dfrac{\log(4TK/\delta)}{T_k} \\
&\leq 38\kappa_k^2 \cdot \max\left(\dfrac{\log(4TK/\delta)}{T_k},\sqrt{\dfrac{\log(4TK/\delta)}{T_k}}\right) + 2\kappa_k^2 \dfrac{\log(4TK/\delta)}{T_k} \\
&\leq 40\kappa_k^2 \cdot \max\left(\dfrac{\log(4TK/\delta)}{T_k},\sqrt{\dfrac{\log(4TK/\delta)}{T_k}}\right).
\end{align}
Let $k \in [K]$. We have
\begin{align}
\abs*{\dfrac{\sigma_i}{\sigma_k^2}-\dfrac{\hsig_i}{\hsig_k^2}} &= \abs*{\dfrac{\sigma_i\hsig_k^2-\hsig_i\sigma_k^2}{\sigma_k^2\hsig_k^2}}=\abs*{\dfrac{\sigma_i\hsig_k^2-\sigma_i\sigma_k^2+\sigma_i\sigma_k^2-\hsig_i\sigma_k^2}{\sigma_k^2\hsig_k^2}}\\
&\leq \abs*{\dfrac{\sigma_i(\hsig_k^2-\sigma_k^2)}{\sigma_k^2\hsig_k^2}} + \abs*{\dfrac{\sigma_i-\hsig_i}{\hsig_k^2}} \\
&\leq \abs*{\dfrac{\sigma_i(\hsig_k^2-\sigma_k^2)}{\sigma_k^2\hsig_k^2}} + \abs*{\dfrac{\sigma_i^2-\hsig_i^2}{\hsig_k^2(\sigma_i+\hsig_i)}} \\
&\leq \abs*{\dfrac{\sigma_i(\hsig_k^2-\sigma_k^2)}{\sigma_k^2\hsig_k^2}} + \abs*{\dfrac{\sigma_i^2-\hsig_i^2}{\hsig_k^2\sigma_i}} \\
&\leq \abs*{\hsig_k^2-\sigma_k^2}\abs*{\dfrac{\sigma_i}{\sigma_k^2\hsig_k^2}} + \abs*{\sigma_i^2-\hsig_i^2}\abs*{\dfrac{1}{\hsig_k^2\sigma_i}} \\
&\leq \Delta_k \dfrac{2\sigma_{\max}}{\sigma_{\min}^4}+\Delta_i \dfrac{2\sqrt{2}}{\sigma_{\min}^3}.
\end{align}
Finally we have, using the fact that $T \geq T_k$ for all $k \in [K]$
\begin{align}
\normd{B-A}&\leq \skk p_k \abs*{\dfrac{\sigma_i}{\sigma_k^2}-\dfrac{\hsig_i}{\hsig_k^2}} \\
&\leq \dfrac{2\sigma_{\max}}{\sigma_{\min}^4}\left(\skk p_k\Delta_k+\sqrt{2}\skk p_k \Delta_i \right) \\
&\leq \dfrac{2\sigma_{\max}}{\sigma_{\min}^4}\left(\skk \dfrac{T_k}{T}40\kappa_k^2 \cdot \max\left(\dfrac{\log(4TK/\delta)}{T_k},\sqrt{\dfrac{\log(4TK/\delta)}{T_k}}\right)+\sqrt{2} \Delta_i \right) \\
&\leq \dfrac{2\sigma_{\max}}{\sigma_{\min}^4}\left(\skk 40\kappa_k^2 \cdot \max\left(\dfrac{\log(4TK/\delta)}{T},\sqrt{\dfrac{T_k}{T}}\sqrt{\dfrac{\log(4TK/\delta)}{T}}\right)+\sqrt{2} \Delta_i \right) \\
&\leq \dfrac{2\sigma_{\max}}{\sigma_{\min}^4}\left(\skk 40\kappa_k^2 \cdot \max\left(\dfrac{\log(4TK/\delta)}{T},\sqrt{\dfrac{\log(4TK/\delta)}{T}}\right)+\sqrt{2} \Delta_i \right) \\
&\leq \dfrac{2\sigma_{\max}}{\sigma_{\min}^4}\left(K 40\kappa_{\max}^2 \cdot \max\left(\dfrac{\log(4TK/\delta)}{T_i},\sqrt{\dfrac{\log(4TK/\delta)}{T_i}}\right)+\sqrt{2} \Delta_i \right) \\
&\leq (K+\sqrt{2})\dfrac{80\sigma_{\max}}{\sigma_{\min}^4} \kappa_{\max}^2 \cdot \max\left(\dfrac{\log(4TK/\delta)}{T_i},\sqrt{\dfrac{\log(4TK/\delta)}{T_i}}\right).
\end{align}
\end{proof}
The last quantity to bound to end the proof is $\normd{\hop^{-1}\dfrac{X_k}{\hsig_k}+\op^{-1}\dfrac{X_k}{\sigma_k}}$.

\begin{proposition}
\label{prop:boundadd}
We have
\[
\normd{\hop^{-1}\dfrac{X_k}{\hsig_k}+\op^{-1}\dfrac{X_k}{\sigma_k}} \leq 3 \normd{B\inv}.
\]
\end{proposition}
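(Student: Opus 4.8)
The plan is to control the sum by the triangle inequality and then recognize each summand as an inverse matrix applied to the unit vector $X_i$ (writing $i$ for the fixed index of the statement, so that $A,B$ are as defined), at which point the already-proven factor-of-two comparison of Proposition~\ref{prop:normA} finishes the job. First I would use the definitions $A \doteq \hop\hsig_i$ and $B \doteq \op\sigma_i$ to rewrite the two terms: since scalars commute with matrix inversion, $\op^{-1}\frac{X_i}{\sigma_i}=B\inv X_i$ and $\hop^{-1}\frac{X_i}{\hsig_i}=A\inv X_i$. The triangle inequality then yields
\[
\normd{\hop^{-1}\dfrac{X_i}{\hsig_i}+\op^{-1}\dfrac{X_i}{\sigma_i}}\leq\normd{A\inv X_i}+\normd{B\inv X_i}\eqsp .
\]

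Next I would pass to operator norms. By submultiplicativity of the spectral norm together with the normalization $\normd{X_i}=1$ assumed throughout, each term is dominated by the corresponding operator norm, i.e.\ $\normd{A\inv X_i}\leq\normd{A\inv}$ and $\normd{B\inv X_i}\leq\normd{B\inv}$. Finally I would invoke Proposition~\ref{prop:normA}, which gives $\normd{A\inv}\leq 2\normd{B\inv}$, to conclude
\[
\normd{\hop^{-1}\dfrac{X_i}{\hsig_i}+\op^{-1}\dfrac{X_i}{\sigma_i}}\leq\normd{A\inv}+\normd{B\inv}\leq 3\normd{B\inv}\eqsp ,
\]
which is exactly the claimed bound.

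I do not expect any genuine obstacle here: the statement is a one-line consequence of the triangle inequality and the operator-norm comparison $\normd{A\inv}\leq 2\normd{B\inv}$ already established in Proposition~\ref{prop:normA}. The only step deserving a moment's attention is the bookkeeping identifying $\op^{-1}X_i/\sigma_i$ with $B\inv X_i$ and $\hop^{-1}X_i/\hsig_i$ with $A\inv X_i$, so that Proposition~\ref{prop:normA} applies verbatim; this in turn inherits the same ``from a certain rank'' regime (where the Neumann-series bound controlling $\normd{(I+B\inv H)\inv}$ is valid) that was used to prove that proposition.
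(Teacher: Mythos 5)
Your proof is correct, and it takes a slightly different --- and in fact tidier --- route than the paper's. The paper bounds $\normd{(A\inv+B\inv)X_k}\leq\normd{A\inv+B\inv}\normd{X_k}$ and then regroups $A\inv+B\inv=(A\inv-B\inv)+2B\inv$, so that after the triangle inequality it still needs $\normd{A\inv-B\inv}\leq\normd{B\inv}$, which it asserts only with the words ``for $T$ sufficiently large''; that estimate is not the statement of Proposition~\ref{prop:normA} and strictly speaking requires its own justification (e.g.\ via $A\inv-B\inv=A\inv(B-A)B\inv$ together with the concentration of $\normd{B-A}$). You instead split termwise, $\normd{A\inv X_i+B\inv X_i}\leq\normd{A\inv}+\normd{B\inv}$, use $\normd{X_i}=1$, and invoke Proposition~\ref{prop:normA} verbatim to get $\normd{A\inv}\leq 2\normd{B\inv}$. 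Both arguments produce the constant $3$ and both are ultimately asymptotic, but your version confines the ``from a certain rank'' caveat entirely to the already-proven Proposition~\ref{prop:normA} instead of introducing a second, unproved asymptotic estimate; your closing remark correctly identifies that this caveat is inherited rather than eliminated.
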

\begin{proof}
We have
\begin{align}
\normd{\hop^{-1}\dfrac{X_k}{\hsig_k}+\op^{-1}\dfrac{X_k}{\sigma_k}} &= \normd{(A\inv+B\inv)X_k} \\
&\leq \normd{A\inv+B\inv}\normd{X_k} \\
&\leq \normd{(A\inv-B\inv) + 2 B\inv} \\
&\leq \normd{A\inv - B\inv} +2 \normd{B\inv}.
\end{align}
For $T$ sufficiently large we have $\normd{\hop^{-1}\dfrac{X_k}{\hsig_k}+\op^{-1}\dfrac{X_k}{\sigma_k}} \leq 3 \normd{B\inv}$.
\end{proof}
Combining Propositions~\ref{prop:normB},~\ref{prop:normA},~\ref{prop:ba} and~\ref{prop:boundadd} we obtain that $G_i-\hat{G}_i \leq 6 \normd{B\inv}^3\normd{B-A}$ and
\[
G_i-\hat{G}_i \leq 678 K \dfrac{\sigma_{\max}}{\sigma_{\min}^4} \left(\dfrac{1}{\sigma_i \lambda_{\min}(\Gamma)} \max_{k \in [K]} \dfrac{\sigma_k^2}{p_k}\right)^3 \cdot\kappa_{\max}^2 \cdot \max\left(\dfrac{\log(4TK/\delta)}{T_{i}},\sqrt{\dfrac{\log(4TK/\delta)}{T_{i}}}\right) \eqsp ,
\]
which proves Proposition~\ref{prop:grad_conc}.

\end{proof}
\section{Proofs of preliminary and easy results}
\label{app:easy}

In all the following we will denote by $\preccurlyeq$ the Loewner ordering: if $A$ and $B$ are two symmetric matrices, $A\preccurlyeq B$ iff $B-A$ is positive semi-definite.

\subsection{Proof of Proposition~\ref{prop:cvx}}

\begin{proof}
Let $p, q \in \mathring{\Dd}$, so that $\Omega(p)$ and $\Omega(q)$ are invertible,  and $\lambda \in [0,1]$. We have $L(p)=\Tr(\op\inv)$ and $L(\lambda p + (1-\lambda)q)=\Tr(\Omega(\lambda p + (1-\lambda)q)\inv)$, where
\begin{align}
\Omega(\lambda p + (1-\lambda q))&=\skd \dfrac{\lambda p_k + (1-\lambda) q_k}{\sigma_k^2}\XX \\
&=\lambda \Omega(p) + (1-\lambda) \Omega(q).
\end{align}
It is well-known \citep{whittle} that the inversion is strictly convex on the set of positive definite matrices.
Consequently,
\[
\Omega(\lambda p + (1-\lambda q)) \inv = \left(\lambda \Omega(p) + (1-\lambda) \Omega(q)\right)\inv \prec \lambda \op\inv + (1-\lambda)\Omega(q)\inv.
\]
Taking the trace this gives \[L(\lambda p + (1-\lambda)q) < \lambda L(p) + (1-\lambda)L(q).\]
Hence $L$ is convex.
\end{proof}

\subsection{Proof of Lemma~\ref{lemma:diag}}

\begin{lemma}
\label{lemma:diag}
Let $S$ be a symmetric positive definite matrix and $D$ a diagonal matrix with strictly positive entries $d_1, \dots, d_n$. Then \[\lambda_{\min}(DSD) \geq \min_i(d_i)^2 \lambda_{\min}(S).\]
\end{lemma}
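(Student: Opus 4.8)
The plan is to argue entirely at the level of the Loewner ordering $\preccurlyeq$ introduced just above, which sidesteps any manipulation of individual eigenvectors. The starting point is the spectral lower bound $S \succcurlyeq \lambda_{\min}(S)\, I_n$, valid because $S$ is symmetric. First I would conjugate this inequality by $D$: since $D$ is symmetric ($D=D\t$), congruence by $D$ preserves $\preccurlyeq$, so $S \succcurlyeq \lambda_{\min}(S)\, I_n$ yields $DSD \succcurlyeq \lambda_{\min}(S)\, D^2$.

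Next, because $D$ is diagonal with entries $d_1,\dots,d_n$, we have $D^2=\mathrm{diag}(d_1^2,\dots,d_n^2) \succcurlyeq \min_i d_i^2 \, I_n$. Multiplying by the scalar $\lambda_{\min}(S)$, which is strictly positive since $S$ is positive definite and therefore does not flip the inequality, gives $\lambda_{\min}(S)\, D^2 \succcurlyeq \lambda_{\min}(S)\,\min_i d_i^2 \, I_n$. Chaining the two inequalities produces $DSD \succcurlyeq \lambda_{\min}(S)\,\min_i d_i^2 \, I_n$, and reading off the smallest eigenvalue of both sides yields $\lambda_{\min}(DSD) \geq \min_i d_i^2 \, \lambda_{\min}(S)$, as claimed. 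Note that $DSD$ is itself symmetric positive definite, so its smallest eigenvalue is well defined and positive.

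The only point I would state carefully is that conjugation $M \mapsto DMD$ preserves $\preccurlyeq$; this is immediate, since for $M \succcurlyeq 0$ and any $x$ one has $x\t DMD x = (Dx)\t M (Dx) \geq 0$. A fully self-contained alternative is to work with the Rayleigh quotient directly: setting $y=Dx$, bound $\frac{x\t DSD x}{x\t x} = \frac{y\t S y}{\|y\|^2}\cdot\frac{\|y\|^2}{\|x\|^2} \geq \lambda_{\min}(S)\,\min_i d_i^2$, using $\frac{y\t Sy}{\|y\|^2}\geq\lambda_{\min}(S)$ and $\frac{\|y\|^2}{\|x\|^2}=\frac{\sum_i d_i^2 x_i^2}{\sum_i x_i^2}\geq\min_i d_i^2$, then minimize over $x\neq 0$. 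Either route is a single line of genuine content, and there is no real obstacle beyond tracking the sign of $\lambda_{\min}(S)$ so that the scalar multiplication preserves the ordering.
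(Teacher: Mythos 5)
Your proof is correct and takes essentially the same route as the paper's: both start from $\lambda_{\min}(S)\, I \preccurlyeq S$, conjugate by $D$ to obtain $\lambda_{\min}(S)\, D^2 \preccurlyeq DSD$, and then pass to the smallest eigenvalue via $D^2 \succcurlyeq \min_i d_i^2\, I$. Your extra remarks (why congruence preserves $\preccurlyeq$, and the Rayleigh-quotient alternative) are sound but add nothing beyond the paper's one-line argument.
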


\begin{proof}
We have $\lambda_{\min}(S)Id \preccurlyeq S$ and consequently, multiplying by $D$ (positive definite) to the right and left we obtain $\lambda_{\min}(S)D^2\preccurlyeq DSD$, hence
\[
\min_i(d_i)^2 \lambda_{\min}(S)\leq \lambda_{\min}(DSD) .
\]
\end{proof}

\section{Proofs of the slow rates}
\label{app:naive}

\subsection{Proof of Proposition~\ref{prop:randomized}}

\begin{proof}
We now conduct the analysis of Algorithm~\ref{algo:randomized}. Our strategy will be to convert the error $L(p_T)-L(p\st)$ into a sum over $t\in [T]$ of small errors.
Notice first that the quantity \[{\norm{{\Omega}(p)^{-1}{X_k}{}}_2^2}\]
can be upper bounded by $\dfrac{1}{\sigma_i \lambda_{\min}(G)} \max_{k \in [K]} \dfrac{\sigma_k^2}{0.5 p^o}$, for  $p=p_T$. For $p=\hat p_t$, we can also bound this quantity by $\dfrac{4}{\sigma_i \lambda_{\min}(G)} \max_{k \in [K]} \dfrac{\sigma_k^2}{0.5 p^o}$, using Lemma~\ref{lemma:presampling_active} to express $\hat p_t$ with respect to lower estimates of the variances --- and thus with respect to real variance thanks to Corollary~\ref{cor:var_conc}. Then, from the convexity of $L$, we have

\begin{align}
 L(p_T)-L(p\st)&=L(p_T)-L\pa{1/T\sum_{t=1}^T \hat p_t}+ L\pa{\dfrac{1}{T}\sum_{t=1}^T \hat p_t}-L(p\st)\\
 &\leq \sum_k-\norm{{\Omega}(p_T)^{-1}\dfrac{X_k}{{\sigma}_k}}_2^2\pa{ p_{k,T}-\dfrac{1}{T}\sum_{t=1}^T \hat p_{k,t}} + \dfrac{1}{T}\sum_{t=1}^T \pa{L(\hat p_t)-L(p\st)} \\
\end{align}

Using Hoeffding inequality, $\pa{ p_{k,T}-\frac{1}{T}\sum_{t=1}^T \hat p_{k,t}}=\frac{1}{T}\sum_{t=1}^T\pa{ \II{k\text{ is sampled at }t}-\hat p_{k,t}}$ is bounded by $\sqrt{\frac{\log(2/\delta)}{T}}$ with probability $1-\delta$. It thus remains to bound the second term $\frac{1}{T}\sum_{t=1}^T \pa{L(\hat p_t)-L(p\st)}$. First, notice that $L(p)$ is an increasing function of $\sigma_i$ for any $i$. If we define $\hat L$ be replacing each $\sigma_i^2$ by lower confidence estimates of the variances $\tilde\sigma_i^{2}$ (see Theorem~\ref{thm:var_conc}), then
\begin{align}
 L(\hat p_t)-L(p\st)\leq L(\hat p_t)-\hat L(p\st) =  L(\hat p_t) - \hat L(\hat p_t) + \hat L(\hat p_t)-\hat L(p^*)\leq  L(\hat p_t) - \hat L(\hat p_t).
\end{align}

Since the gradient of $L$ with respect to $\sigma^2$ is $\pa{\frac{2p_i}{\sigma_i^3}\norm{{\Omega}(p)^{-1}{X_i}{}}_2^2}_i$, we can bound $L(\hat p_t) - \hat L(\hat p_t)$ by
\[ 1/\sigma_{\min}^3 \sup_{k}{\norm{{\Omega}(\hat p_t)^{-1}{X_k}{}}_2^2} \sum_i 2\hat p_{i,t} \abs{\sigma^2_i - \tilde\sigma_i^{2}} .\]

Since $\hat p_{i,t}$ is the probability of having a feedback from covariate $i$, we can use the probabilistically triggered arm setting of \citet{wang2017improving} to prove that $\dfrac{1}{T}\sum_{t=1}^T\sum_i 2\hat p_i \abs{\sigma^2_i - \tilde\sigma_i^{2}}=
\mathcal{O}\pa{\sqrt{\frac{\log(T)}{T}}}$. Taking $\delta$ of order $T^{-1 }$ gives the desired result.
\end{proof}

\section{Analysis of the bandit algorithm}
\label{app:bandit}

\subsection{Proof of Lemma~\ref{lemma:L}}
\label{app:easyL}

We begin by a lemma giving the coefficients of $\op\inv$.
\begin{lemma}
\label{lemma:coefs}
The diagonal coefficients of $\op\inv$ can be computed as follows:
\[\forall i \in [d], \
\op\inv_{ii}=\sum_{j=1}^d \dfrac{\sig_j \Cof(\bX_0\t)_{ij}^2}{\det(\bX_0^T\bX_0)}\dfrac{1}{p_j} \eqsp .\]
\end{lemma}
\begin{proof}
We suppose that $\forall i \in [d], \ p_i \neq 0$ so that $\op$ is invertible.

We know that $\Omega(p)^{-1}=\dfrac{\mbox{Com}(\Omega(p))\t}{\det(\Omega(p))}$. We compute now $\det(\op)$.
\begin{align}
\det(\op)&=\det\left(\sum_{k=1}^d \dfrac{p_k X_k X_k\t}{\sig_k}\right)
=\det((\sqrt{T\inv}\bX)\t \sqrt{T\inv}\bX)
=T^{-d}\det(\bX\t)^2\\
&=T^{-d}\begin{vmatrix}
 & \vdots &  \\
\tilde{X}_1 & \vdots & \tilde{X}_d\\
 & \vdots & \
\end{vmatrix}^2
=\begin{vmatrix}
 & \vdots &  \\
\dfrac{\sqrt{p_1}}{\sigma_1} X_1 & \vdots & \dfrac{\sqrt{p_d}}{\sigma_d} X_d\\
 & \vdots & \
\end{vmatrix}^2\\
&=\det(\bX_0)^2 \dfrac{p_1}{\sig_1} \cdots \dfrac{p_d}{\sig_d} \eqsp .
\end{align}
We now compute $\Com(\op)_{ii}$.
\[
\Com(\op)=\Com(T^{-1/2}\bX\t T^{-1/2}\bX)=\Com(T^{-1/2}\bX\t)\Com(T^{-1/2}\bX\t)\t \eqsp .
\]
Let us note $M \doteq T^{-1/2}\bX =\begin{pmatrix}
\cdots & \dfrac{\sqrt{p_1}}{\sigma_1}X_1\t & \cdots \\
& \vdots & \\
\cdots & \dfrac{\sqrt{p_K}}{\sigma_K}X_K\t & \cdots
\end{pmatrix}$.
Therefore
\begin{align}
\Com(\op)_{ii}&=\sum_{j=1}^d \Com(M\t)_{ij}^2=\sum_{j=1}^d \prod_{k \neq j}\dfrac{p_k}{\sig_k} \Cof(\bX_0\t)_{ij}^2 \eqsp .
\end{align}
Finally,
\[
\op\inv_{ii}=\sum_{j=1}^d \dfrac{\sig_j \Cof(\bX_0\t)_{ij}^2}{\det(\bX_0\t\bX_0)}\dfrac{1}{p_j} \eqsp .
\]
\end{proof}
This allows us to derive the exact expression of the loss function $L$ and we restate \Cref{lemma:L}.
\begin{lemma}
We have, for all $p \in \Delta^d$,
\[
L(p)=\dfrac{1}{\det(\bX_0\t\bX_0)}\skd \dfrac{\sig_k}{p_k} \Cof(\bX_0\bX_0\t)_{kk} \eqsp .
\]
\end{lemma}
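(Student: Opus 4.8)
The plan is to obtain $L(p)$ directly as a trace and feed in the diagonal formula just established. Since $L(p)=\Tr(\op\inv)=\sum_{i=1}^d\op\inv_{ii}$, I would substitute the expression of Lemma~\ref{lemma:coefs}, producing the double sum
\[
L(p)=\sum_{i=1}^d\sum_{j=1}^d\frac{\sig_j\,\Cof(\bX_0\t)_{ij}^2}{\det(\bX_0\t\bX_0)}\frac{1}{p_j}.
\]
Because the factor $\sig_j/(p_j\det(\bX_0\t\bX_0))$ does not depend on $i$, I would exchange the two finite sums and pull it out, leaving the inner quantity $\sum_{i=1}^d\Cof(\bX_0\t)_{ij}^2$ to be identified.

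The crux is to recognize this sum of squared cofactors as a single diagonal cofactor of the Gram matrix, namely $\sum_{i=1}^d\Cof(\bX_0\t)_{ij}^2=\Cof(\bX_0\bX_0\t)_{jj}$. I would prove this via the multiplicativity of the comatrix, $\Com(AB)=\Com(A)\Com(B)$, which follows from $\Com(M)=\det(M)\,(M\inv)\t$ on invertible matrices (extending to the general case by continuity). Applying it with $A=\bX_0$ and $B=\bX_0\t$ and using $\Com(\bX_0\t)=\Com(\bX_0)\t$ gives $\Com(\bX_0\bX_0\t)=\Com(\bX_0)\Com(\bX_0)\t$, whose $(j,j)$ entry is exactly $\sum_i\Com(\bX_0)_{ji}^2$. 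Since $\Cof(\bX_0\t)_{ij}=\Com(\bX_0\t)_{ij}=\Com(\bX_0)_{ji}$, this is precisely the claimed identity.

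Combining the two steps, the double sum collapses to $\frac{1}{\det(\bX_0\t\bX_0)}\skd\frac{\sig_k}{p_k}\Cof(\bX_0\bX_0\t)_{kk}$ after relabeling the index $j$ as $k$, which is the statement. The only genuinely delicate point is the cofactor identity of the second paragraph; the reordering of a finite double sum and the relabeling are routine. If one prefers to bypass the invertibility caveat in the comatrix argument, the same identity can be read off from the Cauchy--Binet formula: the principal cofactor $\Cof(\bX_0\bX_0\t)_{jj}$ is the determinant of the Gram matrix of the rows of $\bX_0$ other than the $j$-th, and Cauchy--Binet expands it as a sum of squared maximal minors of that $(d-1)\times d$ block, each of which is a cofactor $\Cof(\bX_0)_{ji}$, recovering $\sum_i\Cof(\bX_0\t)_{ij}^2$.
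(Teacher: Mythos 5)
Your proof is correct and follows essentially the same route as the paper's: substitute the diagonal formula of Lemma~\ref{lemma:coefs} into $\Tr(\op\inv)$, exchange the finite sums, and collapse $\sum_{i}\Cof(\bX_0\t)_{ik}^2$ into $\Cof(\bX_0\bX_0\t)_{kk}$ via multiplicativity of the comatrix, which is exactly the identity the paper invokes (and the same tool it uses in proving Lemma~\ref{lemma:coefs}). Your explicit justification of that cofactor identity, and the Cauchy--Binet alternative, merely fill in a step the paper states without detail.
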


\begin{proof}
Using Lemma~\ref{lemma:coefs} we obtain
\begin{align}
L(p)&=\Tr(\op\inv)=\sum_{k=1}^d \op\inv_{kk}\\
 &=\dfrac{1}{\det(\bX\t\bX)}\skd \dfrac{\sig_k}{p_k} \sum_{i=1}^d \Cof(\bX_0\t)_{ik}^2 =\dfrac{1}{\det(\bX_0\t\bX_0)}\skd \dfrac{\sig_k}{p_k} \Com(\bX_0\bX_0\t)_{kk} \eqsp .
\end{align}
\end{proof}

\subsection{Proof of \Cref{lemma:smooth_bandit}}
\label{app:proof_smooth}

\begin{proof}
We use the fact that for all $i \in [d]$, $p_i \geq p^o_i/2$.
We have that for all $i\in[d]$, \[\nabla^2_{ii} L(p)=\dfrac{\Cof(\Gamma)_{ii}\sig_i}{\det(\Gamma)}\dfrac{2}{p_i^3}\leq \dfrac{2\ \Cof(\Gamma)_{ii}\sig_i}{\det(\Gamma) (p^o_i/2)^3}.\]
We have $p^o_k=\dfrac{\osig_k \sqrt{\Cof(\Gamma)_{kk}}}{\sum_{i=1}^d \osig_i \sqrt{\Cof(\Gamma)_{ii}}}$ which gives
\[
\nabla^2_{ii} L(p) \leq 16 \dfrac{\sigma_{\max}^2 \left(\sum_{k=1}^d \osig_k \sqrt{\Cof(\Gamma)_{kk}}\right)^3}{\det(\Gamma)\osig_{\min}^3\sqrt{\min_k \Cof(\Gamma)_{kk}}}\doteq C_S.
\]
And consequently $L$ is $C_S$-Lipschitz smooth.

We can obtain an upper bound on $C_S$ using Corollary~\ref{cor:var_conc}, which tells that $\sigma_k/2\leq \osig_k \leq 3\sigma_k/2$:
\[
C_S \leq 432 \dfrac{\sigma_{\max}^2 \left(\sum_{k=1}^d \sigma_k \sqrt{\Cof(\Gamma)_{kk}}\right)^3}{\det(\Gamma)\sigma_{\min}^3\sqrt{\min_k \Cof(\Gamma)_{kk}}}.
\]
\end{proof}

\subsection{Proof of \Cref{thm:rates}}
\label{app:proof_bandit}

\begin{proof}
Proposition~\ref{prop:grad_conc} gives that
\[
\abs{G_i-\hat{G}_i} \leq 678 K \dfrac{\sigma_{\max}}{\sigma_{\min}^4} \left(\dfrac{1}{\sigma_i \lambda_{\min}(\Gram)} \max_{k \in [K]} \dfrac{\sigma_k^2}{p_k}\right)^3 \cdot\kappa_{\max}^2 \cdot \max\left(\dfrac{\log(4TK/\delta)}{T_{i}},\sqrt{\dfrac{\log(4TK/\delta)}{T_{i}}}\right).
\]
Since each arm has been sampled at least a linear number of times we guarantee that $\log(4TK/\delta)/T_i \leq 1$ such that
\[
\abs{G_i-\hat{G}_i} \leq 678 K \pa{\dfrac{\sigma_{\max}}{\sigma_{\min}}}^7\dfrac{1}{\lambda_{\min}(\Gamma)^3} \dfrac{\kappa_{\max}^2}{p_{\min}^3} \sqrt{\dfrac{\log(4TK/\delta)}{T_{i}}}.
\]
Thanks to the presampling phase of Lemma~\ref{lemma:presampling_active}, we know that $p_{\min} \geq p^o/2$.
For the sake of clarity we note $C \doteq 678 K \pa{\dfrac{\sigma_{\max}}{\sigma_{\min}}}^7\dfrac{8}{{p^o}^3\lambda_{\min}(\Gamma)^3} \kappa_{\max}^2$ such that $\abs{G_i-\hat{G}_i} \leq C \sqrt{\dfrac{\log(4TK/\delta)}{T_{i}}}$.

We have seen that $L$ is $\mu$-strongly convex, $C_L$-smooth and that $\dist(p\st,\partial\Dd) \geq \eta$. Consequently, since Lemma~\ref{lemma:presampling_active} shows that the pre-sampling stage does not affect the convergence result, we can apply~\citep[Theorem 7]{ucbfw} (with the choice $\delta_T=1/T^2$, which gives that
\[
\bE[L(p_T)] - L(p\st) \leq c_1 \dfrac{\log^2(T)}{T}+c_2 \dfrac{\log(T)}{T}+c_3\dfrac{1}{T} \eqsp ,
\]
with $c_1=\dfrac{96C^2K}{\mu \eta^2}$, $c_2=\dfrac{24C^2}{\mu\eta^3}+S$ and $c_3=\dfrac{3072^2K}{\mu^2\eta^4}\normi{L}+\dfrac{\mu\eta^2}{2}+C_S$.
With the presampling stage and Lemma~\ref{lemma:L}, we can bound $\normi{L}$ by
\[
\normi{L} \leq \dfrac{\sum_j \sig_j \Cof(\Gamma)_{jj}}{\sigma_{\min}\sqrt{\Cof(\Gamma)}_{\min}}\pa{\sum_j \sigma_j \sqrt{\Cof(\Gamma)_{jj}}} \eqsp .
\]
We conclude the proof using the fact that $R(T)=\dfrac{1}{T}\left(L(p_T)-L(p\st)\right)$.
\end{proof}

\section{Analysis of the case $K>d$}

\subsection{Proof of \Cref{thm:Kd}}
\label{app:Kgreaterd}

\begin{proof}
In order to ensure that $L$ is smooth we pre-sample each covariate $n$ times. We note $\alpha=n/T \in (0,1)$. This forces $p_i$ to be greater than $\alpha$ for all $i$. Therefore $L$ is $C_S$-smooth with $C_S \leq \dfrac{2\max_k\Cof(\Gamma)_{kk}\sigma_{\max}^2}{\alpha^3 \det(\Gamma)}\doteq \dfrac{C}{\alpha^3}$.

We use a similar analysis to the one of~\citep{ucbfw}. Let us note $\rho_t \doteq L(p_t)-L(p\st)$ and $\varepsilon_{t+1}\doteq (e_{\pi(t+1)}-e_{\star_{t+1}})\t\nabla L(p_t)$ with $e_{\star_{t+1}}=\argmax_{p \in \Delta^K} p\t \nabla L(p_t)$.
\citep[Lemma 12]{ucbfw} gives for $t \geq nK$,
\[
(t+1)\rho_{t+1} \leq t\rho_t+\varepsilon_{t+1}+\dfrac{C_S}{t+1}.
\]
Summing for $t\geq nK$ gives
\begin{align}
T\rho_T &\leq nK\rho_{nK} + C_S \log(eT)+\sum_{t=nK}^T \varepsilon_t \\
L(p_T)-L(p\st) &\leq K \alpha (L(p_{nK})-L(p\st)) + \dfrac{C}{\alpha^3}\dfrac{\log(eT)}{T} + \dfrac{1}{T}\sum_{t=nK}^T \varepsilon_t \eqsp .
\end{align}

We bound $\sum_{t=nK}^T \varepsilon_t/T$ as in Theorem 3 of~\cite{ucbfw} by $4\sqrt{\dfrac{3K\log(T)}{T}}+\pa{\dfrac{\pi^2}{6}+K}\dfrac{2\normi{\nabla L}+\normi{L}}{T}=\bigo\pa{\sqrt{\dfrac{\log(T)}{T}}}$.

We are now interested in bounding $\alpha (L(p_{nK})-L(p\st))$.

By convexity of $L$ we have
\[
L(p_{nK})-L(p\st) \leq \la \nabla L(p_{nK}), p_{nK}-p\st \ra \leq \normd{\nabla L(p_{nK})} \normd{p_{nK}-p\st} \leq 2 \normd{\nabla L(p_{nK})}.
\]
We have also
\[
\dpart{L}{p_k}(p_{nK})=-\normd{\Omega(p_{nK})\inv\dfrac{X_k}{\sigma_k}}^2 \eqsp .
\]
Proposition~\ref{prop:normB} shows that
\[
\normd{\op\inv}\leq \dfrac{1}{\lambda_{\min}(\Gamma)}\dfrac{\sigma_{\max}^2}{\min_k p_k} \eqsp .
\]
In our case, $\min_k p_{nK}=1/K$. Therefore
\[
\normd{\Omega(p_{nK})\inv}\leq \dfrac{K\sigma_{\max}^2}{\lambda_{\min}(\Gamma)} \eqsp .
\]
And finally we have
\[
\normd{\nabla L(p_{nK})}\leq \dfrac{K}{\sqrt{\lambda_{\min}(\Gamma)}}\dfrac{\sigma_{\max}}{\sigma_{\min}} \eqsp .
\]
We note $C_1\doteq \dfrac{2K^2}{\sqrt{\lambda_{\min}(\Gamma)}}\dfrac{\sigma_{\max}}{\sigma_{\min}}$.
This gives
\[
L(p_T)-L(p\st) \leq \alpha C_1 + \dfrac{C}{\alpha^3}\dfrac{\log(T)}{T}+\bigo\pa{\sqrt{\dfrac{\log(T)}{T}}}\eqsp .
\]
The choice of $\alpha = T^{-1/4}$ finally gives
\[
L(p_T)-L(p\st)=\bigo\pa{\dfrac{\log(T)}{T^{1/4}}} \eqsp .
\]
\end{proof}

\subsection{Proof of \Cref{thm:lower}}
\label{app:lower}

\begin{proof}
For simplicity we consider the case where $d=1$ and $K=2$. Let us suppose that there are two points $X_1$ and $X_2$ that can be sampled, with variances $\sigma_1^2=1$ and $\sigma_2^2=1+\Delta>1$, where $\Delta \leq 1$. We suppose also that $X_1=X_2=1$ such that both points are identical.

The loss function associated to this setting is
\[
L(p)=\pa{\dfrac{p_1}{\sigma_1^2}+\dfrac{p_2}{\sigma_2^2}}\inv=\dfrac{1+\Delta}{p_2+p_1(1+\Delta)}=\dfrac{1+\Delta}{1+\Delta p_1}.
\]
The optimal $p$ has all the weight on the first covariate (of lower variance): $p\st=(1,0)$ and $L(p\st)=1$.

Therefore
\[L(p)-L(p\st)=\dfrac{1+\Delta}{1+\Delta p_1}-1=\dfrac{p_2\Delta}{1+\Delta p_1} \geq \dfrac{\Delta}{2}p_2 \eqsp .
\]

We see that we are now facing a classical 2-arm bandit problem: we have to choose between arm $1$ giving expected reward $0$ and arm $2$ giving expected reward $\Delta/2$. Lower bounds on multi-armed bandits problems show that
\[
\bE L(p_T)-L(p\st) \gtrsim \dfrac{1}{\sqrt{T}} \eqsp .
\]
Thus we obtain \[R(T) \gtrsim \dfrac{1}{T^{3/2}} \eqsp .\]
\end{proof}


\section{Geometric Interpretation}

\subsection{Proof of Proposition~\ref{prop:ellipsoids}}
\label{app:prop_ell}

\begin{proof}
We want to minimize $L$ on the simplex $\Dk$. Let us introduce the Lagrangian function
\[
\sL:(p_1,\dots,p_K,\lambda,\mu_1,\dots,\mu_K) \in \bR^K\times\bR\times\bR_+^K \mapsto L(p)+\lambda \parenthese{\skk p_k-1} - \la \mu, p \ra
\]
Applying Karush-Kuhn-Tucker theorem gives that $p\st$ verifies
\[
\forall k \in [d], \ \dpart{\sL}{p_k}(p\st)=0.
\]
Consequently
\[
\forall k \in [d], \ \normd{\ops\inv\dfrac{X_k}{\sigma_k}}^2=\lambda-\mu_k \leq \lambda.
\]
This shows that the points $X_k/ \sigma_k$ lie within the ellipsoid defined by the equation $x\t \ops^{-2}x \leq \lambda$.

\end{proof}

\subsection{Geometric illustrations}
\label{app:ellipsoids}

In this section we present figures detailing the geometric interpretation discussed in Section~\ref{Se:KD}.

\begin{figure}[h!]
 \begin{minipage}[b]{.43\linewidth}
  \centering
  \includegraphics[width=0.8\textwidth]{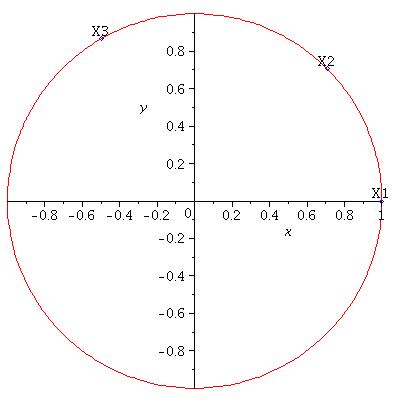}
  \subcaption{$p_1=0.21 \quad p_2=0.37 \quad p_3=0.42$ \label{fig:circle_all}}
    \vspace*{1em}
 \end{minipage} \hfill
 \begin{minipage}[b]{.43\linewidth}
  \centering
  \includegraphics[width=0.8\textwidth]{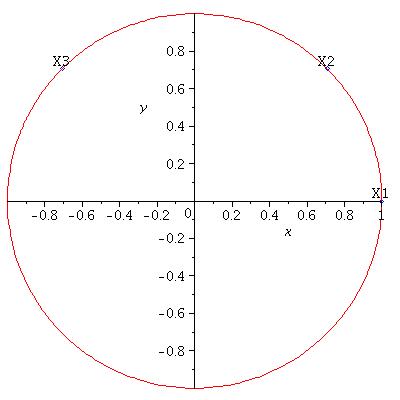}
  \subcaption{$p_1=0 \quad p_2=0.5 \quad p_3=0.5$ \label{fig:circle0}}
  \vspace*{1em}
 \end{minipage}
 \begin{minipage}[b]{.43\linewidth}
  \centering
  \includegraphics[width=0.8\textwidth]{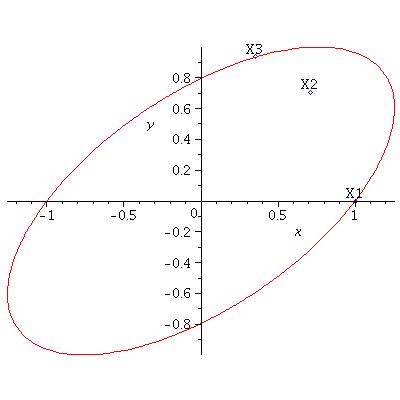}
  \subcaption{$p_1=0.5 \quad p_2=0 \quad p_3=0.5$ \label{fig:ellipse0}}
 \end{minipage} \hfill
 \begin{minipage}[b]{.43\linewidth}
  \centering
  \includegraphics[width=0.8\textwidth]{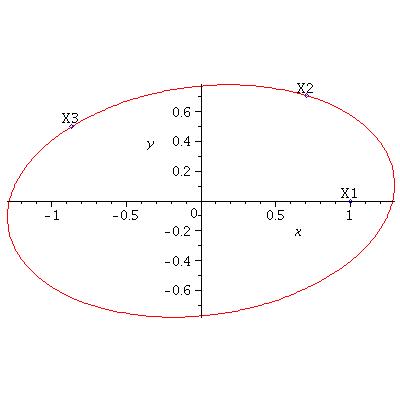}
  \subcaption{$p_1=0 \quad p_2=0.5 \quad p_3=0.5$ \label{fig:ellipse_perturbation}}
 \end{minipage}
 \vspace*{1em}
 \caption{Different minimal ellipsoids}
 \label{fig:ellipsoids}
\end{figure}

Geometrically the dual problem $(D)$ is equivalent to finding an ellipsoid containing all data points $X_k/\sigma_k$ such that the sum of the inverse of the semi-axis is maximized. The points that lie on the boundary of the ellipsoid are the one that have to be sampled. We see here that we have to sample the points that are far from the origin (after being rescaled by their standard deviation) because they cause less uncertainty.

We see that several cases can occur as shown on Figure~\ref{fig:ellipsoids}.
If one covariate is in the interior of the ellipsoid it is not sampled because of the KKT equations (see Proposition~\ref{prop:ellipsoids}). However if all the points are on the ellipsoids some of them may not be sampled. It is the case on Figure~\ref{fig:circle0} where $X_1$ is not sampled. This is due to the fact that a little perturbation of another point, for example $X_3$ can change the ellipsoid such that $X_1$ ends up inside the ellipsoid as shown on Figure~\ref{fig:ellipse_perturbation}. This case can consequently be seen as a limit case.

\end{document}